\newcommand{\B}{\mathbb{B}}
\newcommand{\bhw}{\boldsymbol{\hat w^*}}
\newcommand{\bu}{\boldsymbol{u}}
\newcommand{\bl}{\boldsymbol{l}}
\newcommand{\bli}{\boldsymbol{l}_i}
\newcommand{\bui}{\boldsymbol{u}^i}
\newcommand{\Ft}{\mathcal{F}_{t}}
\newcommand{\UR}{a_+}
\newcommand{\UL}{a_-}
\newcommand{\LR}{b_+}
\newcommand{\Ll}{b_-}
\newcommand{\Tmp}{T_{\min}}
\newcommand{\Tap}{T_{\max}}
\newcommand{\bdi}{\boldsymbol{\bar \delta}_i}
\newcommand{\bd}{\boldsymbol{\bar \delta}}
\newcommand{\delt}{\boldsymbol{ \delta}}
\newcommand{\E}{\mathbb{E}}
\newcommand{\Pm}{p_{\min }}
\newcommand{\wswa}{\bar w}
\newcommand{\Pa}{p_{\max }}
\newcommand{\LL}{\mathsf{L}}
\newcommand{\EE}{\mathfrak{E}}
\newcommand{\one}{\mathbb{1}}
\newcommand{\F}{\mathcal{F}}
\newcommand{\R}{\mathbb{R}}
\DeclareMathOperator*{\argmin}{arg\,min}
\newcommand*\circled[1]{\tikz[baseline=(char.base)]{\node[shape=circle,draw,inner sep=0.3 pt] (char) {\scriptsize #1};}}
\newtheorem{thm}{Theorem}
\newtheorem{lem}[thm]{Lemma}
\newtheorem{definition}{Definition}
\newtheorem{assump}{Assumption}
\icmltitlerunning{
Asymmetric Valleys: Beyond Sharp and Flat Local Minima
}
\patchcmd\@combinedblfloats{\box\@outputbox}{\unvbox\@outputbox}{}{\errmessage{\noexpand patch failed}}
\begin{document}
	
	\twocolumn[
	\icmltitle{
		Asymmetric Valleys: Beyond Sharp and Flat Local Minima
	}
	
	
	
	\icmlsetsymbol{equal}{*}
	
	\begin{icmlauthorlist}
		\icmlauthor{Haowei He}{buaa}
		\icmlauthor{Gao Huang}{tsinghua}
		\icmlauthor{Yang Yuan}{mit}
	\end{icmlauthorlist}
	
	\icmlaffiliation{buaa}{Beihang University,  China}
	\icmlaffiliation{tsinghua}{Tsinghua University, China}
	\icmlaffiliation{mit}{MIT, United States}
	
	\icmlcorrespondingauthor{Haowei He}{hehaowei@buaa.edu.cn}
	
	\icmlkeywords{SGD, Landscape, Non-convex optimization}
	
	\vskip 0.3in
	]
	
	
	
	\printAffiliationsAndNotice{}  
	
	\begin{abstract}

		Despite the non-convex nature of their loss functions, deep neural networks are known to generalize well when optimized with stochastic gradient descent (SGD). Recent work conjectures that SGD with proper configuration is able to find wide and flat local minima, which have been proposed to be associated with good generalization performance. In this paper, we observe that local minima of modern deep networks are more than being flat or sharp.
 Specifically, at a local minimum there exist many asymmetric directions such that the loss increases abruptly along one side, and  slowly along the opposite side  -- 
		we formally define such minima as \emph{asymmetric valleys}. Under mild assumptions, we prove that for asymmetric valleys, a solution biased towards the flat side generalizes better than the exact minimizer. Further, we show that simply averaging the weights along the SGD trajectory gives rise to such biased solutions implicitly. This provides a theoretical explanation for the intriguing phenomenon observed by \citet{swa}. In addition, we empirically find that batch normalization (BN) appears to be a major cause for asymmetric valleys.

	\end{abstract}
	
	\vspace{-3ex}
	\section{Introduction}
	\vspace{-1ex}
	
	The loss landscape of 
	neural networks 
	has attracted great research interests in the deep learning community \cite{open_prob,thelosslandscapeofoverparameterizednn,largebatchtraining, essentiallynobarriersin, landscapedesign,empiricalanalysisofhessianofoverparameter}. It provides the basis of designing better optimization algorithms, and helps to answer the question of when and how a deep network can achieve good generalization performance. One hypothesis that draws attention recently is that the local minima of neural networks can be characterized by their flatness, and it is conjectured that sharp minima tend to generalize worse than the flat ones \cite{largebatchtraining}. A plausible explanation is that a flat minimizer of the training loss can achieve lower generalization error if the test loss is shifted from the training loss due to random perturbations. \figurename~\ref{fig:Shirish} gives an illustration for this argument.
	
	\begin{figure*}
		\centering
		\subfigure[\label{fig:Shirish}]{
			\centering
			\begin{minipage}{0.41\textwidth}
				\centering
				\includegraphics[height=1.2in]{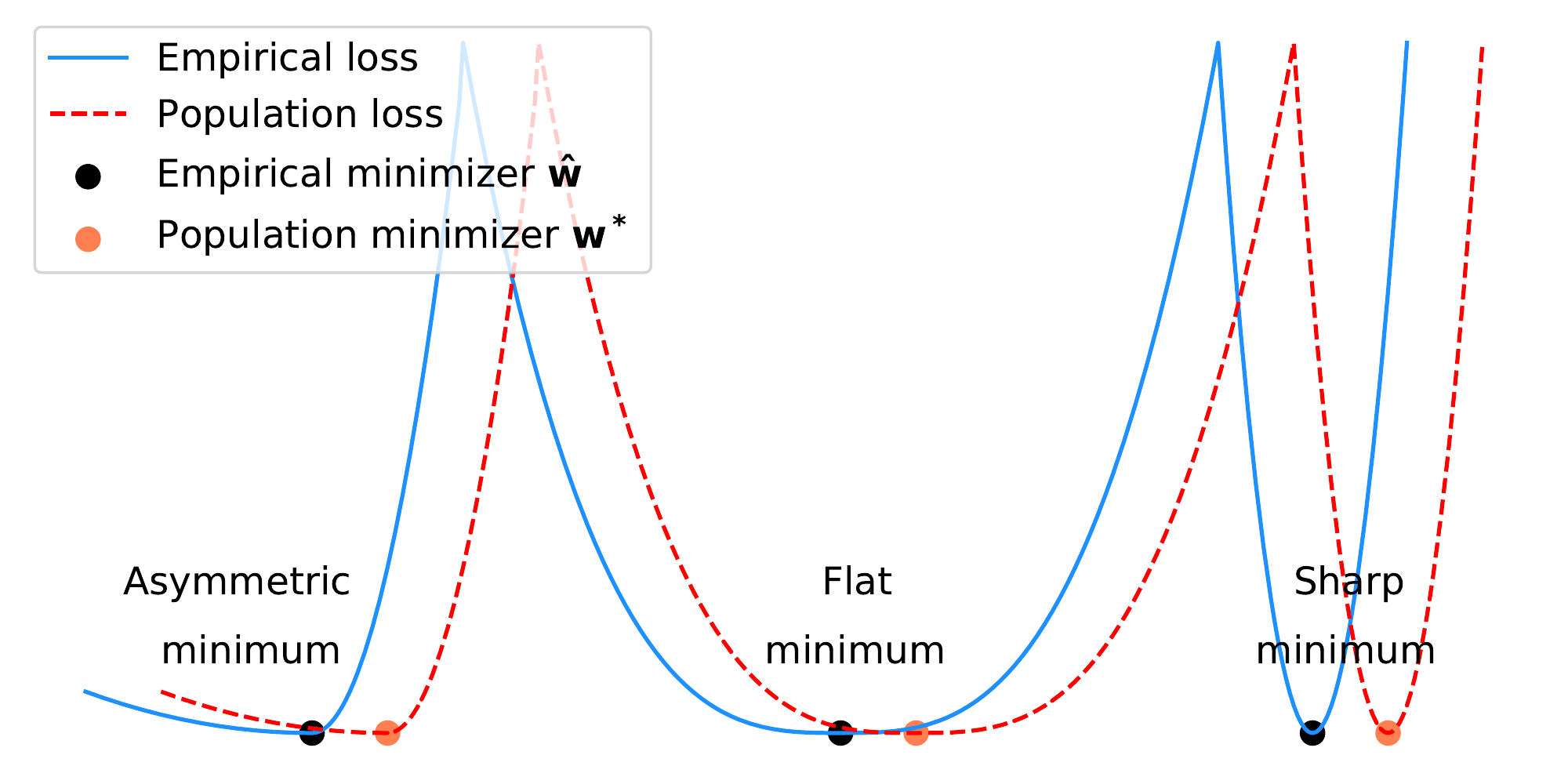}
			\end{minipage}
		}
		\subfigure[\label{fig:asy_shift}]{
			\centering
			\begin{minipage}{0.26\textwidth}
				\centering
				\includegraphics[height=1.2in]{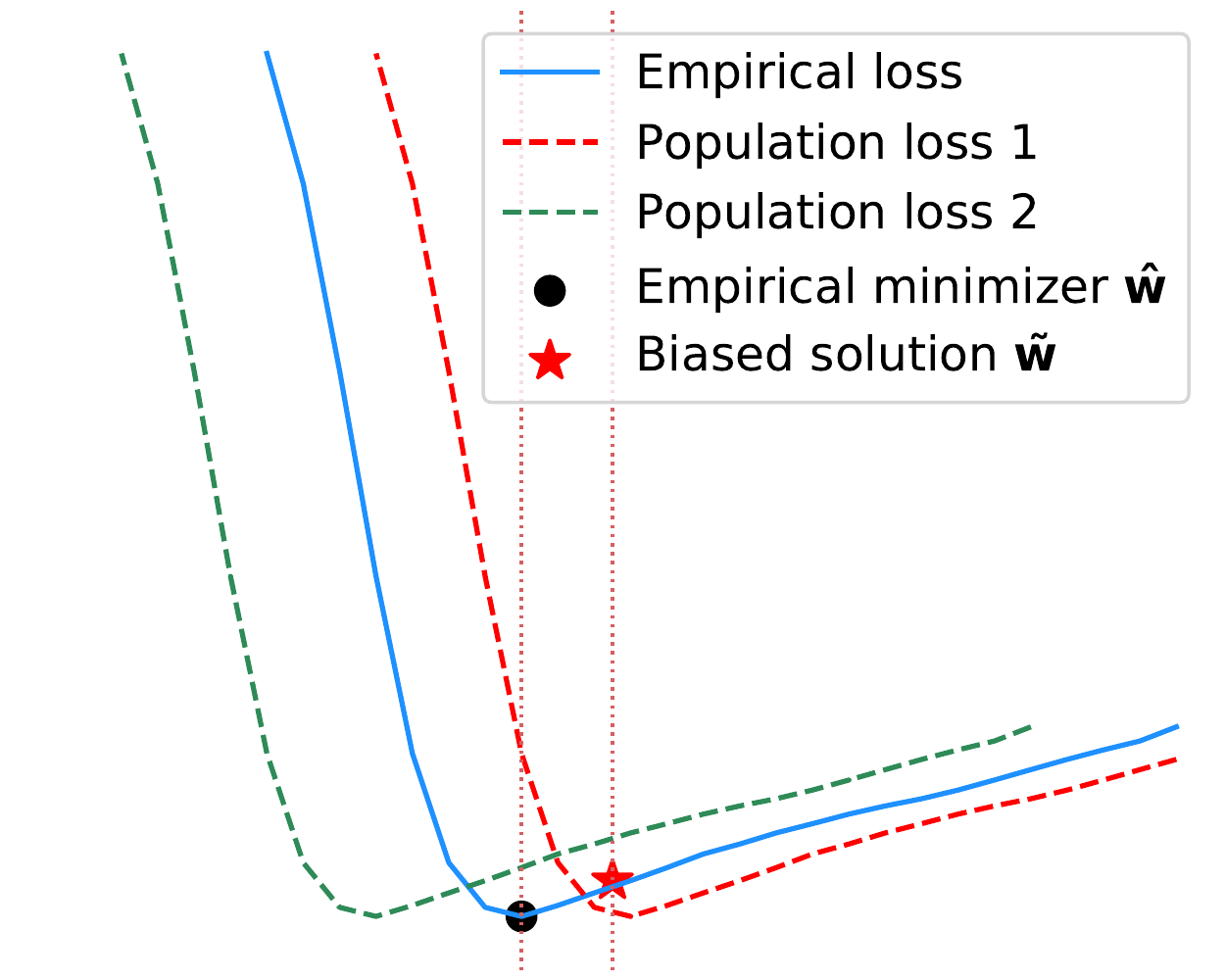}
			\end{minipage}
		}
		\subfigure[\label{fig:asym}]{
			\centering
			\begin{minipage}{0.26\textwidth}
				\centering
				\includegraphics[height=1.2in]{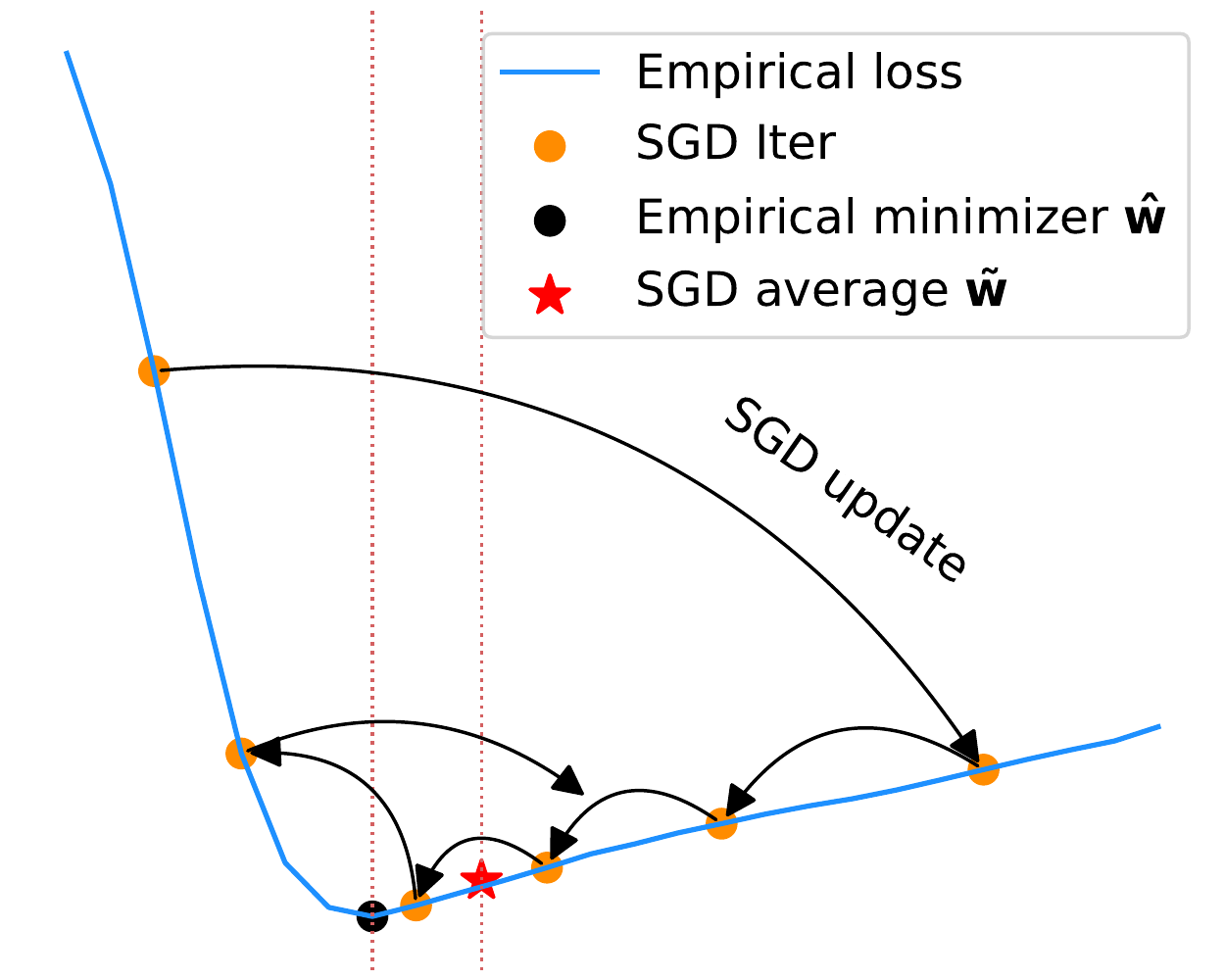}
			\end{minipage}
		}
		\caption{\textbf{(a)} Three kinds of local minima: asymmetric, flat and sharp. 
			If there exists a shift from empirical loss to population loss, 
			flat minimum is more robust than sharp minimum. 
			\textbf{(b)}
			For asymmetric valleys, if there exists a random shift, 
			the solution $\boldsymbol{ \tilde w}$ biased towards the flat side is more robust than the minimizer $\bhw$.
			\textbf{(c)} SGD tends to stay longer on the flat side of asymmetric valleys, therefore SGD averaging automatically produces the desired bias.
		}
	\end{figure*}
	
	%

	Although being supported by plenty of empirical observations \cite{largebatchtraining,swa,visualizingthelosslandscape}, the definition of flatness was recently challenged by \cite{sharpminimacangenearlizefordeepnets}, who showed that one can construct arbitrarily sharp minima through weight re-parameterization without changing the generalization performance. In addition, recent evidence suggests that the minima of modern deep networks are connected with simple paths with low generalization error \cite{essentiallynobarriersin,losssurfacesmodeconnectivity}. Similarly, the minima found by large batch training and small batch training are shown to be connected without any ``bumps'' \cite{empiricalanalysisofhessianofoverparameter}. This raises several questions: (1) If all the minima are well connected, 
	why do some algorithms keep finding sharp minima and others keep finding flat ones \cite{largebatchtraining}?
	(2) Does flatness really affect generalization? 
	
	In this paper, we address these questions by introducing the concept of \emph{asymmetric valleys}.
	We observe that the local geometry of the loss function of neural networks is usually asymmetric.
	In other words, there exist many directions such that the loss increases abruptly along one side, and grows rather slowly along the opposite side (see \figurename~\ref{fig:asy_shift} as an illustration). We formally define this kind of local minima as asymmetric valleys. 
	As we will show in Section~\ref{subsec:samebasin}, asymmetric valleys brings interesting illusions in high dimensional space. 
	For example,  located in the same valley, $\boldsymbol{\tilde w}$ may appear to be a wider and flatter minimum than $\boldsymbol{\hat w}$ as the former is farther away from the sharp side. 
	
	For the second question, we argue that flatness does affect generalization. However, we do not simply follow the argument in \cite{largebatchtraining}, which states that flat minima tend to generalize better because they are more stable. Instead,
	we prove that in asymmetric valleys, the solution biased towards the flat side of the valley 
	gives better generalization under mild assumptions. This result has at least two interesting implications: (1) converging to \emph{which} local minimum (if there are many) may not be critical for modern deep networks. However, it matters a lot \emph{where} the solution locates; and (2) the solution with lowest \emph{a priori} generalization error is not necessarily  the minimizer of the training loss.

	Given that a biased solution is preferred for asymmetric valleys, an immediate question is how  we can find such solutions in practice. It turns out that simply averaging the weights along the SGD trajectory, naturally leads to the desired solutions with bias. We give a theoretical analysis to support this argument, see \figurename~\ref{fig:asym} for an illustration. Note that our result is in line with the empirical observations recently made by \citet{swa}. 
	
%

	In addition, we provide empirical analysis to verify our theoretical results and support our claims. For example, we show that asymmetric valleys are indeed prevalent in modern deep networks, and solutions with lower generalization error has bias towards the flat side of the valley. We also find that batch normalization seems to be a major cause for shaping asymmetric loss surfaces.


	\vspace{-1ex}
	\section{Related Work}
	\vspace{-1ex}
	
	\textbf{Neural network landscape}.
	Analyzing the landscape of deep neural networks is an active and exciting area \cite{qualitativelycharacterizing,visualizingthelosslandscape,landscapedesign,geometryofneuralnetworklosssurfaces,towardsunderstandinggeneralizationofdeeplearning,thelosslandscapeofoverparameterizednn,empiricalanalysisofhessianofoverparameter}. 
	For example, \cite{essentiallynobarriersin,losssurfacesmodeconnectivity} observed that essentially all local minima are connected together with simple paths.  \cite{snapshotensembles} used cyclic learning rate and took the ensemble of intermediate models to get improved accuracy. 
	There are also appealing visualizations for the neural network landscape \cite{visualizingthelosslandscape}.
	
	\textbf{Sharp and flat minima}.
	The discussion of sharp and flat local minima dates back to \cite{flat_minima_2}, and recently regains its popularity. For example, \citet{largebatchtraining} proposed that large batch SGD finds sharp minima, which leads to poor generalization.
	In \cite{entropy-sgd}, an entropy regularized SGD was introduced to explicitly searching for flat minima. It was later pointed out that large batch SGD can yield comparable performance when the learning rate or the number of training iterations are properly set \cite{trainlonger,trainimagenetin1hour,dontdecaythelearningrate,revisitingsmallbatchtraining,abayesianperspectiveongeneralization,threefactorsinfluencingminima}.
	Moreover, \cite{sharpminimacangenearlizefordeepnets} showed that from a given flat minimum,
	one could construct another minimum with arbitrarily sharp directions but equally good performance. 
	In this paper, we argue that 
	the description of
	sharp or flat minima is an oversimplification.  
	There may simultaneously exist steep directions,  flat directions, and asymmetric directions for the same minimum.

	\textbf{SGD optimization and generalization}.
	As the de facto optimization tool for deep networks, SGD and its variants are extensively studied in the literature.  For example, it is shown that they could escape saddle points 
	or sharp local minima under reasonable assumptions~\cite{Ge2015,escapesaddlepointsefficiently,onthelocalminimaoftheempriicalrisk,acceleratedgradientdescentescapesaddlepoints,firstorderstochasticalgorithmsforescapingfrom,howtomakethegradientssmall,natasha2,neon2,analternativeview}. 
	For convex functions \cite{polyakaveraging} or strongly convex but non-smooth functions~\cite{makinggradientdescentoptimal},  SGD averaging is shown to give better convergence rate. 
	In addition, it can also achieve higher generalization performance for Lipschitz functions in theory~\cite{shalev2009stochastic,onlinetobatch}, or for deep networks in practice~\cite{snapshotensembles,swa,therearemanyconsistentexplanations}.  Discussions on the generalization bound of neural networks can be found in~\cite{spectrally-normalizedmarginbounds,towardsunderstandingtherole,exploringgeneralizationindeeplearning,generalizationindeeplearning,apacbayesianapproach,strongergeneralizationboundsfordeepnets,nonvacuousgeneralizationbound}.
	
	We show that SGD averaging has implicit bias on the flat sides of the minima. Previously, it was shown that SGD has other kinds of implicit bias as well 
	\cite{theimplicitbiasofgd,riskandparameterconvergence,characterizingimplicitbias}.


	\vspace{-1ex}
	\section{Asymmetric Valleys}
	\label{sec:asym}
	
	In this section, we give a formal definition of asymmetric valley, and show that it is prevalent in the loss landscape of modern deep neural networks.
	
	\vspace{-1ex}
	\paragraph{Preliminaries.} In supervised learning, 
	we seek to optimize $
	\boldsymbol{w^*}\triangleq \argmin_{\boldsymbol{w}\in \R^d} \LL(\boldsymbol{w}), ~~ \textrm{where}~~ \LL(\boldsymbol{w})\triangleq\E_{\boldsymbol{x}\sim \mathcal{D}} [f(\boldsymbol{x};\boldsymbol{w})]
	\in \R^d \rightarrow \R
	$
	is the population loss, $\boldsymbol{x}\in \R^{m}$ is the input from distribution $\mathcal{D}$, $\boldsymbol{w}\in \R^d$ denotes the model parameter, and $f\in \R^m \times \R^d\rightarrow \R $ is the loss function. 
	
	Since the data distribution $\mathcal{D}$ is usually unknown, instead of optimizing $\LL$ directly,  we often use SGD to find the empirical risk minimizer $\bhw$ for a set of random samples $\{\boldsymbol{x}_i\}_{i=1}^n$ from $\mathcal{D}$ (a.k.a. training set):
	$\bhw\triangleq \argmin_{\boldsymbol{w}\in \R^d} \hat\LL(\boldsymbol{w}), ~~ \mathrm{where}~~ \hat\LL(\boldsymbol{w})\triangleq \frac{1}{n}\sum_{i=1}^n f(\boldsymbol{x}_i;\boldsymbol{w})
	$.

	We use a unit vector $\boldsymbol{u} \in \R^d$ to represent a direction such that 
	the points on this direction passing $\boldsymbol{w}\in \R^d$ can be written as $\boldsymbol{w}+ l \boldsymbol{u}$ for $l\in (-\infty, \infty)$.
	
	\vspace{-1ex}
	\subsection{Definition of asymmetric valley}
	Before formally introducing asymmetric valleys, we first define asymmetric directions. 
	
	\begin{definition}[Asymmetric direction]
		Given constants $p>0, r>\zeta>0,c>1$, 
		a direction $\boldsymbol{u}$ is $(r,p,c, \zeta)$-asymmetric with respect to point $\boldsymbol{w}\in \R^d$ and loss function $\hat \LL$, if 
		$\nabla_{l} \hat \LL(\boldsymbol{w}+l\boldsymbol{u})<p$, 
		and $\nabla_{l} \hat \LL(\boldsymbol{w}-l\boldsymbol{u})< -cp$
		for $l\in (\zeta, r)$. 
	\end{definition}
	
	To put it simply, asymmetric direction is a direction $\boldsymbol{u}$ along which the loss function grows at different rates at the positive/negative direction. The constant $\zeta$ handles the small neighborhood around $\boldsymbol{w}$ with very small gradients. With this definition, we now formally define the \emph{asymmetric valley}.
	
	\begin{definition}[Asymmetric valley]
		\label{def:asy_valley}
		Given constants $p, r>\zeta>0,c>1$, 
		a local minimum $\boldsymbol{\hat w}^*$ of $\hat\LL\in \R^d\rightarrow \R$ is a $(r,p,c, \zeta)$-asymmetric valley, if 
		there exists at least one direction $\boldsymbol{u}$ such that $\boldsymbol{u}$ is $(r,p,c, \zeta)$-asymmetric with respect to $\bhw$ and $\hat\LL$. 
	\end{definition}
	Notice that here we abuse the name ``valley'', since $\bhw$ is essentially a point at the center of a  valley.

	\vspace{-1ex}
	\subsection{Find asymmetric directions empirically}
	\label{subsec:find_asym}
	
	\begin{figure}[t]
		\centering
		\includegraphics[width=.7\columnwidth]{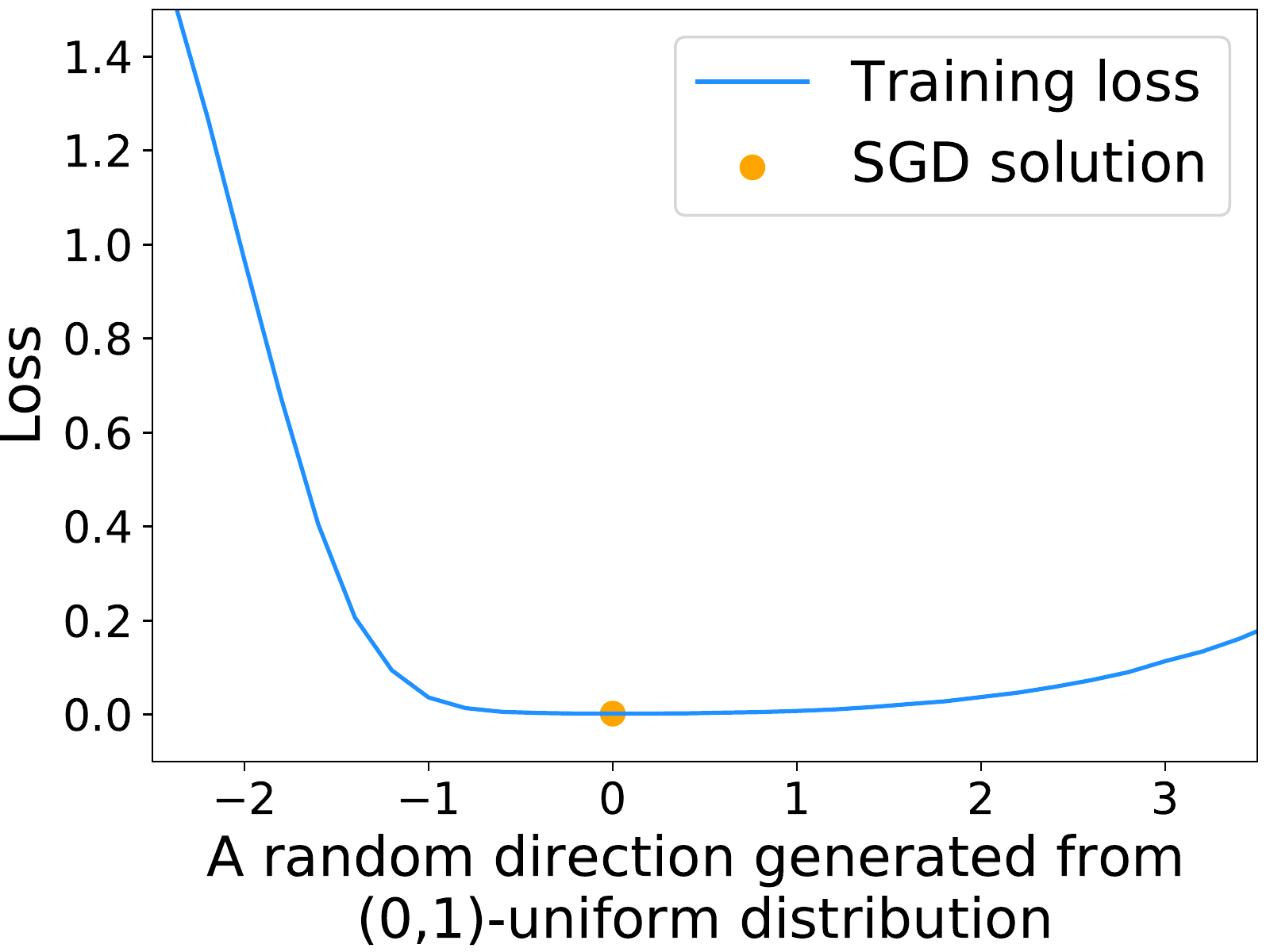}
		\caption{An asymmetric direction of a local minimum on the loss landscape of ResNet-110 trained on CIFAR-10.}
		\label{fig:sgd_asym} 
	\end{figure}
	
	Empirically, by 
	taking random directions with value 
	$(0,1)$ in each dimension, we could find   an asymmetric  direction
	for a given local minimum  
	with decent probability\footnote{By contrast, a random direction with value in $(-1,1)$ is usually not asymmetric. }. 
	We perform experiments with three widely used deep networks, i.e., ResNet-110, ResNet-164~\cite{resnet}, DenseNet-100~\cite{densenet}, on the CIFAR-10 and CIFAR-100 image classification datasets.  
	For each model on each dataset, we conduct 5 independent runs. The results show that we could \emph{always} find asymmetric directions with certain specification $(r,p,c, \zeta)$ with $c> 2$, which means all the local minima\footnote{Notice that empirically we could not verify whether the SGD solution $\bhw$ is a local minimum. See the discussion in Section \ref{subsec:samebasin}.} being found are located in asymmetric valleys.
	Figure \ref{fig:sgd_asym} shows an asymmetric direction for a local minimum in ResNet-110 trained on the CIFAR-10 dataset. 
	We verified that it is a $(2.5, 0.2, 7.5, 1.2)$-asymmetric direction.  Asymmetric valleys widely exist in other models as well, see Appendix~\ref{appendix_asy_valleys}.

	
	\vspace{-1ex}
	\section{Bias and Generalization}
	\vspace{-1ex}
	\label{sec:generalization}
	
	As we show in the previous section, most local minima in practice are \emph{asymmetric}, i.e., they might be sharp on one direction, but flat on the opposite direction. Therefore, it is important to investigate the generalization ability of a solution $\boldsymbol{w}$ in this scenario. In this section, we prove that  a \emph{biased} solution  on the flat side of an asymmetric valley yields lower generalization error than the empirical minimizer $\hat{\boldsymbol{w}}^*$ in that valley. 
	
	\vspace{-1ex}
	\subsection{Theoretical analysis}
	
	Before presenting our theorem, we first introduce two mild assumptions. We will show that they empirically hold on modern deep networks in Section \ref{sec:verify_assump}.

	The first assumption (Assumption \ref{assump:random_shift_assumption}) states that there exists a shift between the empirical loss and true population loss. This is a common assumption in the previous works, e.g., \cite{largebatchtraining}, but was usually presented in an informal way. Here we define the ``shift'' in a formal way.
	Without loss of generality, we will compare the empirical loss $\hat \LL$ with $\LL'\triangleq \LL \!-\! \min_{\boldsymbol{w}} \LL(\boldsymbol{w})  
	\!+\! \min_{\boldsymbol{w}} \hat \LL(\boldsymbol{w}) $ to remove the ``vertical difference'' between $\hat \LL$ and $\LL$.
	Notice that $\min_{\boldsymbol{w}} \LL(\boldsymbol{w})$  and 
	$\min_{\boldsymbol{w}} \hat \LL(\boldsymbol{w})$ are constants 
	and do not affect our generalization guarantee.
	
	\begin{definition}[$(\boldsymbol{\delta},R)$-shift gap]
		\label{def:shiftgap}
		For $\xi\geq 0$, $\boldsymbol{\delta}\in \R^d$, and fixed functions $\LL$ and $\hat \LL$, 
		we define the $(\boldsymbol{\delta}, R)$-shift gap between $\LL$ and $\hat \LL$ with respect to a point $\boldsymbol{w}$ as	
		\[
		\xi_{\boldsymbol{\delta}}(\boldsymbol{w})=\max_{\boldsymbol{v}\in \B(R)}|
		\LL'(\boldsymbol{w}+\boldsymbol{v} +\boldsymbol{\delta})-\hat \LL(\boldsymbol{w}+\boldsymbol{v} )|
		\]
		where $\LL'(\boldsymbol{w})\triangleq \LL(\boldsymbol{w})- \min_{\boldsymbol{w}} \LL(\boldsymbol{w})  
		+ \min_{\boldsymbol{w}} \hat \LL(\boldsymbol{w}) $, and $\B(R)$ is the $d$-dimensional ball with radius $R$ 
		centered at $\boldsymbol{0}$. 
	\end{definition}
	
	From the above definition, we know that the two functions match well after the shift $\boldsymbol{\delta}$ if $\xi_{\boldsymbol{\delta}}(\boldsymbol{w})$ is very small. For example, $\xi_{\boldsymbol{\delta}}(\boldsymbol{w})=0$ means $\LL$ is locally identical to $\hat \LL$ after the shift $\boldsymbol{\delta}$. Since $\hat \LL$ is computed on a set of random samples from $\mathcal{D}$, the actual shift $\boldsymbol{\delta}$ between $\hat \LL$ and $\LL$ is a random variable, ideally with zero expectation.

	\begin{assump}[Random shift assumption]
		\label{assump:random_shift_assumption}
		For a given population loss $\LL$ and a random empirical loss $\hat \LL$, 
		constants $R>0, r\geq \zeta>0, \xi\geq 0$, 
		a vector $\bd\in\R^d$ with
		$r \geq \bdi
		\geq \zeta$ for all $i\in [d]$, a minimizer $\boldsymbol{\hat w}^*$, 
		we assume that there exists 
		a random variable $\boldsymbol{\delta}\in \R^d$ correlated with $\hat \LL$ such that $\Pr(\delt_i = \bdi)=\Pr(\delt_i= -\bdi)=\frac12 $  for all $i\in[d]$, and the $(\delt ,R)$-shift gap between $\LL$ and $\hat \LL$ with respect to $\boldsymbol{\hat w}^*$  is bounded by $\xi$. 	
	\end{assump}

	Roughly, the above assumption says that the local landscape of the empirical loss and population loss match well after applying a shift vector $\boldsymbol{\delta}$, which has equal probability of being positive or negative in each dimension. 
	Therefore, $\boldsymbol{\delta}$ has $2^d$ possible values for a given shift vector $\boldsymbol{\bar \delta}$, each with probability $2^{-d}$. 
	The second assumption stated below can be seen as an extension of Definition \ref{def:asy_valley}.
	
	\begin{assump}[Locally asymmetric]
		\label{assump:Locally_identical_assumption}
		For a given population loss $\hat \LL$, 
		and a minimizer $\boldsymbol{\hat w}^*$, 
		there exist orthogonal directions $\boldsymbol{u}^1, \cdots, \boldsymbol{u}^k\in \R^d $ s.t. $\bui$ is $(r,p_i, c_i, \zeta)$-asymmetric with respect to $\boldsymbol{\hat w}^* + \boldsymbol{v}- \langle \boldsymbol{v}, \bu^i\rangle \bu^i $ for all $\boldsymbol{v}\in \B(R')$ and $i\in [k]$. 
	\end{assump}
	
	Assumption \ref{assump:Locally_identical_assumption} states that if $\bui$ is an asymmetric direction at $\boldsymbol{\hat w}^*$, then
	the point $\boldsymbol{\hat w}^* \!+\! \boldsymbol{v} \!-\! \langle \boldsymbol{v}, \bu^i\rangle \bu^i $ that deviates from $\boldsymbol{\hat w}^*$ along the perpendicular direction of $\bui$, is also asymmetric along the direction of $\bui$. In other words, the \emph{neighborhood} around $\bhw$ is an asymmetric valley.  
	
	Under the above assumptions, we are ready to state our theorem, which says the empirical minimizer is not necessarily the optimal solution, while a biased solution leads to better generalization. 
	We defer the proof to 
	Appendix \ref{appendix:proof_dimd}.

	\begin{thm}[Bias leads to better generalization]
		\label{thm:dimd}	
		For any $\bl\in \R^k$, if 
		Assumption \ref{assump:random_shift_assumption} holds for $R=\|\bl\|_2$, 
		Assumption \ref{assump:Locally_identical_assumption} holds for $R'= \|\bd\|_2+\|\bl\|_2$, and
		$ \frac{4\xi }{(c_i-1)p_i}< \bli\leq  \max\{r-\bdi, \bdi -\zeta \}$, 
		then we have \vspace{-0.1in}
		\begin{align*}
		&\E_{\boldsymbol{\delta}}\LL(\boldsymbol{\hat w}^*)
		-\E_{\boldsymbol{\delta}}\LL\left (\boldsymbol{\hat w}^* +\sum_{i=1}^k\bli  \bui\right )
		\\\geq & \sum_{i=1}^k (c_i-1)\bli p_i/2
		-2k\xi>0
		\end{align*}
		
	\end{thm}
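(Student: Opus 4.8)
The plan is to reduce the $d$-dimensional inequality to $k$ independent one-dimensional comparisons along the orthogonal directions $\bu^1,\dots,\bu^k$, and to let the product structure of the random shift $\delt$ do the averaging. Since $\LL'$ differs from $\LL$ by an additive constant, it is equivalent to lower bound $\E_{\delt}\LL'(\bhw)-\E_{\delt}\LL'\big(\bhw+\sum_{i=1}^k\bli\bui\big)$. First I would invoke Assumption~\ref{assump:random_shift_assumption} to trade the unknown population loss for the known, asymmetric empirical loss: for each realization of $\delt$ the $(\delt,R)$-shift gap bounds $|\LL'(\bhw+\boldsymbol v+\delt)-\hat\LL(\bhw+\boldsymbol v)|$ by $\xi$ on $\boldsymbol v\in\B(R)$, so I can replace $\LL'(\bhw)$ and $\LL'(\bhw+\sum_i\bli\bui)$ by $\hat\LL$ evaluated at the same arguments translated by $-\delt$, paying $\xi$ per evaluation. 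Here the two radii do distinct jobs: $R=\|\bl\|_2$ is exactly the size of the bias displacement $\sum_i\bli\bui$, while $R'=\|\bd\|_2+\|\bl\|_2$ is what guarantees (via Assumption~\ref{assump:Locally_identical_assumption}) that each $\bui$ is still $(r,p_i,c_i,\zeta)$-asymmetric at every shifted-and-biased point the argument touches.

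The heart of the proof is then a purely one-dimensional estimate along a single $\bui$, which I would run by conditioning on the two equiprobable signs of $\delt_i=\pm\bdi$. After translating by $-\delt$, the minimizer $\bhw$ lands at coordinate $\mp\bdi$ and the biased point at $\bli\mp\bdi$ along $\bui$. When the shift points into the steep side, integrating the steep slope bound (magnitude $>c_ip_i$ beyond $\zeta$) shows the biased point sits lower than the shifted minimizer by about $c_ip_i\bli$; when it points into the flat side, integrating the flat slope bound ($<p_i$) shows the biased point sits higher by about $p_i\bli$. Averaging the two branches with weight $\tfrac12$ leaves a net gain of $\tfrac12(c_i-1)p_i\bli$ in favor of the biased point. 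The hypothesis $\bli\le\max\{r-\bdi,\bdi-\zeta\}$ is precisely what keeps every coordinate visited inside the interval $(\zeta,r)$ on the appropriate side, so that the slope bounds of Definition~1 are legitimately applicable; its two entries correspond to the two regimes (bias large enough to clear the shift onto the flat side, versus bias that only partially cancels a shift that stays on the steep side), and the $\zeta$-neighborhood of near-zero gradient must be discarded separately in each.

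I would then assemble the global bound by a telescoping argument over the $k$ directions, writing the total difference as a sum of one-step increments along $\bu^1,\dots,\bu^k$; orthogonality of the $\bui$ and independence of the coordinates of $\delt$ ensure there are no cross terms, so linearity of expectation turns the $k$ per-direction gains into $\sum_{i=1}^k\tfrac12(c_i-1)\bli p_i$. Each step spends the shift gap twice (once for the $\bhw$-side, once for the biased-side evaluation), so the accumulated error is $2k\xi$, giving the stated lower bound $\sum_i\tfrac12(c_i-1)\bli p_i-2k\xi$. Finally the assumption $\bli>4\xi/((c_i-1)p_i)$ forces each per-direction gain $\tfrac12(c_i-1)p_i\bli$ to strictly exceed its $2\xi$ share of the error, which yields strict positivity.

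The step I expect to be the main obstacle is the one-dimensional estimate, and specifically turning the intuitive ``steep beats flat'' picture into a genuine lower bound. The slope inequalities in Definition~1 are one-sided, so I must check that they point the useful way simultaneously for both signs of $\delt_i$ and for both the minimizer and the biased point, and that no evaluation ever strays into a region where only the wrong-direction bound is available. This is exactly where the radius $R'=\|\bd\|_2+\|\bl\|_2$ (to keep asymmetry valid off-axis, along the perpendicular deviations produced by the other coordinates of $\delt$ and by the earlier telescoping steps) and the constraint $\bli\le\max\{r-\bdi,\bdi-\zeta\}$ must be used with care; getting this bookkeeping right, rather than the algebra itself, is the crux.
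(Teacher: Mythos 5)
Your proposal is correct and follows essentially the same route as the paper's proof in Appendix~\ref{appendix:proof_dimd}: a telescoping decomposition over the $k$ orthogonal directions, pairing the two equiprobable signs of $\delt_i$ so that the steep-side gain $c_ip_i\bli$ is averaged against the flat-side loss $p_i\bli$, with $2\xi$ paid per direction for the two shift-gap conversions between $\LL'$ and $\hat\LL$. The one caveat, inherited from the theorem statement rather than introduced by you, is that the argument actually needs both $\bli\le \bdi-\zeta$ and $\bli\le r-\bdi$ simultaneously (one for each sign of $\delt_i$), so the stated $\max$ should be a $\min$; your gloss that the two entries correspond to two alternative regimes would not survive writing out the details, but this does not distinguish your approach from the paper's.
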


	\paragraph{Remark on Theorem \ref{thm:dimd}.} It is widely known that the empirical minimizer is usually different from the true optimum. 
	However, in practice it is difficult to know how the training loss shifts from the population loss. 
	Therefore, the best we could do is minimizing the empirical loss function (with some regularizers). On the contrary,  Theorem~\ref{thm:dimd} states that under the asymmetric case, we should pick a biased solution to minimize the expected population loss even the shift is unknown. Moreover, it is possible to distill our insight into practical algorithms, as we will discuss in Section \ref{sec:avg_is_good}.

	\vspace{-1ex}
	\subsection{Verification of assumptions}
	\label{sec:verify_assump}
	
	\paragraph{Verification of Assumption \ref{assump:random_shift_assumption}.}
	We show that a shift between $\LL$ and $\hat \LL$ is quite common in practice, by taking a ResNet-110 trained on CIFAR-10 as an example. Since we could not visualize a shift in a high dimensional space, we randomly sample an asymmetric direction $\boldsymbol{u}$ (more results are shown Appendix \ref{appendix_shift}) at the SGD solution $\boldsymbol{\hat w}^*$.
	The blue and red curves shown in \figurename~\ref{fig:random_shift} are obtained by calculating 
	$\hat \LL(\boldsymbol{\hat w}^* + l \boldsymbol{u})$ and  $ \LL'(\boldsymbol{\hat w}^*+ l \boldsymbol{u})$ for  $l \in [-3,3] $, which correspond to the training and test loss, respectively. 
	
	We then try different shift values of $\boldsymbol{\delta}$ to ``match'' the two curves. As shown in \figurename~\ref{fig:random_shift}, after applying a horizontal shift $\boldsymbol{\delta}\!=\!0.4$ to the test loss, the two curves overlap almost perfectly. Quantitatively, we can use the \emph{shift gap} defined in Definition \ref{def:shiftgap} to evaluate how well the two curves match each other after shifting. It turns out that  $\xi_{\boldsymbol{\delta}=0.4}\!=\!0.0335$, which is much lower than $\xi_{\boldsymbol{\delta}=0} \!=\! 0.223$ before shifting ($\delt$ has only one dimension  here). In \figurename~\ref{fig:ratio}, we plot $\xi_{\boldsymbol{\delta}}/\xi_{\boldsymbol{0}}$ as a function of $\boldsymbol{\delta}$. Clearly, there exists a $\boldsymbol{\delta}$ that minimizes this ratio, indicating a good match.
	
	We conducted the same experiments for different directions, models and datasets, and similar observations were made. Please refer to Appendix \ref{appendix_shift} for more results.

	\begin{figure}[htbp]
		\centering
		\subfigure[Shfit between $\LL'$ and $\hat\LL$\label{fig:random_shift} ]{
			\centering
			\begin{minipage}{0.22\textwidth}
				\centering
				\includegraphics[width=4cm]{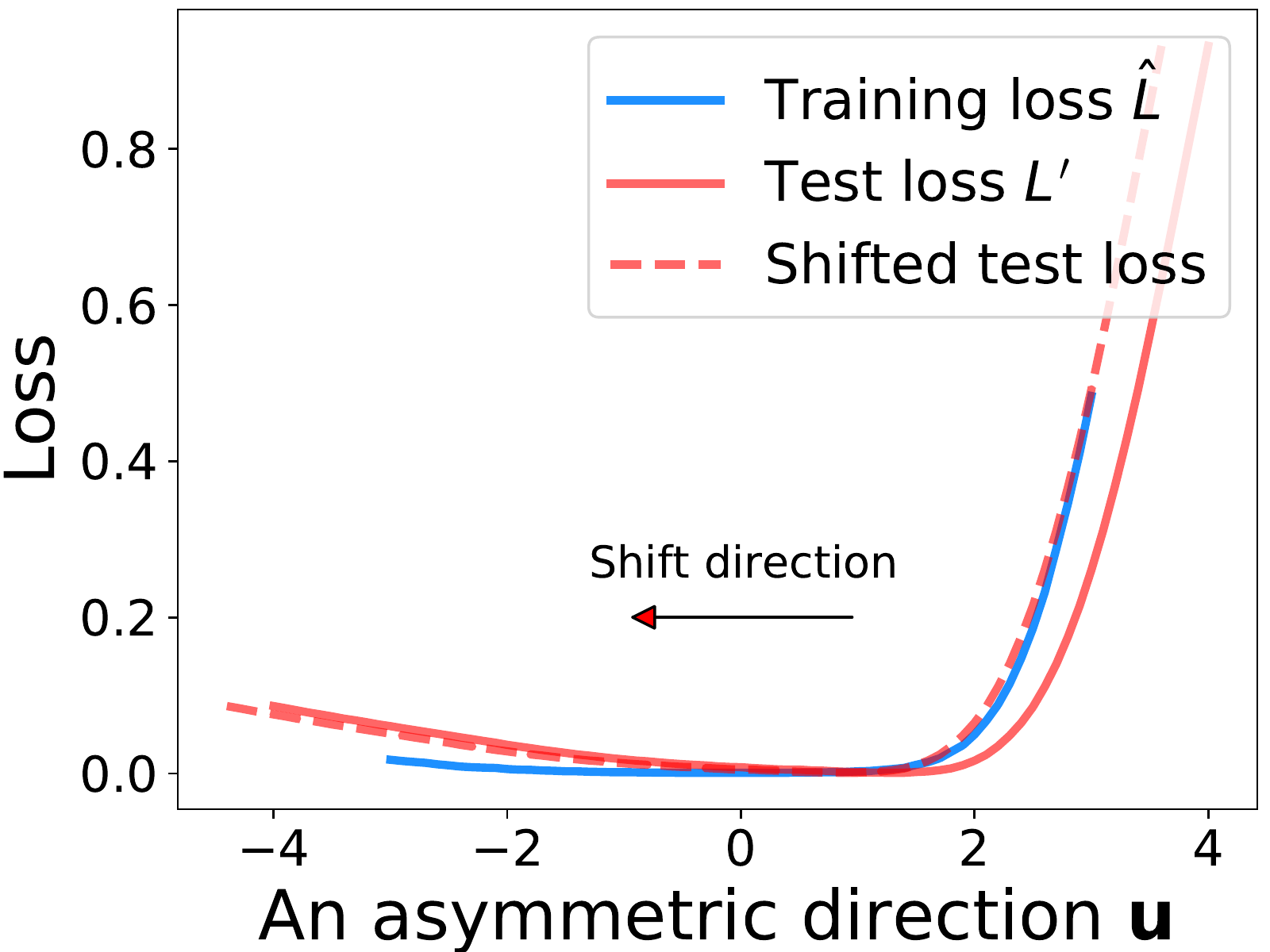}
			\end{minipage}
		}
		\subfigure[$ \xi_{\boldsymbol{\delta}} /\xi_{\boldsymbol 0}$ vs different shift $\boldsymbol{\delta}$\label{fig:ratio}]{
			\centering
			\begin{minipage}{0.22\textwidth}
				\centering
				\includegraphics[width=4cm]{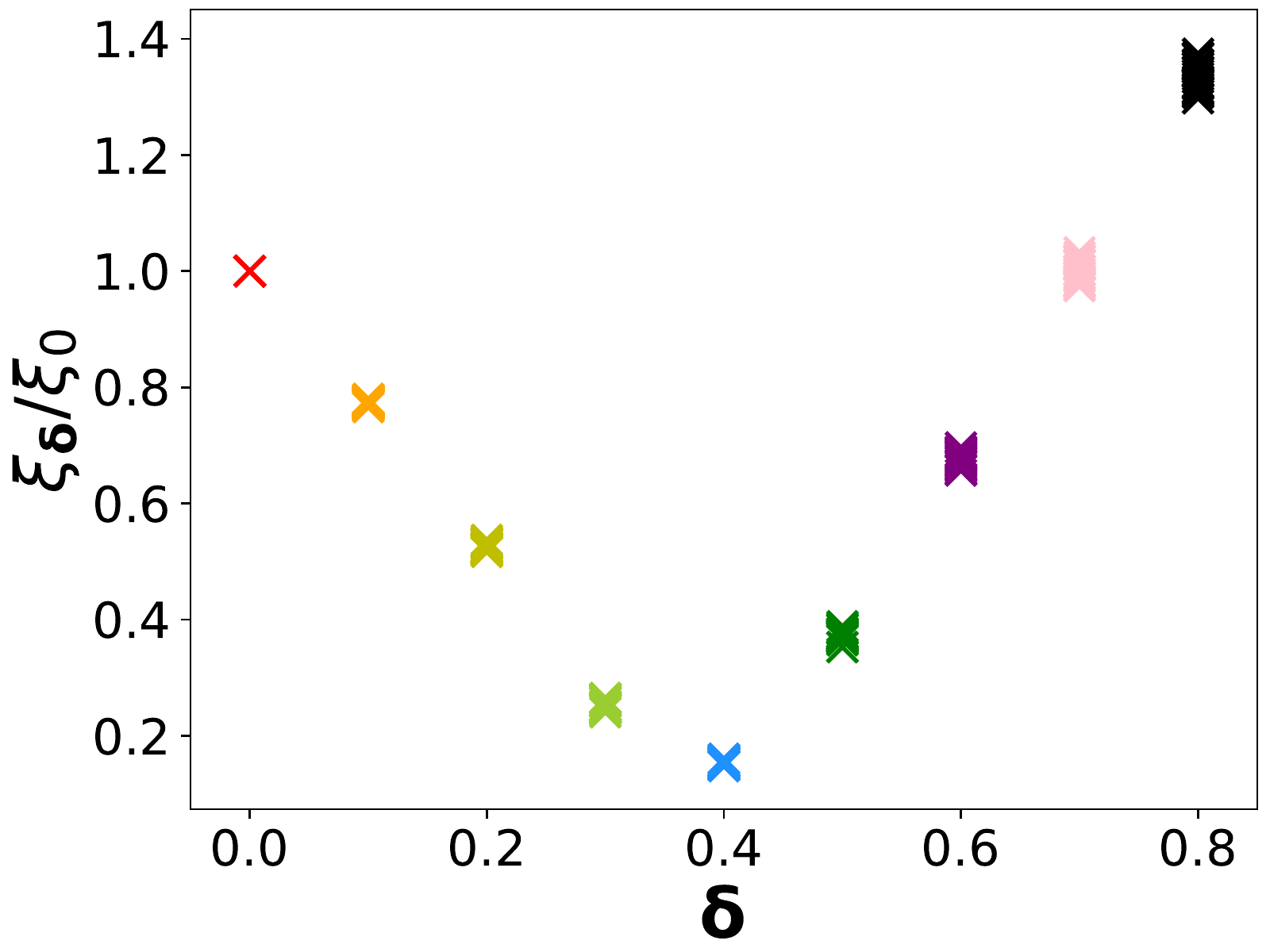}
			\end{minipage}
		}
		\caption{Shift exists between empirical loss and population loss for ResNet-110 on CIFAR-10.}
	\end{figure}
	

	\paragraph{Verification of Assumption \ref{assump:Locally_identical_assumption}.}

	\begin{figure}[t]
		\centering
		\includegraphics[width=.75\columnwidth]{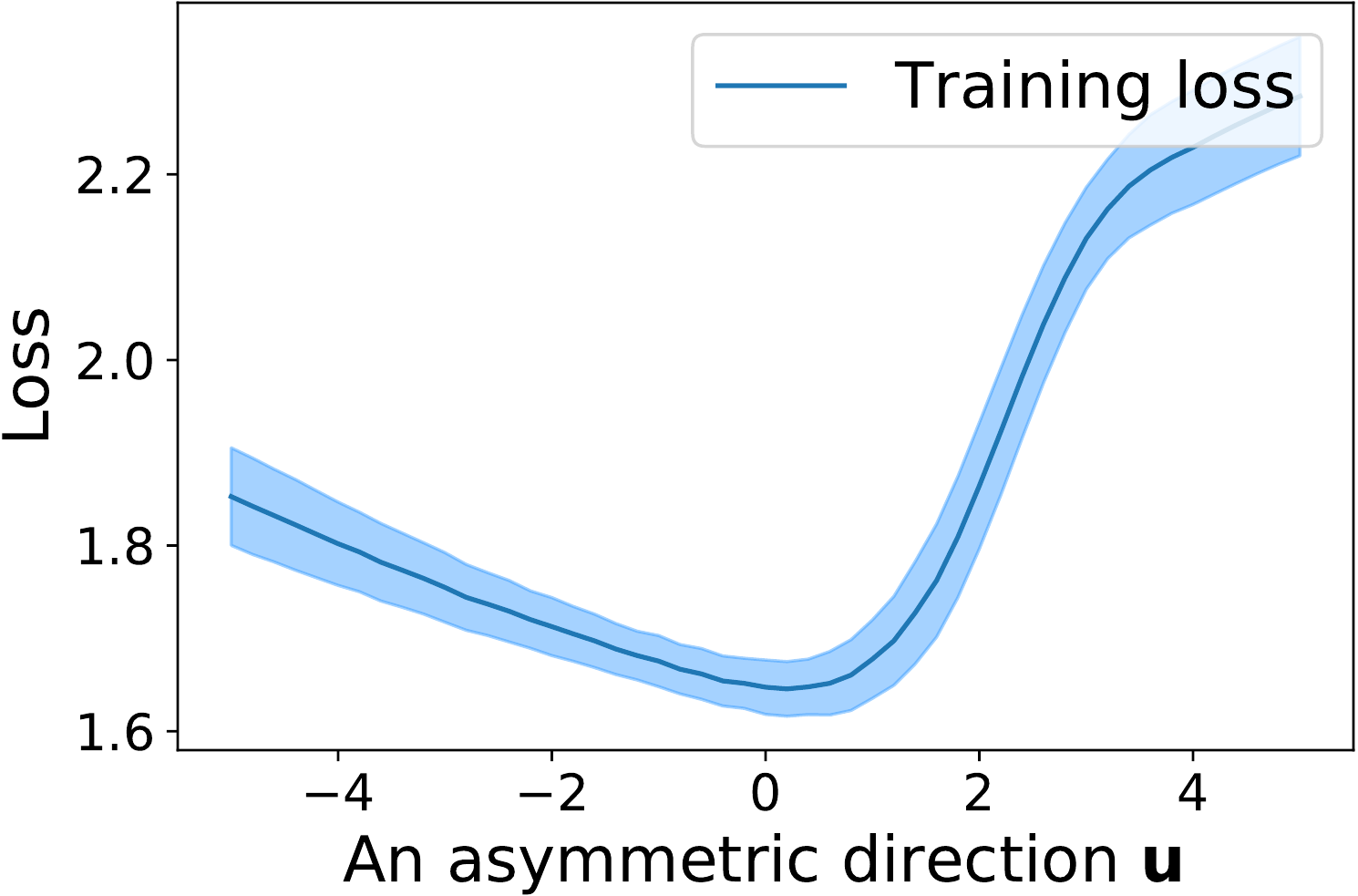}
		\caption{Training loss mean and standard variance for the neighborhood of $\bhw$ at the direction of $\bu$.}
		\label{fig:local_shift} 
	\end{figure}
	This is a mild assumption
	that can be
	verified empirically. For example, 
	we take  a SGD solution of ResNet-110 on CIFAR-10 as $\boldsymbol{\hat w}^*$, and specify an asymmetric direction $\boldsymbol{u}$ for $\bhw$. We then randomly sample $100$ different local adjustments for $\boldsymbol{v}\in \B(25)$. Based on these adjustments, 
	we present the mean loss curves and standard variance zone on the
	asymmetric direction $\boldsymbol{u}$
	for all the points $\bhw+\boldsymbol{v} - \langle \boldsymbol{v}, \bu \rangle \bu $ in Figure \ref{fig:local_shift}. 
	As we can see, 
	the variance of these curves are very small, which means
	all of them are similar to each other. Moreover, we verified that $\bu$ is  $(4,0.1,5.22,2)$-asymmetric with respect to all neighboring points.

	
	\section{Averaging Generates Good Bias}
	\label{sec:avg_is_good}
	
	In the previous section, we show that when the loss landscape of a local minimum is asymmetric, a solution with bias towards the flat side of the valley has better generalization performance. One immediate question is that how can we obtain such a solution via practical algorithms? Below we show that it can be achieved by simply taking the average of SGD iterates during the course of training. We first analyze the one dimensional case in Section~\ref{sec:onedim}, and then extend the analysis to the high dimensional case in Section~\ref{sec:highdim}.

	\subsection{One dimensional case}
	\label{sec:onedim}
	
	For asymmetric functions, as long as the learning rate is not too small, SGD will oscillate between the flat side and the sharp side. Below we focus on one round of oscillation, and show that the average of the iterates in each round has a bias on the flat side. Consequently, by aggregating all rounds of oscillation, averaging SGD iterates leads to a bias as well. 
	
	For each individual round $i$, we assume that it starts from the iteration when SGD goes from sharp side to flat side (denoted as $w^i_0$), and ends at the iteration exactly before the iteration that SGD goes from sharp side to flat side again (denoted as $w^i_{T_i}$). Here $T_i$ denotes the number of iterations in the $i$-th rounds. 
	The average iterate in the $i$-th round can be written as $\wswa\triangleq \frac1{T_i} \sum_{j=0}^{T_i} w^i_j$. 
	For notational simplicity, we will omit the super script $i$ on $w^i_j$.
	
	The following theorem shows that the expectation of the average has bias on the flat side. 
	To get a formal lower bound on $\wswa$, we consider the asymmetric case where $\zeta=0$, and also assume lower bounds for the gradients on the function. Notice that 
	we made little effort to optimize the constants or bounds on the parameters, and we defer the proof to Appendix~\ref{appendix_main_theorem}.
	
	\begin{thm}[SGD averaging generates a bias]
		\label{thm:asym_avg}
		Assume that a local minimizer $w^*=0$ is a $(r,\UR,c, 0)$-asymmetric valley, 
		where 
		$\Ll \leq \nabla \LL(w) \leq \UL <0$ for $w< 0$, and $0<\LR \leq \nabla \LL(w) \leq \UR$ for $w\geq 0$. Assume $-\UL= c \UR$ for a large constant $c$, and
		$\frac{-(\Ll-\nu)}{\LR}=c'<\frac{e^{c/3}}{6}$. 
		The SGD updating rule is $w_{t+1} = w_t - \eta (\nabla L(w)+\omega_t)$
		where $\omega_t$ is the noise and $|\omega_t|<\nu$, and assume $\nu \leq \UR$. 
		Then we have 
		\begin{align}
		\vspace{-1ex}
		\E[\wswa]>c_0>0, \nonumber
		\vspace{-1ex}
		\end{align}
		where $c_0$ is a constant that only depends  on 
		$\eta, \UR, \UL, \LR, \Ll$ and $\nu $.
	\end{thm}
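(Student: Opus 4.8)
The plan is to read $\wswa$ as a time‑average (occupation measure) over one oscillation and to show that, because the sharp side $w<0$ is traversed with a single large step while the flat side $w\ge 0$ is crawled through with many small steps, the iterates spend the overwhelming majority of the round at positive values. Concretely I would split one round into a \emph{flat phase} consisting of the consecutive non‑negative iterates $w_0,\dots,w_{s-1}$ and a \emph{sharp phase} consisting of the negative iterates $w_s,\dots,w_{T_i}$, bound the contribution of each phase separately, and then combine them. To avoid the expectation‑of‑a‑ratio issue I would argue at the level of the renewal‑reward ratio $\E[\sum_j w_j]/\E[T_i]$ (the quantity that governs the long‑run average), lower bounding the numerator and upper bounding the denominator.

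First I would dispose of the sharp phase. Since the iterate enters $w<0$ from a flat step of size at most $\eta(\UR+\nu)\le 2\eta\UR$, the first negative iterate obeys $w_s\ge -2\eta\UR$. On the sharp side the gradient has magnitude at least $-\UL=c\UR$, so one update moves $w$ upward by at least $\eta(c\UR-\nu)\ge\eta(c-1)\UR$, which for large $c$ exceeds the depth $2\eta\UR$; hence the sharp phase is a single step and the entire negative contribution to $\sum_j w_j$ is at least $-2\eta\UR$. That same large upward step overshoots back onto the flat side, and bounding it from both ends (using $|\omega_t|<\nu$, together with $\UL$ and $\Ll$) yields $\eta(c-3)\UR \le w_0 \le \eta(\nu-\Ll)=\eta c'\LR$.

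Next I would lower bound the positive mass. Because each flat step changes $w$ by at most $2\eta\UR$, the iterate needs at least $\approx w_0/(2\eta\UR)\gtrsim (c-3)/2$ steps to descend from $w_0$ down to $0$, and along the way $w_j\ge w_0-2\eta\UR\,j$; summing this envelope gives $\sum_{\text{flat}} w_j \gtrsim w_0^2/(4\eta\UR)\gtrsim \eta(c-3)^2\UR/4$. For the denominator I would exploit that on the flat side the expected per‑step descent is $\eta\nabla\LL\ge\eta\LR>0$, so an optional‑stopping / Wald argument bounds the expected round length by $\E[T_i]\lesssim w_0/(\eta\LR)+1\lesssim c'$. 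Dividing the numerator bound by the denominator bound then produces a strictly positive constant $c_0$ depending only on $\eta,\UR,\UL,\LR,\Ll$ and $\nu$, as claimed.

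The hard part will be the denominator. Every flat iterate is non‑negative and the lone sharp iterate is only mildly negative, so positivity of the average is intuitively clear; the real work is preventing the iterate from lingering in the low band $[0,c_0)$ under the noise and thereby diluting the average, i.e. controlling the occupation time of that band (equivalently $\E[T_i]$) using only $|\omega_t|<\nu$ and the gradient floor $\LR$. This is exactly the role of the hypotheses $\nu\le\UR$ and $c'=(\nu-\Ll)/\LR<e^{c/3}/6$: they force the round to terminate quickly and keep the number of diluting low‑lying iterates small relative to the $\sim c$ iterates clustered near the large overshoot $w_0$, and I expect the exponential factor $e^{c/3}$ to surface precisely in this step. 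Everything else is routine bookkeeping of the overshoot and the telescoping descent envelope sketched above.
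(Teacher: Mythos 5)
Your decomposition of a round (one mildly negative iterate on the sharp side, a long descent on the flat side starting from a large overshoot $w_0\in[\Pm,\Pa]$) is exactly the paper's, and your bounds $w_0\ge \eta(c-3)\UR$ and $w_0\le \eta(\nu-\Ll)$ coincide with Lemma~\ref{lem:w0bounds}. Where you genuinely diverge is in how the two phases are quantified. The paper controls both the minimum time on the flat side ($\Tmp$, Theorem~\ref{thm:tmp}) and the maximum round length ($\Tap$, Theorem~\ref{thm:tap}) by building supermartingales and invoking Azuma's inequality, paying a $\sqrt{2t}\,\eta\nu\log^{1/2}(2\tau)$ fluctuation term and a failure probability $(\Tmp+\Tap)/\tau$; the hypothesis $c'<e^{c/3}/6$ exists precisely so that $\tau$ can be chosen to make that failure probability at most $1/2$. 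You instead use the worst-case step size $\eta(\UR+\nu)\le 2\eta\UR$ to get a \emph{deterministic} envelope $w_j\ge w_0-2\eta\UR j$, hence a deterministic lower bound $\sum_j w_j\gtrsim w_0^2/(4\eta\UR)$, and Wald/optional stopping with drift $\eta\LR$ to bound $\E[T_i]\lesssim c'$. This is more elementary, needs no union bound over $t$, and (as you half-suspect) would not actually need the $e^{c/3}/6$ condition at all --- that condition is an artifact of the Azuma route, not of the phenomenon. Note that both your Wald step and the paper's supermartingale computations silently require $\E[\omega_t\mid\F_t]=0$, which the theorem statement does not list; you are no worse off than the paper there.

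The one substantive caveat: the theorem asserts $\E[\wswa]=\E\bigl[\tfrac{1}{T_i}\sum_j w_j\bigr]>c_0$, whereas you explicitly propose to bound $\E[\sum_j w_j]/\E[T_i]$. These are different quantities, and the paper resolves this by showing the favorable event ($T_i\le\Tap$ and the flat phase lasting $\ge\Tmp$ steps) has probability at least $1/2$ while the average is trivially positive otherwise. Your version is repairable with one extra line precisely because your lower bound on the sum is deterministic: since $\sum_j w_j\ge S_{\min}>0$ almost surely, $\E[\wswa]\ge S_{\min}\,\E[1/T_i]\ge S_{\min}/\E[T_i]$ by Jensen, so your ratio-of-expectations bound does imply the stated claim. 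Make that step explicit and the argument closes.
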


	Theorem \ref{thm:asym_avg} can be intuitively explained by Figure \ref{fig:Asy_onedim_occillate}.
	If we run SGD on this one dimensional function, it will stay at the flat side for more iterations as the magnitude of the gradient on this side is much smaller. Therefore, the average of the locations is biased towards the flat side. 
	
	
	\begin{figure}[hbpt]
		\centering
		\includegraphics[width=0.7\columnwidth]{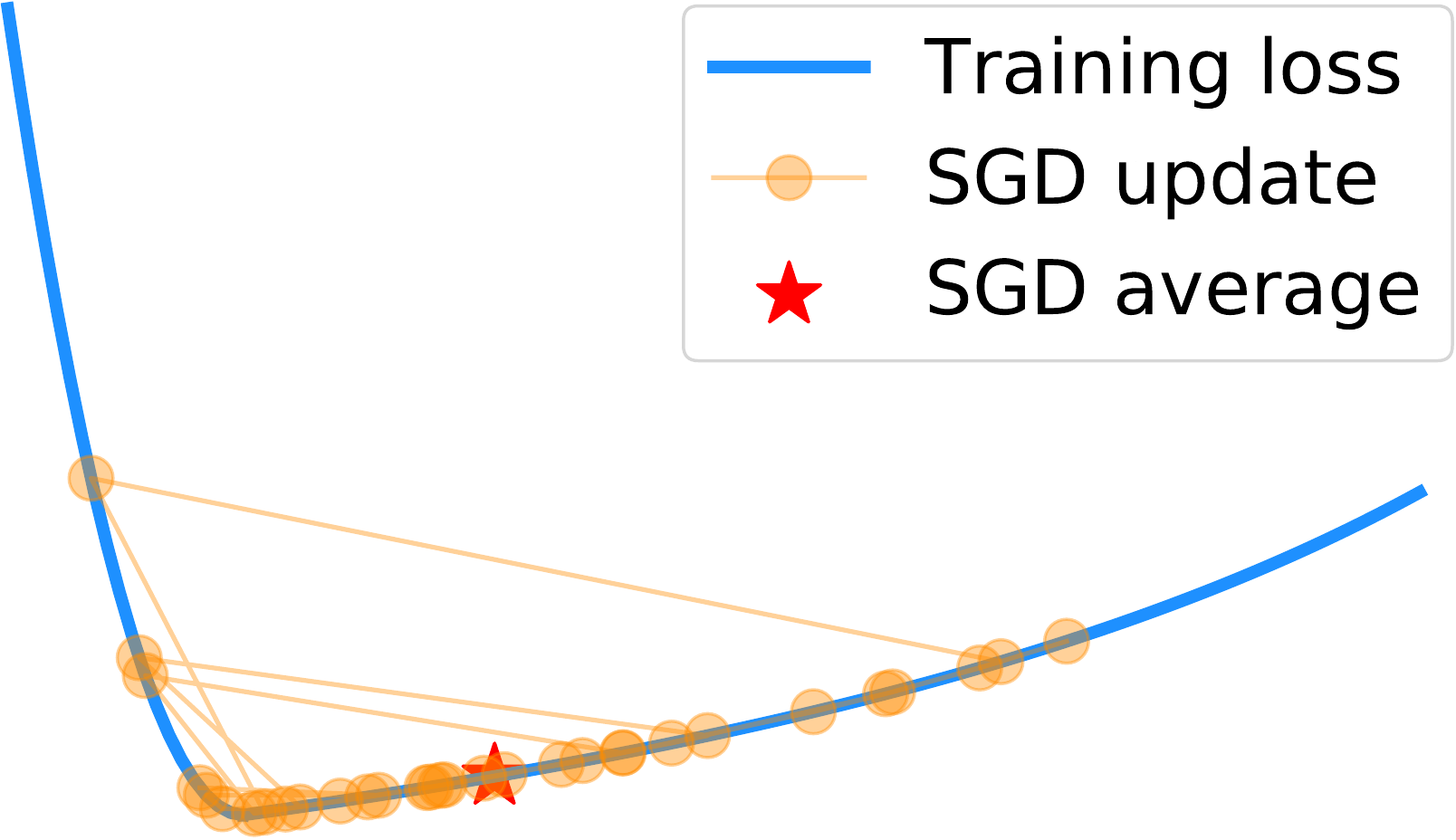}
		\caption{SGD iterates and their average on an asymmetric function: the oscillation case.}
		\label{fig:Asy_onedim_occillate} 
	\end{figure}

	Of course, if the learning rate is sufficiently small, there will be no oscillations on the SGD trajectory, as shown in Figure \ref{fig:Asy_onedim}. In this case, the bias on the sharp side tends to be closer to the center compared to the bias on the flat side, as the gradient on the sharp side is much larger than the gradient on the flat side, so SGD converges much faster. In other words, even if there is no oscillation and Theorem \ref{thm:asym_avg} does not apply, SGD averaging creates more bias on flat sides than sharp sides in expectation. Thus in all the scenarios, taking average of SGD iterates would be beneficial for asymmetric loss function. 
	
	In addition, for symmetric loss functions, averaging SGD iterates may also be helpful in terms of denoising (see Appendix~\ref{appendix:other_sgd_pattern} for concrete examples). Therefore, taking the average of the SGD trajectory may always improve generalization, regardless of whether the loss function is symmetric or not.

	\begin{figure}[htbp]
		\centering
		\includegraphics[width=0.45\columnwidth]{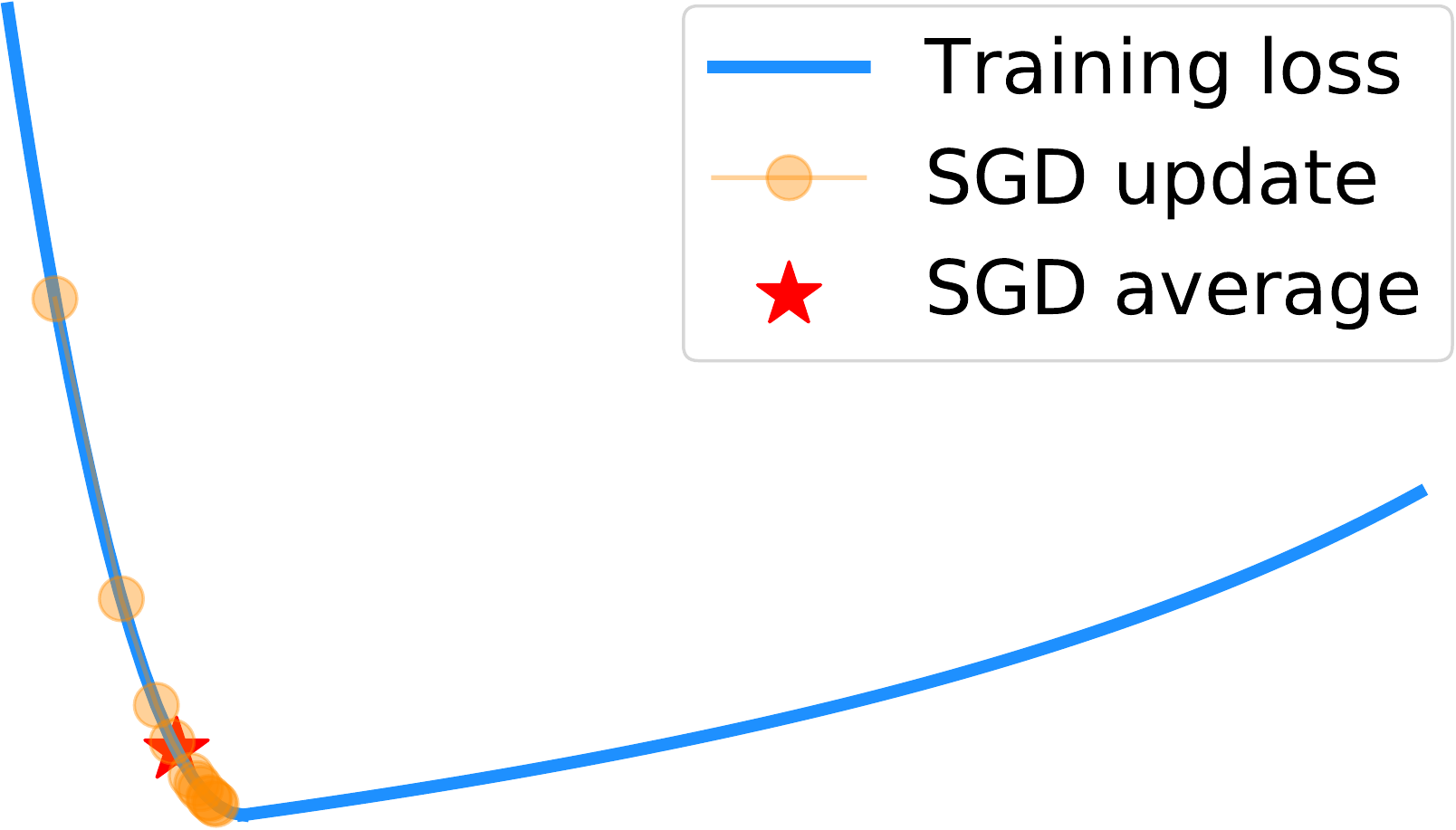}
		\includegraphics[width=0.45\columnwidth]{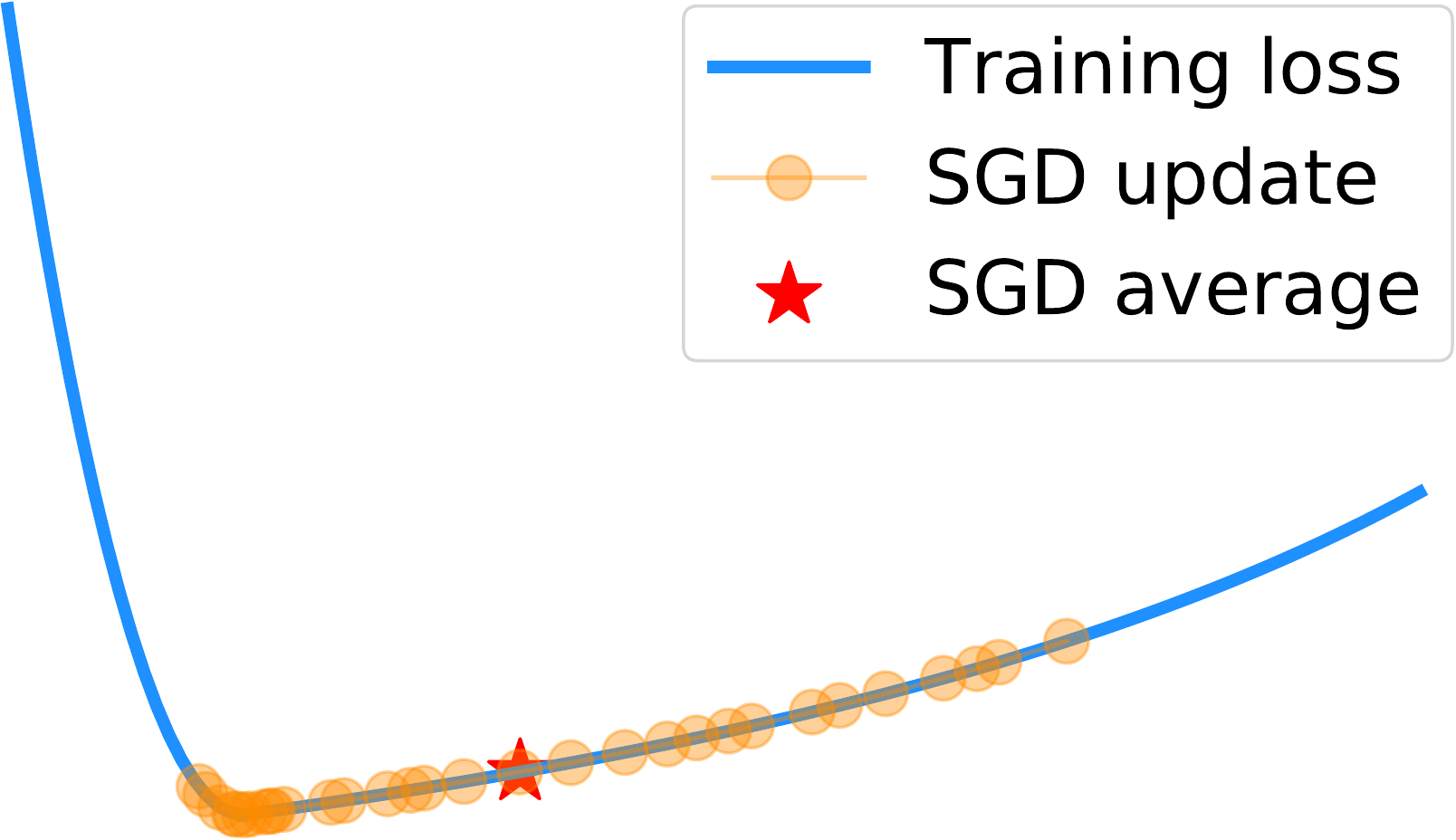}
		\caption{SGD iterates and their average on an asymmetric function with small learning rate: starting from the sharp side (\emph{Left}); and
			starting from the flat side (\emph{Right})	
		}
		\label{fig:Asy_onedim} 
	\end{figure}

	\subsection{High dimensional case}
	\label{sec:highdim}
	
	For high dimensional functions, the analysis on averaging SGD iterates would be more complicated compared to that given in the previous subsection.  However, 
	if we only care about the bias on a specific direction $\bu$, we could directly apply Theorem \ref{thm:asym_avg} with one additional assumption. 
	Specifically, 
	if the projections of the loss function onto $\bu$
	along the SGD trajectory satisfy the assumptions in Theorem \ref{thm:asym_avg}, i.e., being asymmetric and the gradient on both sides have upper and lower bounds, then the claim of Theorem \ref{thm:asym_avg} directly applies. This is because only the gradient along the direction $\bu$ will affect the SGD trajectory projected onto $\bu$, and we could safely omit all other directions.
	
	Empirically, we find that this assumption generally holds.
	For a given SGD solution,  we fix a random asymmetric direction $\bu\in\R^d$, and sample the loss surface on direction $\bu$ that passes the $t$-th epoch of SGD trajectory (denoted as $\boldsymbol{w_t}$), i.e., evaluate 
	$\hat \LL(\boldsymbol{w_t} + l\boldsymbol{\bu })$, for $0\leq t\leq 200$ and $l \in [-15,15]$. 
	
	As shown in the \figurename~\ref{fig:sgd_high_dimension_slice}, 
	after the first $40$ epochs, the projected loss surfaces becomes relatively stable. Therefore, we could directly apply Theorem \ref{thm:asym_avg} to the direction $\bu$.  
	
	As we will see in Section \ref{subsec:illusion_swa}, compared with SGD solutions, SGD averaging indeed creates bias along different asymmetric directions, as predicted by our theory.
	
	\begin{figure}[t]
		\centering
		\includegraphics[width=.45\textwidth]{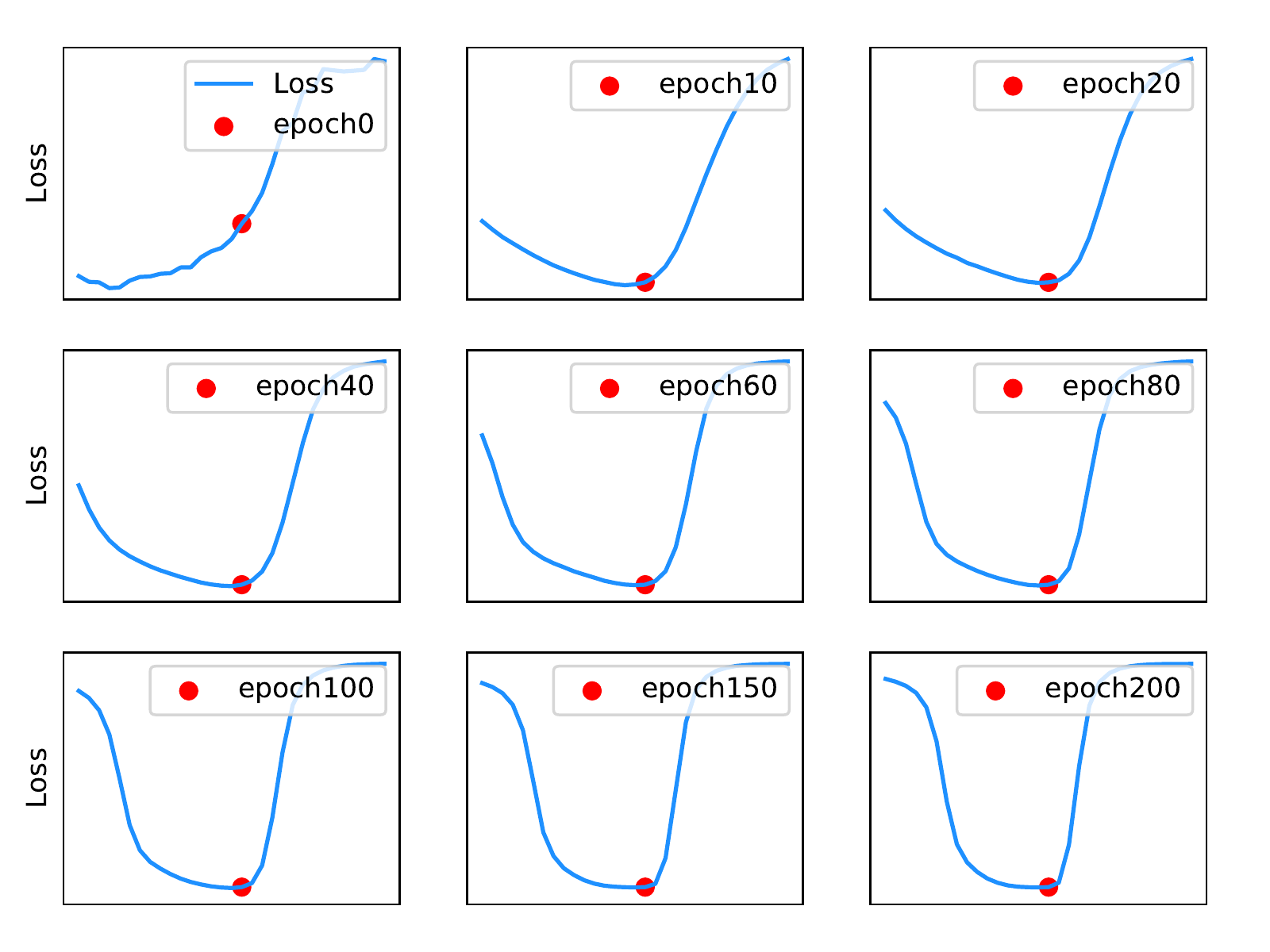}
		\caption{Projection of the training loss surface onto an asymmetric direction $\bu$} 
		\label{fig:sgd_high_dimension_slice} 
	\end{figure}


	\section{Sharp and Flat Minima Illusion}
	\label{subsec:samebasin}
	
	In this section, we show that \emph{where} the solution locates at a local minimum basin is very important, which is a refinement of judging the generalization performance by the sharpness/flatness of a local minimum. All of our observations support our theoretical analysis in the previous sections.

	First we remark that rigorously  testing whether a point is a local minimum, or even close to a local minimum, is extremely hard for deep models, see e.g.  \cite{spuriouslocalminimaarecommon}. In fact, the Hessian of most empirical solutions still have plenty of small negative eigenvalues \cite{entropy-sgd}, so technically they are saddle points. But 
	we choose to ignore these technicalities, and treat all these points as ``local minima''.

	\subsection{Illusion case 1: SWA algorithm}	
	\label{subsec:illusion_swa}

	Recently, \citet{swa} proposed the stochastic weight averaging (SWA) algorithm, which explicitly takes the average of SGD iterates to achieve better generalization. 
	Inspired by their observation that ``SWA leads to solutions corresponding
	to wider optima than SGD'', we provide a more refined explanation in this subsection. That is, averaging weights leads to ``biased'' solutions in an asymmetric valley, which correspond to better generalization.

	Specifically, we run the SWA algorithm (with deceasing learning rate) with three popular deep networks, i.e., ResNet-110, ResNet-164 and DenseNet-100, on the CIFAR-10 and CIFAR-100 datasets, following the configurations in \cite{swa} (denoted as SWA). Then we
	run SGD with small learning rate \emph{from the SWA solutions} to
	find a solution located in the same basin (denoted as SGD).
	
	In Figure \ref{fig:inter_swasgd_pic},
	We draw an interpolation between the solutions obtained by SWA and SGD\footnote{\citet{swa} have done a similar experiment.}. 
	One can observe that there is no ``bump'' between these two solutions, meaning they are located in the same basin. 
	 Clearly, the SWA solution is biased towards the flat side, which verifies our theoretical analysis in Section~\ref{sec:avg_is_good}. Further, we notice that although the biased SWA solution has higher training loss than the empirical minimizer, it indeed yields lower test loss. This verifies our analysis in Section~\ref{sec:generalization}. Similar observations are made on other networks and other datasets, which we present in Appendix \ref{appendix:swa_asym_figures}.
	
	\begin{figure}[htbp]
		\centering
		\includegraphics[width=.45\textwidth]{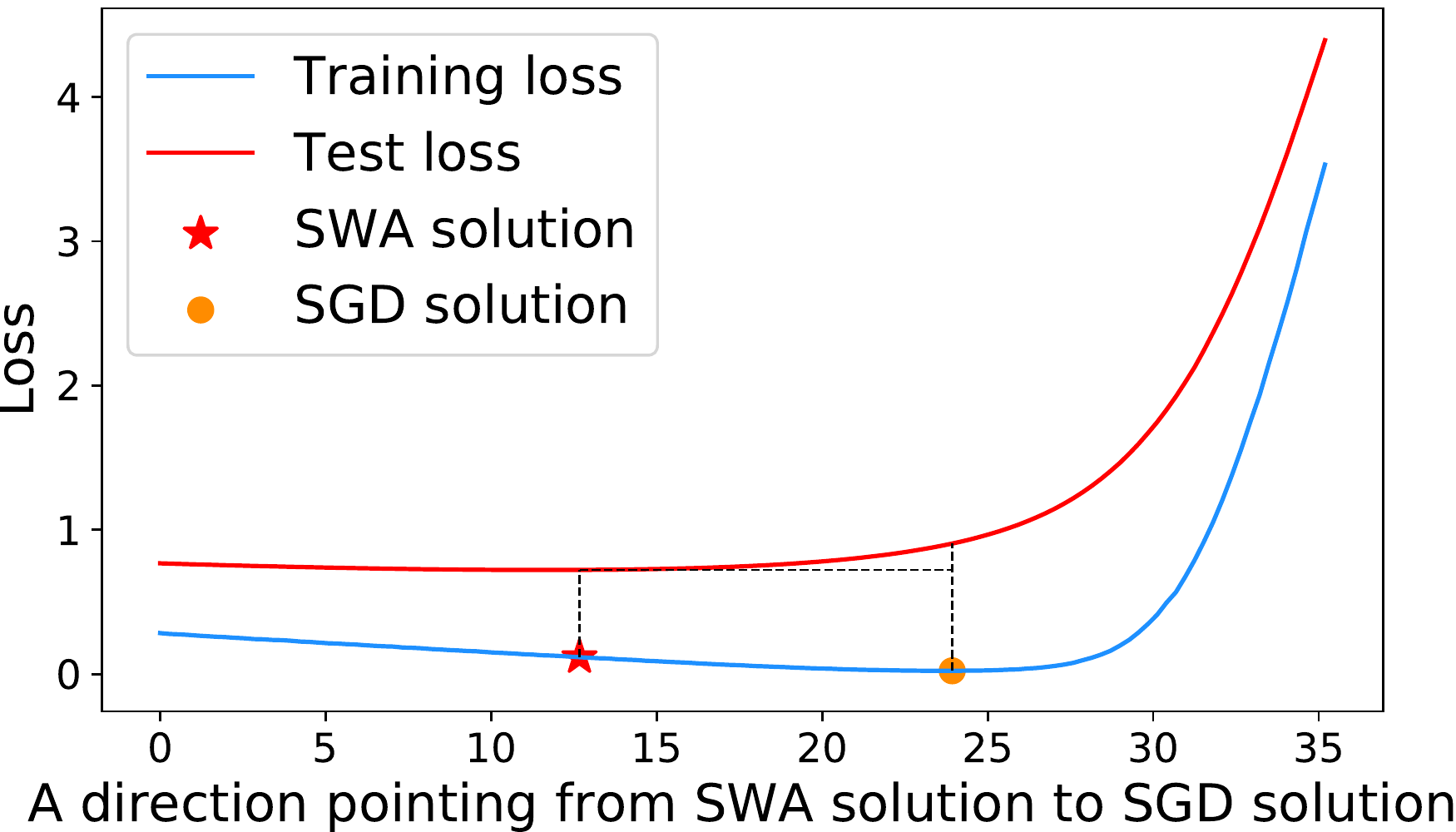}
		\caption{SWA solution and SGD solution interpolation (ResNet-164 on CIFAR100)}
		\label{fig:inter_swasgd_pic} 
	\end{figure}
	
	To further support our claim, we list our result in Table \ref{table 2:sgdafterswa}, from which we can observe that SGD solutions always have higher training accuracy,
	but worse test accuracy, compared to SWA solutions. This supports our claim in Theorem \ref{thm:dimd}, which states that a bias towards the flat sides of asymmetric valleys could help improve generalization, although it yields higher training error.

	\begin{table}[!htbp]
		\vspace{-0ex}
		\centering
		\small
		\caption{Training accuracy of various networks on CIFAR-100.}
		\label{table 2:sgdafterswa}
		\begin{tabular}{|l|l|l|l|}
			\hline
			\multirow{2}{*}{Network} & \multicolumn{2}{c|}{CIFAR-100}                                \\ \cline{2-3} 
			&   \multicolumn{1}{c|}{train} & \multicolumn{1}{c|}{test} \\ \hline
			ResNet-110-SWA        &               94.98\%                         &               78.94\%               \\ \hline
			ResNet-110-SGD       &          97.52\%                            &              78.29\%                \\ \hline
			ResNet-164-SWA        &         97.48\%                   &           80.69\%                   \\ \hline
			ResNet-164-SGD        &  99.12\%   &             76.56\%                 \\ \hline
			DenseNet-100-SWA          &     99.84\%                    &             72.29\%                 \\ \hline
			DenseNet-100-SGD         &  99.87\%                        &              71.46\%                \\ \hline
		\end{tabular}\\
		\vspace{-0ex}
	\end{table}

	\paragraph{Verifying Theorem \ref{thm:asym_avg}.} We further verify that averaging SGD solutions could create a bias towards the flat side in expectation for many other asymmetric directions, not just for the specific direction we discussed above. 
	
	We take a ResNet-110 trained on CIFAR-100 as an example. 
	Denote $\bu_{inter}$ as the unit vector pointing from the SGD solution to the SWA solution. We pick another unit random direction $\bu_{rand}$. Then, we use the direction $\bu_{inter}+\bu_{rand}$ to verify our claim. 
	
	\begin{figure}[htbp]
		\centering
		\includegraphics[width=.45\textwidth]{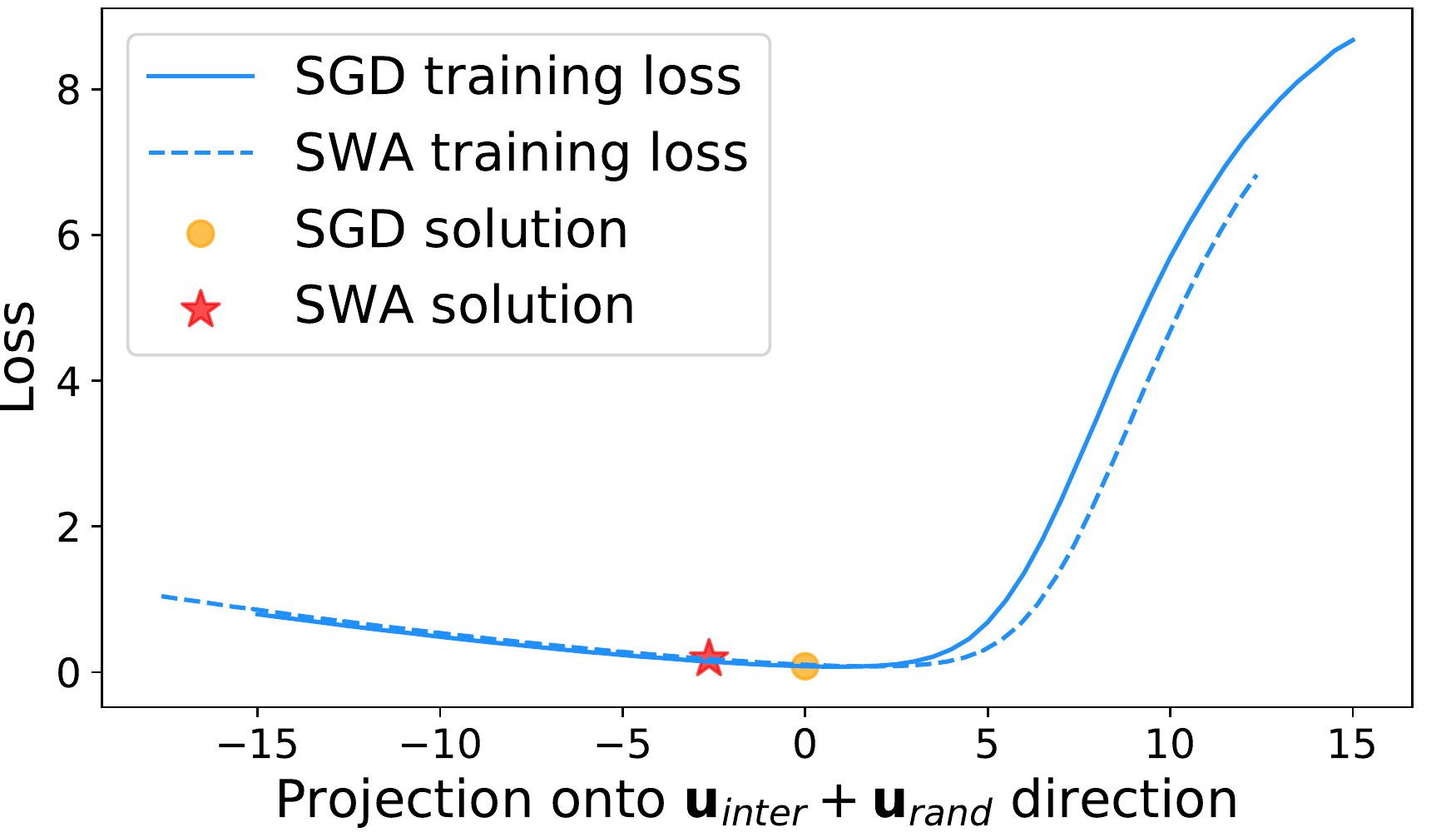}
		\caption{The average of SGD has a bias on flat side (ResNet-110 on CIFAR100)}
		\label{fig:high_dimension_find_110_1_C100} 
	\end{figure}
	
	The results are shown in Figure \ref{fig:high_dimension_find_110_1_C100}, from which we can observe that SWA has a bias on the flat side compared with the SGD solution. We create $10$ different random vectors for each network and each dataset, and similar observations can be made (see more examples in Appendix \ref{appendix:verify_high_dimension}).

	\subsection{Illusion case 2: large batch SGD}
	\label{subsec:largebatchtraining}
	\citet{largebatchtraining} observed that training with small batch size using SGD algorithm generalizes better than training with large batch size. They argue that it is because large batch SGD tends to converge to sharp minima, while small batch SGD generally converges to flat minima. Here we show that it may not be the case in practice. 
	
	We use a PreResNet-164 trained on CIFAR-100 as an example. We first running SGD with a batch size of 128 for 200 epochs to find a solution (denoted as \emph{Large batch solution}), and then contintue the training with batch size 32 for another 80 epoch to find a nearby solution (denoted as \emph{Small batch solution}).

	From the results shown in Figure \ref{fig:inter_large_small_batch_pic}, it is clear that the small batch solution has worse training accuracy but better test accuracy. Meanwhile, there is no 'bump' between these solutions which suggests they are in the same basin. Therefore, small batch SGD generalizes better because it could find a better biased solution in the asymmetric valley, not because it finds a different wider or flatter minimum. 
	
	\begin{figure}[ht]
		\centering
		\includegraphics[width=.4\textwidth]{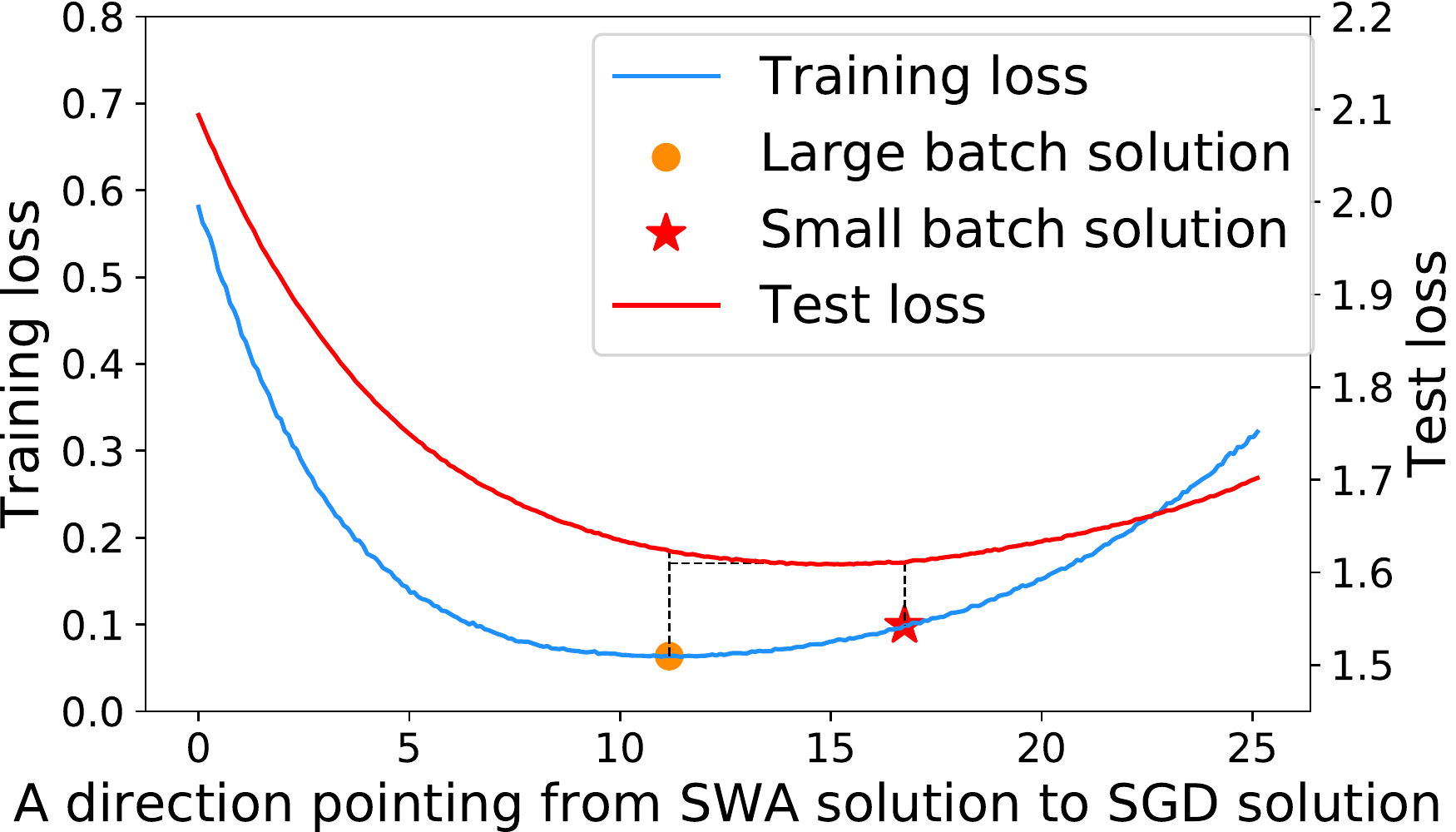}
		\caption{Large and small minibatch interpolation(batch size 128 to 32 of PreResNet-164 on CIFAR-100)} 
		\label{fig:inter_large_small_batch_pic} 
	\end{figure}
	
	\subsection{Illusion on the width of a minimum}
	\label{subsec:random_ray}
	We further point out that visualizing the ``width'' of a local minimum in a low-dimensional space may lead to illusive results. For example, one visualization technique \cite{swa} is showing how the loss changes along many random directions $\boldsymbol{v}_i$'s drawn from the $d$-dimensional Gaussian distribution.

	We take the large batch and small batch solutions from the previous subsection as our example. Figure \ref{fig:random_ray_largeandsmallbatch} visualizes the ``width'' of the two solutions using the method described above. From the figure, one may draw the conclusion that small batch training leads to a wider minimum compared to large batch training. However, as discussed in Subsection~\ref{subsec:largebatchtraining} these two solutions are actually from the same basin. In other words, the loss curvature near the two solutions looks different because they are located at different locations in an asymmetric valley, instead of being located at different local minima. Similar observation holds for SWA and SGD solutions, see Appendix \ref{sec:random_ray}.

	\begin{figure}[htbp]
		\centering
		\includegraphics[width=.45\textwidth]{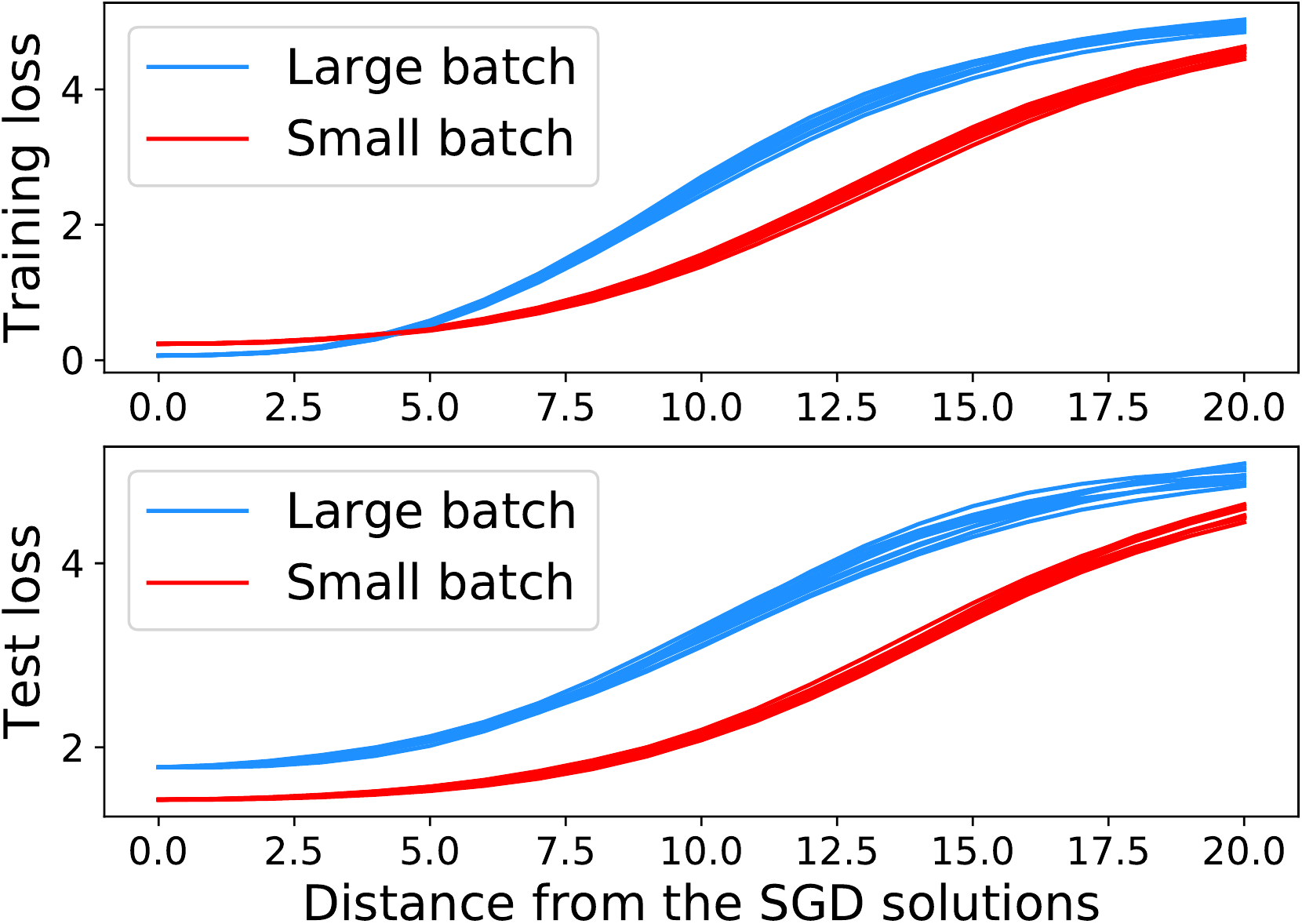}
		\caption{Random ray of large batch and small batch solution(ResNet-164 on CIFAR-100).} 
		\label{fig:random_ray_largeandsmallbatch} 
	\end{figure}

	
	\section{Batch Norm and Asymmetric Valleys} 
	\label{sec:bn}
	Preview sections have focused on defining \emph{what} are asymmetric valleys, and \emph{how} to leverage them for better generalization. In this section, we take a step forward to answer \emph{where} they originate, by showing empirical evidences that the Batch Normalization (BN) \cite{batchnorm} adopted by modern neural networks seems to be a major cause for asymmetric valleys.
	
	\paragraph{Directions on BN parameters are more asymmetric.}
	For a given SGD solution, if we take a random direction 
	where only the BN parameters have non-zero entries, and compare it with a random direction 
	where only the non-BN parameters have non-zero entries, 
	we observe that those BN-related directions are usually more asymmetric. 
	The result with ResNet-110 on CIFAR-10 is shown in Figure~\ref{fig:conv_bn_comp_res110_c10}. As we can see, the Non-BN direction is sharp on both sides, but BN direction is flat on one side, and sharp on the other side.
	We also conducted trials with different networks and datasets, and obtained similar results (see Appendix~\ref{sec:exploration_on_bn_drection}). 

	\begin{figure}[t]
		\centering
		\includegraphics[width=.45\textwidth]{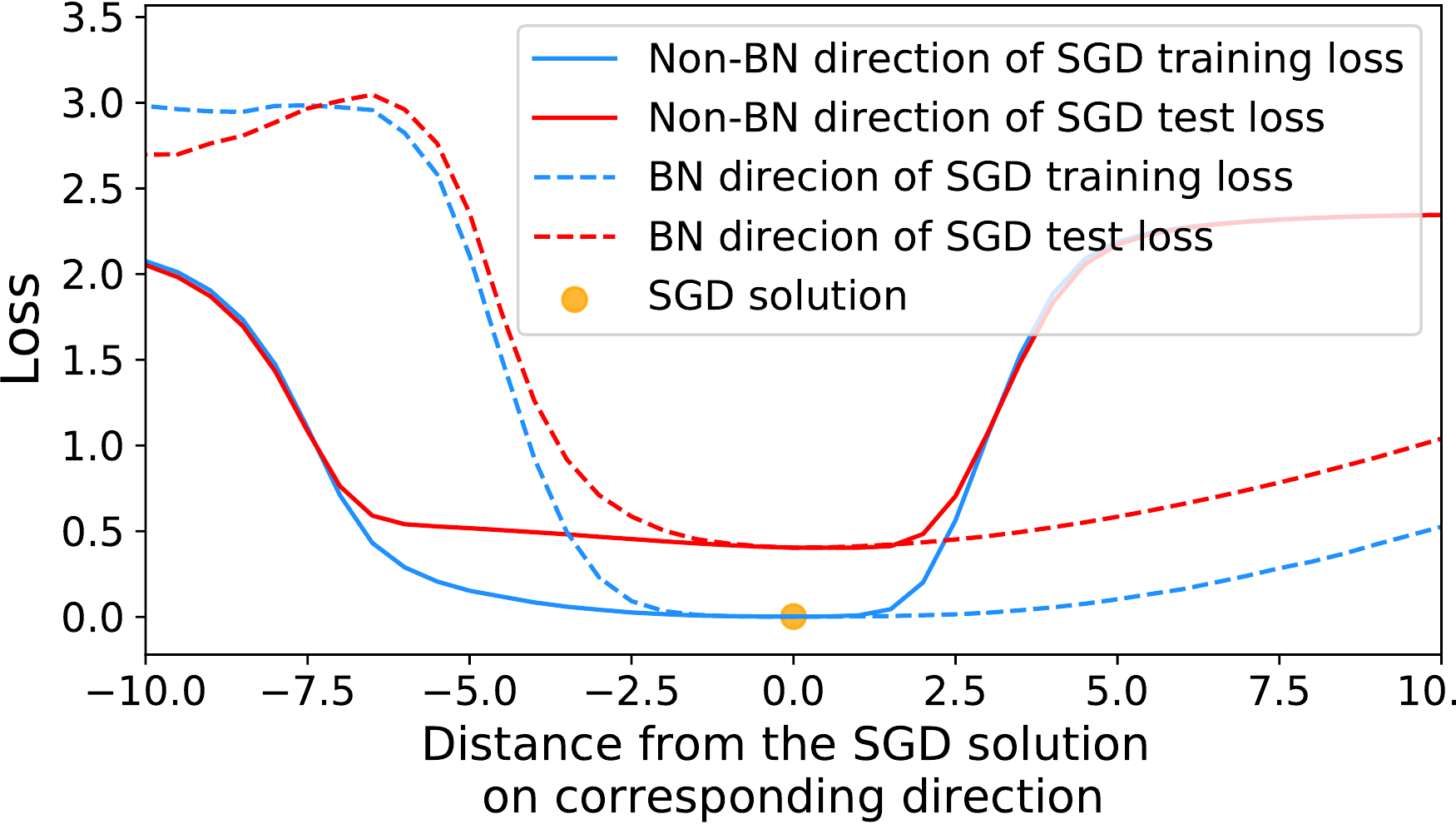}
		\caption{
			BN and Non-BN directions 
			through a local minimum of
			ResNet-110 on CIFAR-10. }
		\label{fig:conv_bn_comp_res110_c10}
	\end{figure}
	
	\paragraph{SGD averaging is more effective on BN parameters.} 
	By Theorem \ref{thm:dimd} and \ref{thm:asym_avg},  we know that 
	SGD averaging could lead to biased solutions on asymmetric directions with better generalization. 
	If BN indeed creates many asymmetric directions, can we improve the model performance by only averaging the weights of BN layers? 
	
	Note that BN parameters only constitute a small fraction of the total model parameters, e.g., 1.41\% in a ResNet-110. 
	In the follow experiment on ResNet-110 for CIFAR-10, we perform SGD averaging only on BN parameters, denoted as SWA-BN; and also averaging randomly selected non-BN parameters of the same amount (1.41\% of the total parameters), denoted as SWA-Non-BN. The results are shown in Figure \ref{fig:swa_bn_compare}. It can be observed that averaging only BN parameters (blue curve) is more effective than averaging non-BN parameters (green curve), although there is still a gap comparing to averaging all the weights (yellow curve).

	\begin{figure}[t]
		\centering
		\includegraphics[width=.45\textwidth]{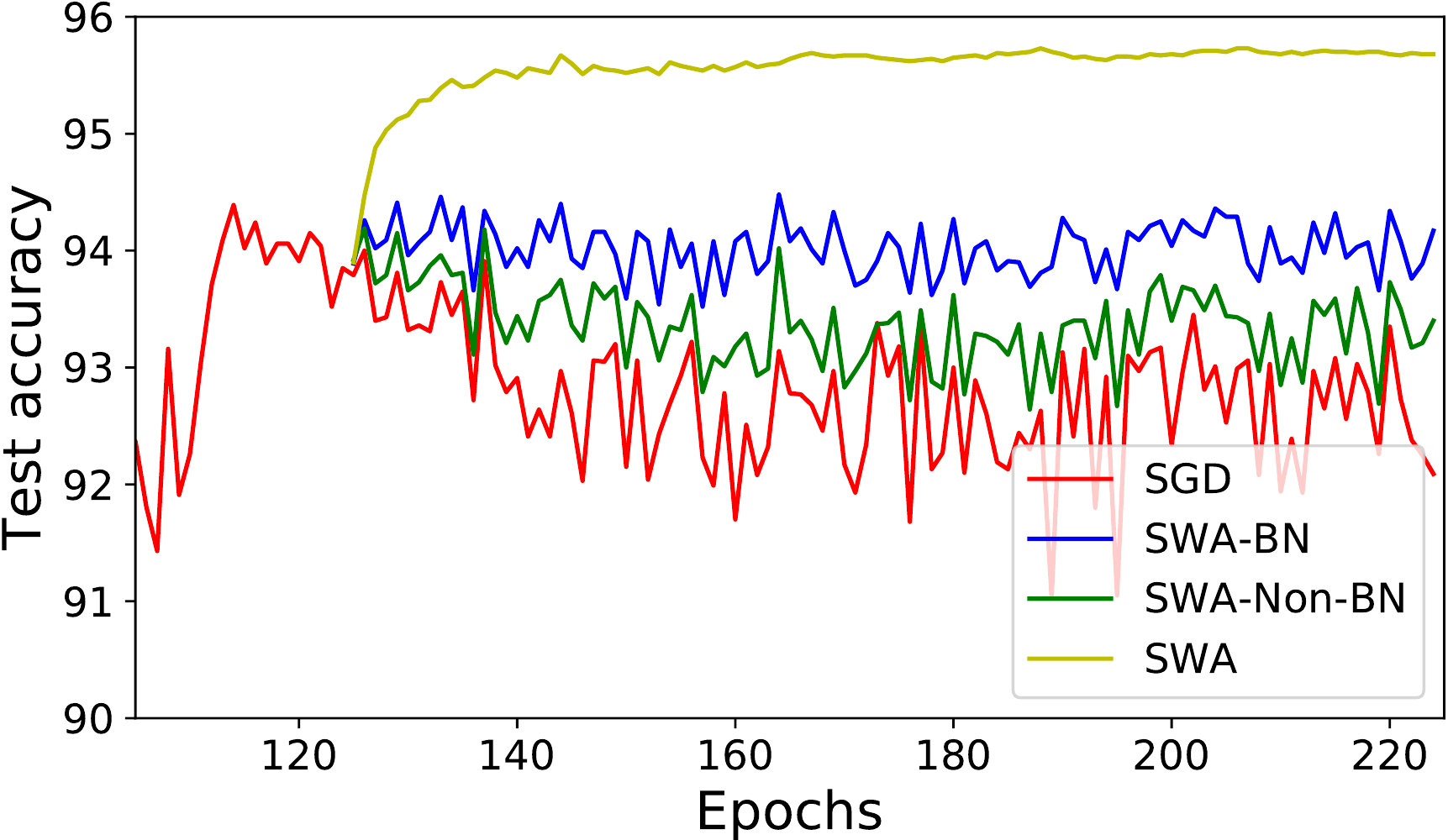}
		\caption{
			SGD averaging on BN parameters could give better test accuracy compared with SGD averaging on non-BN parameters.
		} 
		\label{fig:swa_bn_compare}
	\end{figure}

	Moreover, we also conduct experiments with two 8-layer ResNets on CIFAR-10, one with BN layers and one without. We choose shallow networks here as deeper models without BN can not be effectively trained.
	
	\begin{figure}[htbp]
		\centering
		\includegraphics[width=.45\textwidth]{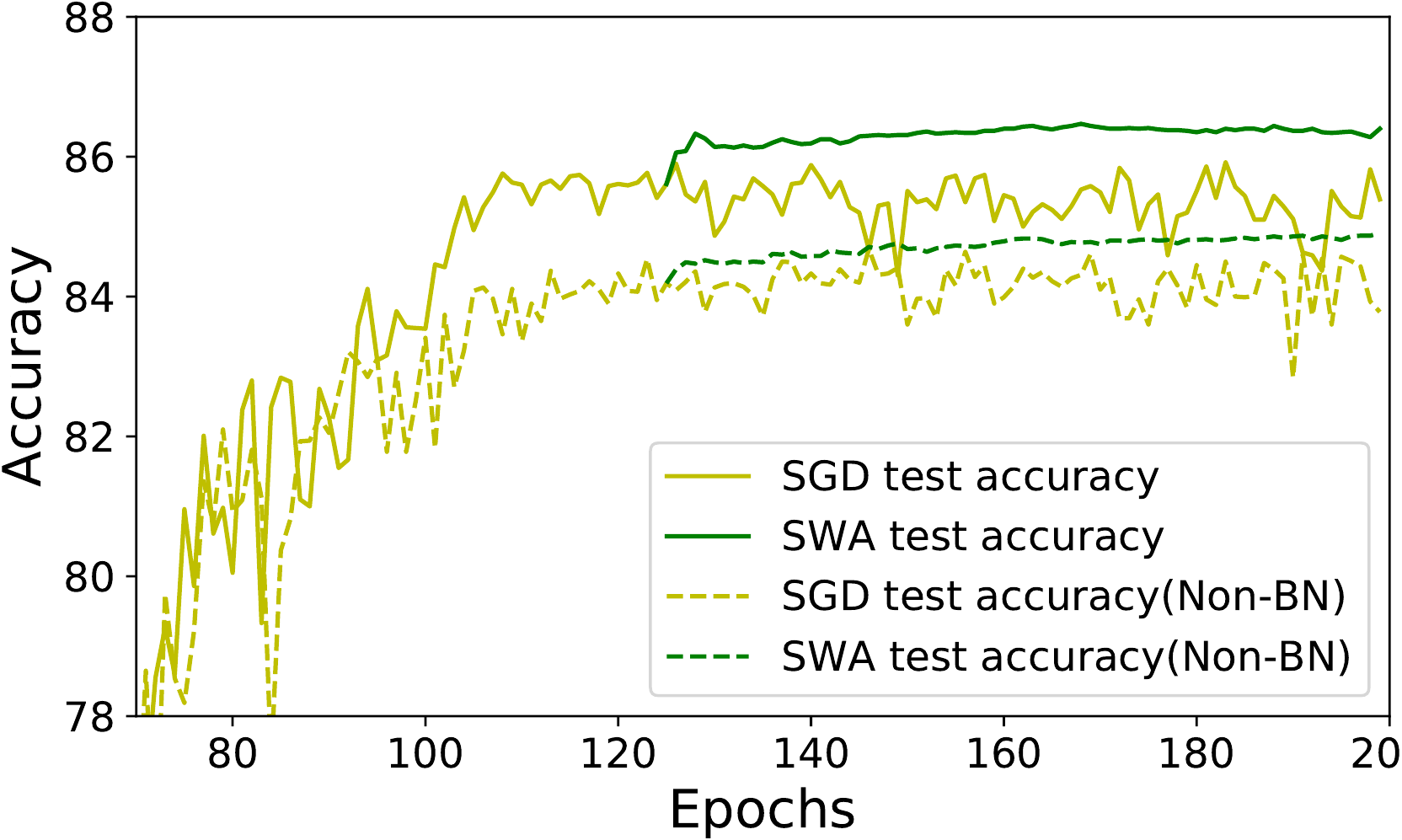}
		\caption{Test accuracy of ResNet-8 with and without BN layers, after running weight averaging (SWA).} 
		\label{fig:BN_compare}
	\end{figure}
	
	As shown in figure \ref{fig:BN_compare}, we start weight averaging at the $126$-th epoch. Although in both networks, we observe an improvement in test accuracy after averaging, it is clear that 
	the network with BN layers have larger improvement compared with the network without BN layers. 
	This again indicates that SGD averaging is more effective on BN parameters. 
	
	The results presented above are still quite preliminary. Understanding how the asymmetric valleys are formed in deep networks might be a valuable future research direction.

	
	\section{Conclusion}
	The width of solutions has been used to explain generalization. In this paper, we elaborate on these arguments, and show that width along \emph{Asymmetric Valleys}, where the loss may increase at different rates along two opposition directions, is especially important for explaining generalization.
	Based on a formal definition of asymmetric valley, we showed that a biased solution lying on the flat side of the valley generalizes better than the empirical minimizer. Further, it is proved that by averaging the points along the SGD trajectory naturally leads to such biased solution. 
	We have conducted extensive experiments with state-of-the-art deep models to verify our theorems. We hope this paper will strengthen our understanding on the loss landscape of deep neural networks, and inspire new theories and algorithms that further improve generalization.
	
	\bibliography{ref}
	\bibliographystyle{icml2019}
	
	\clearpage
	
	\appendix
	\onecolumn

	\section{Additional Figures for Section \ref{subsec:find_asym}: Asymmetric Directions}
	\label{appendix_asy_valleys}
	See Figure \ref{missing:SGD_asym_res164_c10},
	Figure \ref{missing:SGD_asym_dense_c10}, 
	Figure \ref{missing:SGD_res110_C100}, 
	Figure \ref{missing:SGD_res164_C100}, 
	and Figure \ref{missing:SGD_densenet_C100}.
	
	\begin{figure}[htbp]
		\centering
		\includegraphics[width=.45\textwidth]{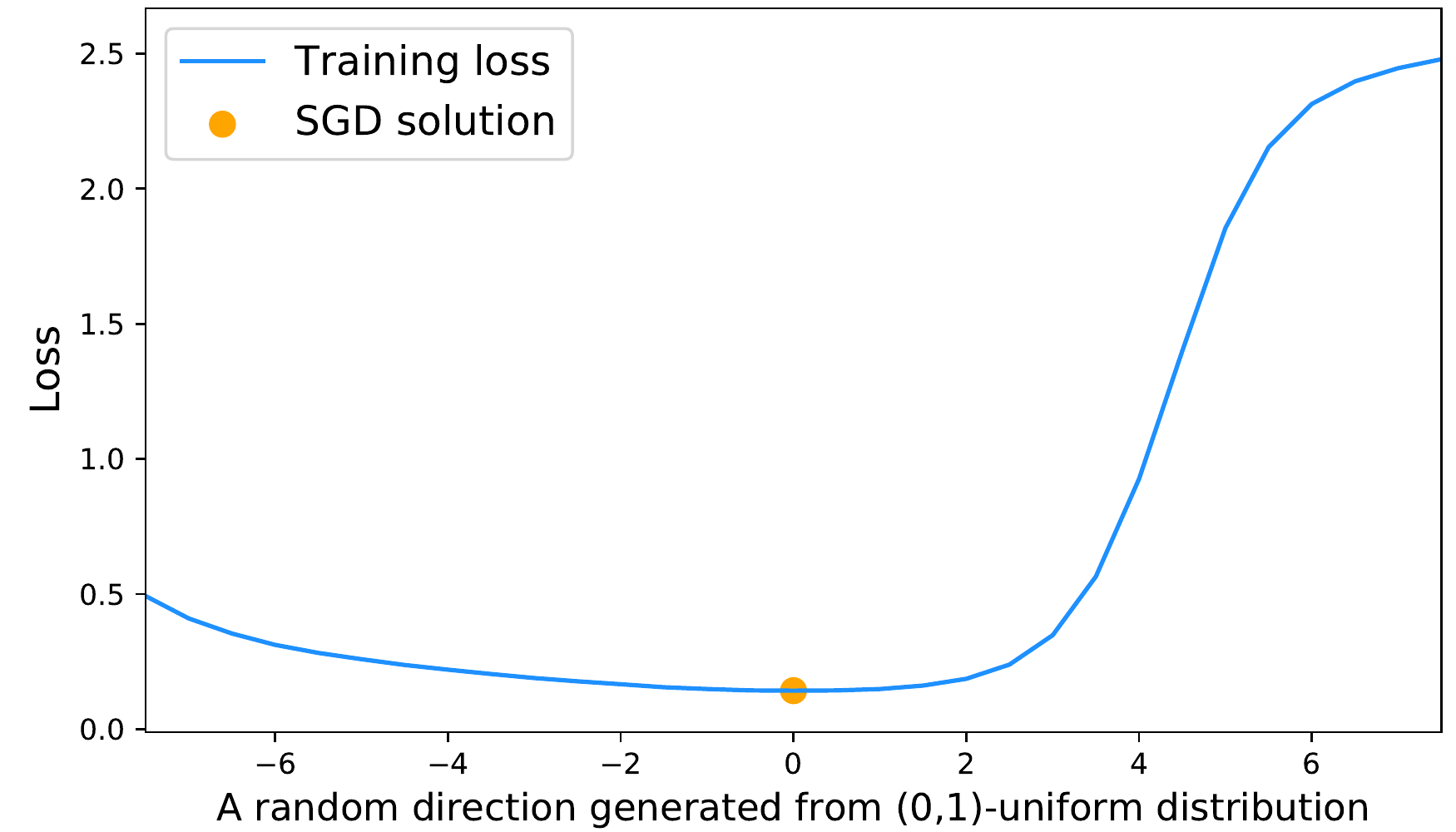}
		\caption{Asymmetric direction for a solution of ResNet-164 on CIFAR-10. $(r,p,c, \zeta)=(4.0, 0.0270, 12.1, 2.0)$. }
		\label{missing:SGD_asym_res164_c10} 
	\end{figure}
	
	\begin{figure}[htbp]
		\centering
		\includegraphics[width=.45\textwidth]{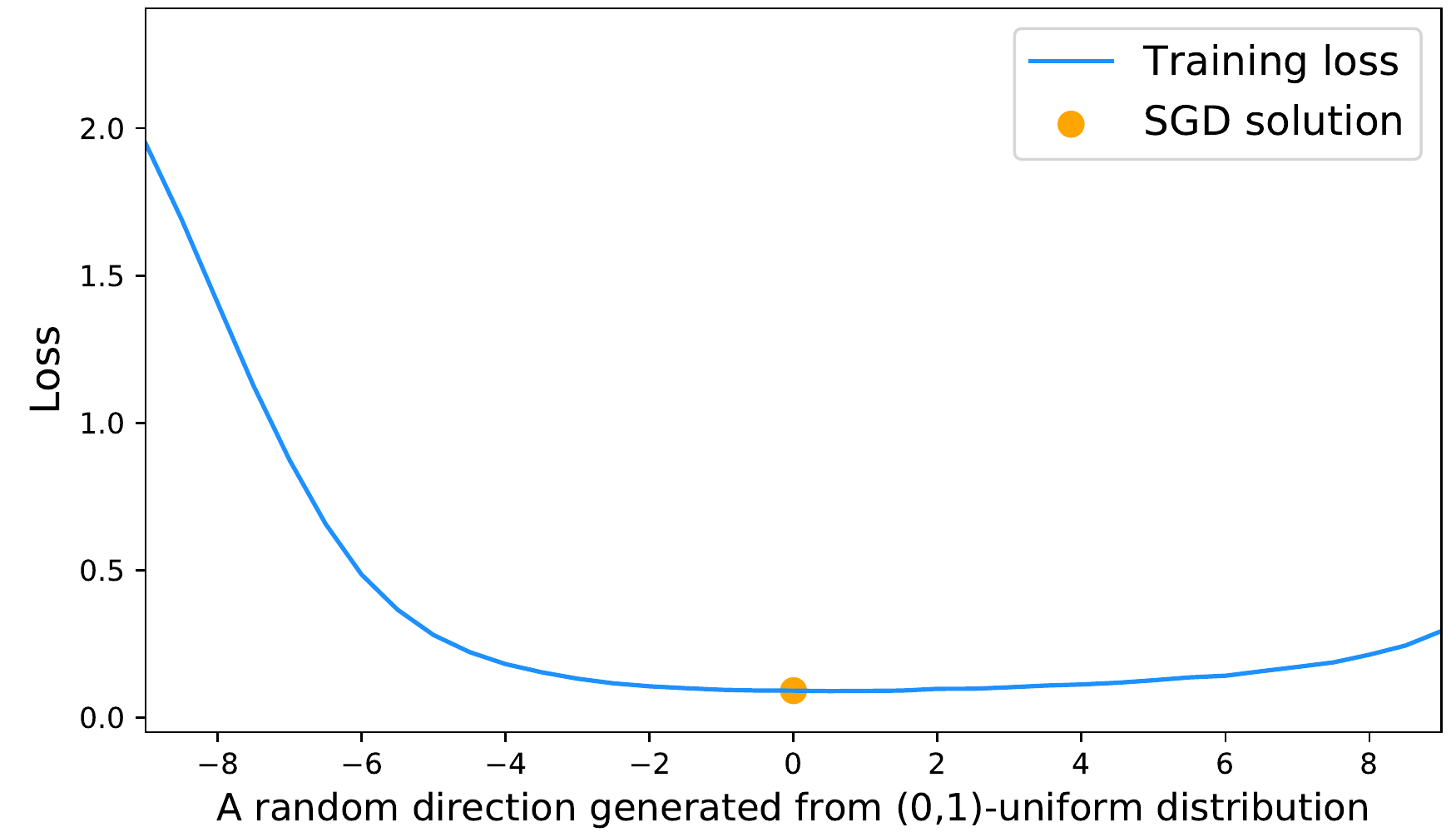}
		\caption{Asymmetric direction for a solution of DenseNet-100 on CIFAR-10. $(r,p,c, \zeta)=(5.0,0.00022,452.5,1.5)$. }
		\label{missing:SGD_asym_dense_c10} 
	\end{figure}

	\begin{figure}[htbp]
		\centering
		\includegraphics[width=.45\textwidth]{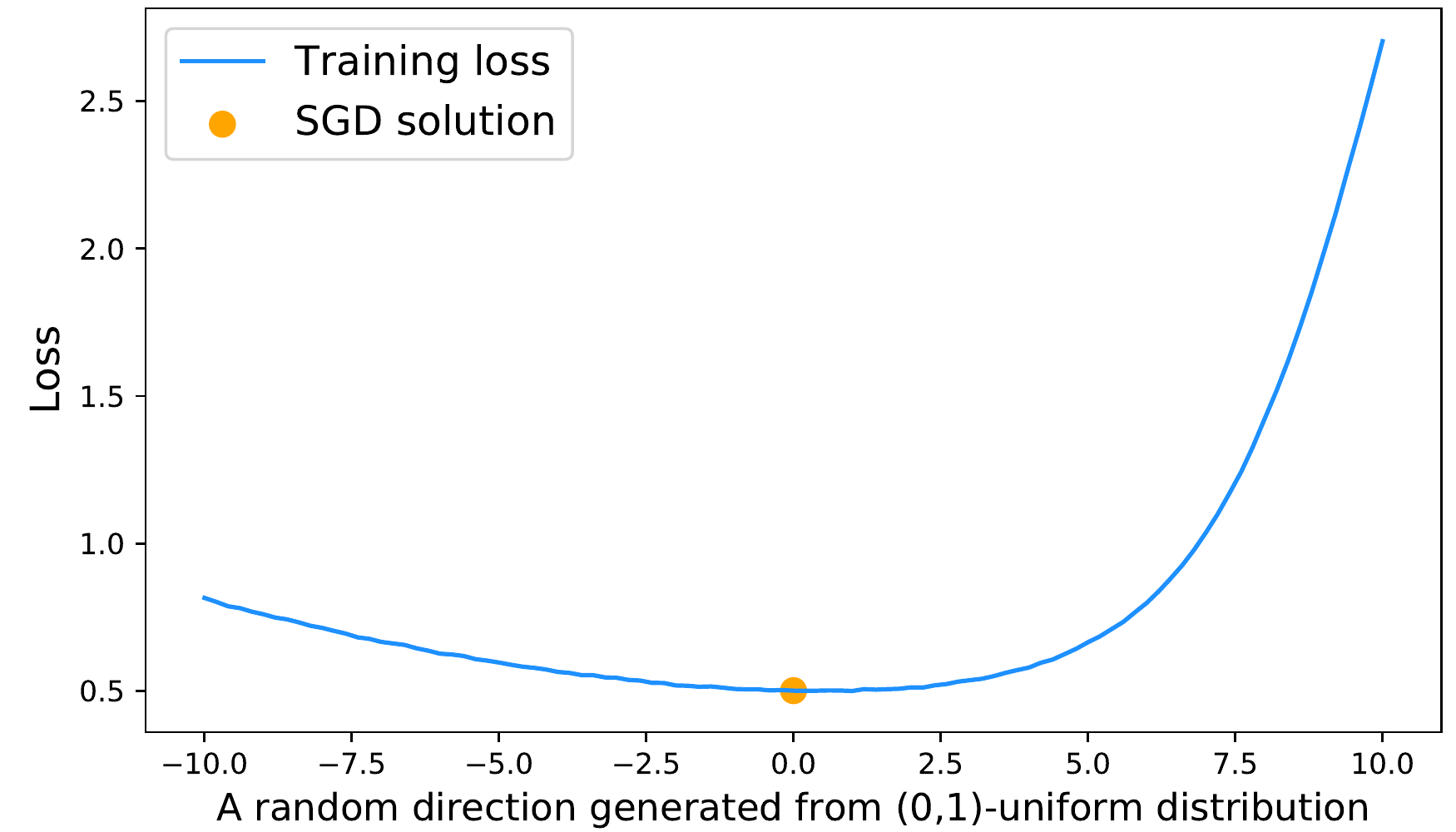}
		\caption{Asymmetric direction for a solution of ResNet-110 on CIFAR-100. $(r,p,c, \zeta)=(8.0,0.032,5.52,4.0)$.}
		\label{missing:SGD_res110_C100} 
	\end{figure}

	\begin{figure}[htbp]
		\centering
		\includegraphics[width=.45\textwidth]{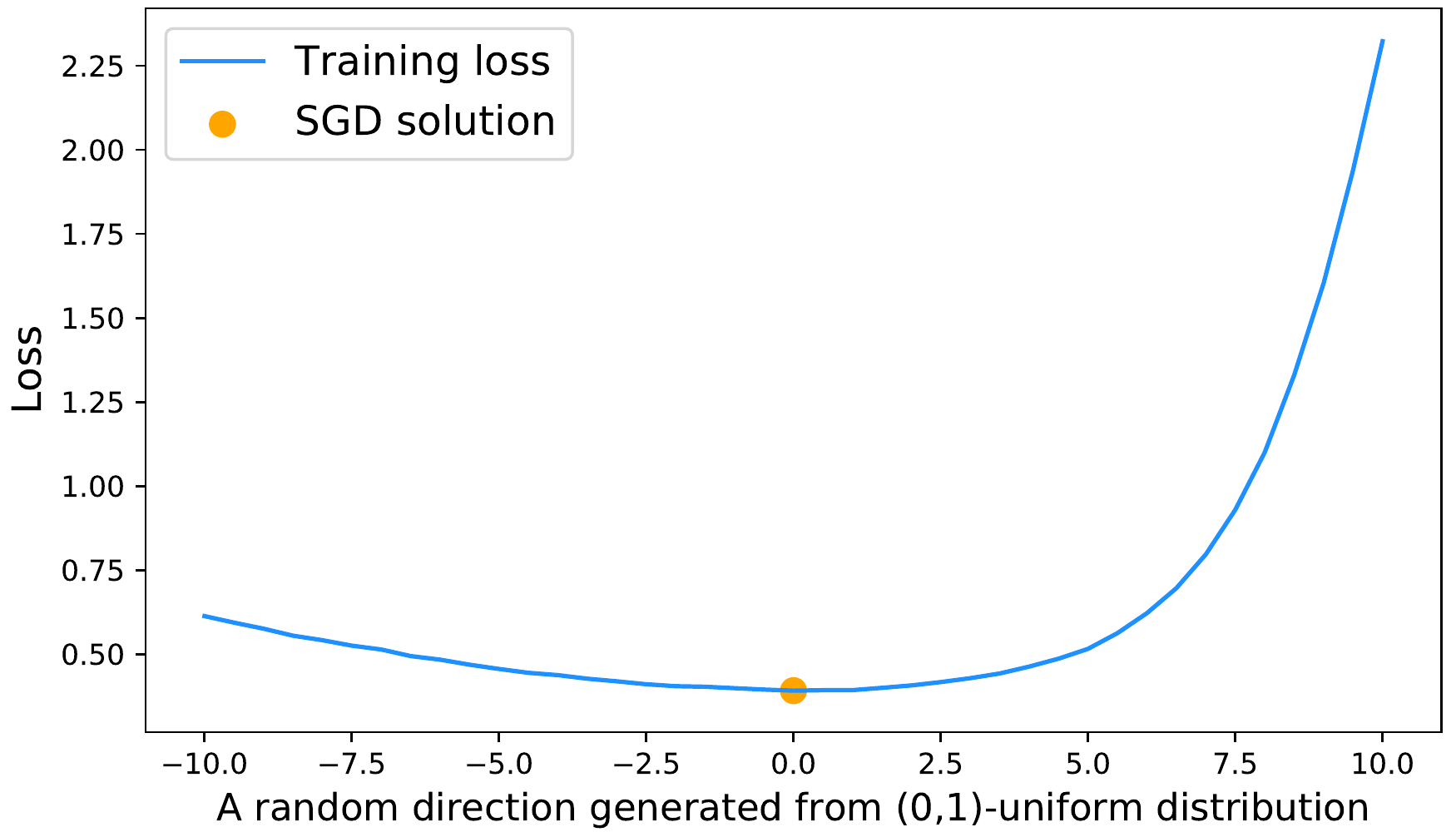}
		\caption{Asymmetric direction for a solution of ResNet-164 on CIFAR-100. $(r,p,c, \zeta)=(7.0,0.0175,7.66,3.0)$.}
		\label{missing:SGD_res164_C100} 
	\end{figure}
	
	\begin{figure}[ht]
		\centering
		\includegraphics[width=.45\textwidth]{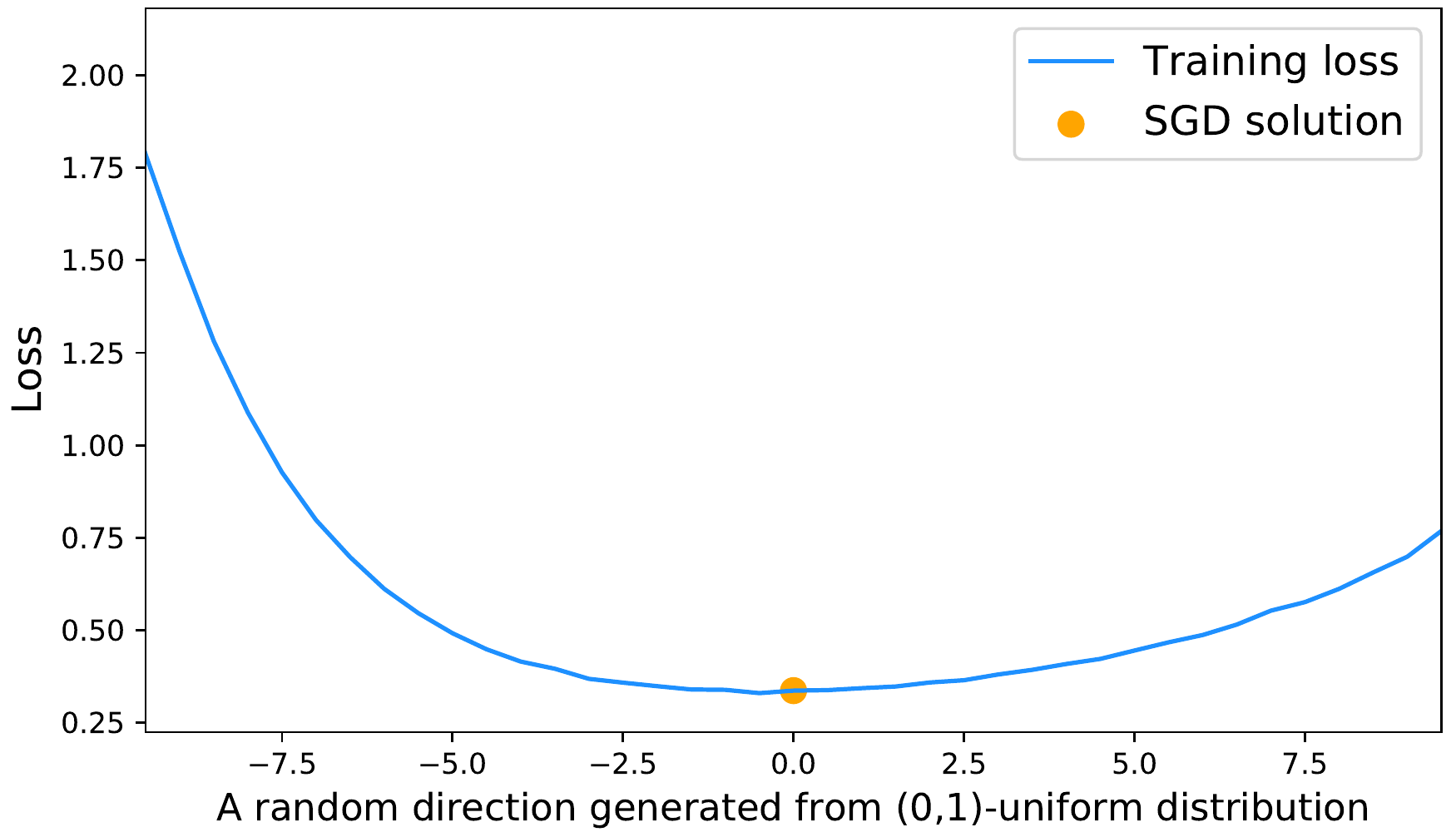}
		\caption{Asymmetric direction for a solution of DenseNet-100 on CIFAR-100. $(r,p,c, \zeta)=(5,0.0198,2.259,2.5)$.}
		\label{missing:SGD_densenet_C100} 
	\end{figure}

	\section{Missing Proof for Theorem \ref{thm:dimd}}
	\label{appendix:proof_dimd}
	\begin{proof}
		Since $\boldsymbol{\delta}$ has $2^d$ possible value for a given $\bar {\boldsymbol{\delta}}$, we can use an integer $j\in \{0, \cdots, 2^d-1\}$ to represent each value. When writing $j$ in binary, its $i$-th digit represents whether ${\delt}_i=\bdi $ (equal to $1$) or ${\delt}_i= -\bdi$ (equal to $0$). We use $j\wedge 2^i$ to represent the bitwise AND operator between $j$ and $2^i$, which equals $0$ if the $i$-th digit of $j$ is $0$. 
		
		To prove our theorem, it suffices to show that
		for any $i\in [k]$, 
		\begin{align}
		\E_{\boldsymbol{\delta}}\LL\left (\boldsymbol{\hat w}^*
		+\sum_{i_0=1}^{i-1}\bl_{i_0}  \bu_{i_0}
		\right )
		-\E_{\boldsymbol{\delta}}\LL\left (\boldsymbol{\hat w}^* +\sum_{i_0=1}^i \bl_{i_0}  \bu_{i_0}\right )
		\geq  (c_i-1)\bli p_i/2
		-2\xi>0
		\label{inequ:step_by_step}
		\end{align}
		If (\ref{inequ:step_by_step}) is true, it suffices to take summation over $i$ on both sides, and we will get our conclusion. Therefore, below we will prove (\ref{inequ:step_by_step}).
		
		\begin{align}
		&\E_{\boldsymbol{\delta}}\LL\left (\boldsymbol{\hat w}^*
		+\sum_{i_0=1}^{i-1}\bl_{i_0}  \bu_{i_0}
		\right )-\min_{\boldsymbol{w}} \LL(\boldsymbol{w})  
		+ \min_{\boldsymbol{w}} \hat \LL(\boldsymbol{w}) \nonumber 
		\\
		=&\E_{\boldsymbol{\delta}}\LL'\left (\boldsymbol{\hat w}^*
		+\sum_{i_0=1}^{i-1}\bl_{i_0}  \bu_{i_0}
		\right ) \nonumber\overset{\circled{1}}\geq  \frac{1}{2^d}\sum_{j=0}^{2^d-1}
		\hat \LL\left (\boldsymbol{\hat w}^* 
		+\sum_{i_0=1}^{i-1}\bl_{i_0}  \bu_{i_0}+ \boldsymbol{\delta}^j\right )-\xi  \nonumber \\
		= & \frac{1}{2^d}\sum_{
			\substack{j=0\\
				j\wedge 2^i=0 } 
		}^{2^d-1}
		\left [ \hat\LL\left (\boldsymbol{\hat w}^* +
		\sum_{i_0=1}^{i-1}\bl_{i_0}  \bu_{i_0}+
		\boldsymbol{\delta}^{j }\right )
		+
		\hat\LL\left (\boldsymbol{\hat w}^* + 
		\sum_{i_0=1}^{i-1}\bl_{i_0}  \bu_{i_0}+
		\boldsymbol{\delta}^{j + 2^i}\right ) \right ] -\xi\label{eqn:split_into_pairs}
		\end{align}
		Where $\circled{1}$ holds by Assumption \ref{assump:random_shift_assumption}, and 
		the fact that $	\|\sum_{i_0=1}^{i-1}\bl_{i_0}  \bu_{i_0}\|_2\leq \|\bl\|_2=R$.
		For every $j$ s.t. $j\wedge2^i=0$, 
		\begin{align*}
		&\bhw +\sum_{i_0=1}^{i}\bl_{i_0}  \bu_{i_0}+ \boldsymbol{\delta}^j\\
		=&
		\bhw + \sum_{i_0=1}^{i}\bl_{i_0}  \bu_{i_0}+\boldsymbol{\delta}^j
		+
		\langle 
		\boldsymbol{\delta}^j, \bu^i \rangle \bu^i- \langle 
		\boldsymbol{\delta}^j, \bu^i \rangle \bu^i\\ 
		=&
		\bhw + \sum_{i_0=1}^{i-1}\bl_{i_0}  \bu_{i_0}+\boldsymbol{\delta}^j
		-
		\bdi  \bu^i  - \langle 
		\boldsymbol{\delta}^j, \bu^i \rangle \bu^i+\bli \bu^i \\ 
		= &
		\bhw + \sum_{i_0=1}^{i-1}\bl_{i_0}  \bu_{i_0}+\boldsymbol{\delta}^j
		- \langle 
		\boldsymbol{\delta}^j, \bu^i \rangle \bu^i+
		(\bli-\bdi )\bu^i 
		\end{align*}
		Since $\|\sum_{i_0=1}^{i-1}\bl_{i_0}  \bu_{i_0}\|_2\leq \|\bl\|_2$, $\|\delta^j\|_2=\|\bd\|_2$,
		we know that $\forall j, 
		\sum_{i_0=1}^{i-1}\bl_{i_0}  \bu_{i_0}+
		\boldsymbol{\delta}^j\in \B(R')$.
		By Assumption \ref{assump:Locally_identical_assumption},  for every $i\in [k]$,  $\bu^i$ is asymmetric with respect to $\bhw + 
		\sum_{i_0=1}^{i-1}\bl_{i_0}  \bu_{i_0}+
		\boldsymbol{\delta}^j- \langle 
		\boldsymbol{\delta}^j, \bu^i \rangle \bu^i 
		$. 
		Since $\bli\leq \bdi - \zeta$, we have
		$ \bli-\bdi  < - \zeta$. By the definition of asymmetric direction, we know 
		\begin{align}
		\hat \LL\left (
		\bhw +
		\sum_{i_0=1}^{i-1}\bl_{i_0}  \bu_{i_0}
		+ \boldsymbol{\delta}^j\right )
		\geq
		\hat \LL\left (
		\bhw 
		+\sum_{i_0=1}^{i}\bl_{i_0}  \bu_{i_0}	
		+ \boldsymbol{\delta}^j
		\right 	)
		+ c_i \bli p_i
		\label{inequ:left}  
		\end{align}
		
		Similarly, 
		\begin{align*}
		&\bhw
		+\sum_{i_0=1}^{i}\bl_{i_0}  \bu_{i_0}	
		+ \boldsymbol{\delta}^{j + 2^i}\\
		=&
		\bhw 
		+\sum_{i_0=1}^{i-1}\bl_{i_0}  \bu_{i_0}	
		+ \boldsymbol{\delta}^{j + 2^i}
		+
		\langle 
		\boldsymbol{\delta}^{j + 2^i}, \bu^i \rangle \bu^i- \langle 
		\boldsymbol{\delta}^{j + 2^i}, \bu^i \rangle \bu^i+\bli \bu^i \\ 
		=&
		\bhw
		+\sum_{i_0=1}^{i-1}\bl_{i_0}  \bu_{i_0}	
		+ \boldsymbol{\delta}^{j + 2^i}
		- \langle 
		\boldsymbol{\delta}^{j + 2^i}, \bu^i \rangle \bu^i+(\bdi+ \bli) \bui  
		\end{align*}
		Since $\bli\leq r-\bdi  $, we have 
		$\bdi+\bli\leq r$. Therefore, 
		\begin{align}
		\hat \LL\left (
		\bhw 	+\sum_{i_0=1}^{i-1}\bl_{i_0}  \bu_{i_0}	+ \boldsymbol{\delta}^{j+2^i}\right )
		\geq
		\hat \LL\left (
		\bhw 	+\sum_{i_0=1}^{i}\bl_{i_0}  \bu_{i_0}	+ \boldsymbol{\delta}^{j+2^i}
		\right 	)
		-\bli p_i  
		\label{inequ:right}
		\end{align}
		
		Combining (\ref{inequ:left}) and (\ref{inequ:right}), we have,
		\begin{align*}
		(\ref{eqn:split_into_pairs})\geq  & \frac{1}{2^d}\sum_{
			\substack{j=0\\
				j\wedge 2^i=0 } 
		}^{2^d-1}
		\left [ \hat\LL\left (\boldsymbol{\hat w}^* 
		+\sum_{i_0=1}^{i}\bl_{i_0}  \bu_{i_0}	
		+ \boldsymbol{\delta}^{j}\right )
		+c_i\bli p_i+
		\hat\LL\left (\boldsymbol{\hat w}^* 
		+\sum_{i_0=1}^{i}\bl_{i_0}  \bu_{i_0}	+
		\boldsymbol{\delta}^{j + 2^i}\right ) -\bli p_i\right ] 
		-\xi \\
		= & \frac{1}{2^d}\sum_{
			j=0}^{2^d-1}
		\left [ \hat\LL\left (\boldsymbol{\hat w}^* 
		+\sum_{i_0=1}^{i}\bl_{i}  \bu_{i_0}	+ \boldsymbol{\delta}^{j}\right )
		\right ] +(c_i-1)\bli p_i/2
		-\xi \\
		\overset{\circled{2}}\geq & 
		\E_{\boldsymbol{\delta}}\LL'\left (\boldsymbol{\hat w}^* 
		+\sum_{i_0=1}^i \bl_{i_0}  \bu_{i_0}	\right )
		+ (c_i-1)\bli p_i/2
		-2\xi\\
		=&  \E_{\boldsymbol{\delta}}\LL\left (\boldsymbol{\hat w}^* +\sum_{i_0=1}^i \bl_{i_0}  \bu_{i_0}\right )
		-  \min_{\boldsymbol{w}} \LL(\boldsymbol{w})  
		+ \min_{\boldsymbol{w}} \hat \LL(\boldsymbol{w}) 
		+ (c_i-1)\bli p_i/2
		-2\xi
		\end{align*}
		
		Where $\circled{2}$ holds by Assumption \ref{assump:random_shift_assumption} and the fact that $\|\sum_{i_0=1}^i \bl_{i_0}  \bu_{i_0}\|_2\leq \|\bl\|_2=R$. 
		That means, 
		\begin{align*}
		\E_{\boldsymbol{\delta}}\LL\left (\boldsymbol{\hat w}^*
		+\sum_{i_0=1}^{i-1}\bl_{i_0}  \bu_{i_0}	
		\right )
		\geq  \E_{\boldsymbol{\delta}}\LL\left (\boldsymbol{\hat w}^* 	+\sum_{i_0=1}^{i}\bl_{i_0}  \bu_{i_0}	\right )
		+ (c_i-1)\bli p_i/2
		-2\xi
		>0
		\end{align*}
		Where the last inequality holds as
		$\bli > \frac{4\xi }{(c_i-1)p_i}$.
		
	\end{proof}
	\section{Additional Figures for Section \ref{sec:verify_assump}: Shift Exists Empirically}
	\label{appendix_shift}
	See Figure \ref{missing:shift_on_asym_dense100c100},  Figure \ref{missing:shift_on_asym_res164c10}, and 
	Figure  \ref{missing:shift_on_sym_res110c100}.
	\begin{figure}[htbp]
		\centering
		\includegraphics[width=.4\textwidth]{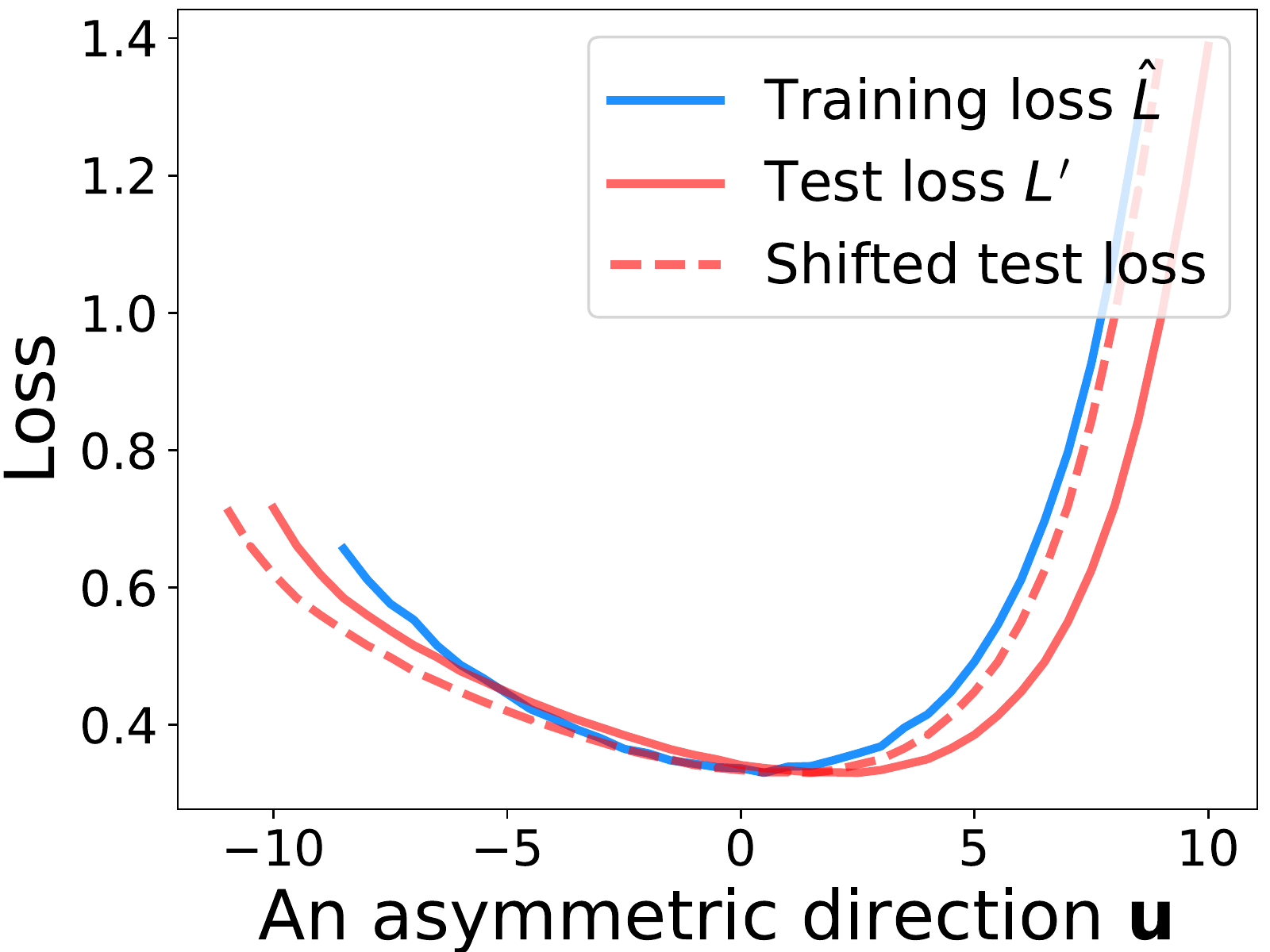}
		\caption{Shift on asymmetric direction (DenseNet-100 on CIFAR-100), $\xi_{\boldsymbol{\delta}=1}\!=\!0.119$, $\xi_{\boldsymbol{\delta}=0}\!=\!0.439$}
		\label{missing:shift_on_asym_dense100c100} 
	\end{figure}
	
	\begin{figure}[htbp]
		\centering
		\includegraphics[width=.4\textwidth]{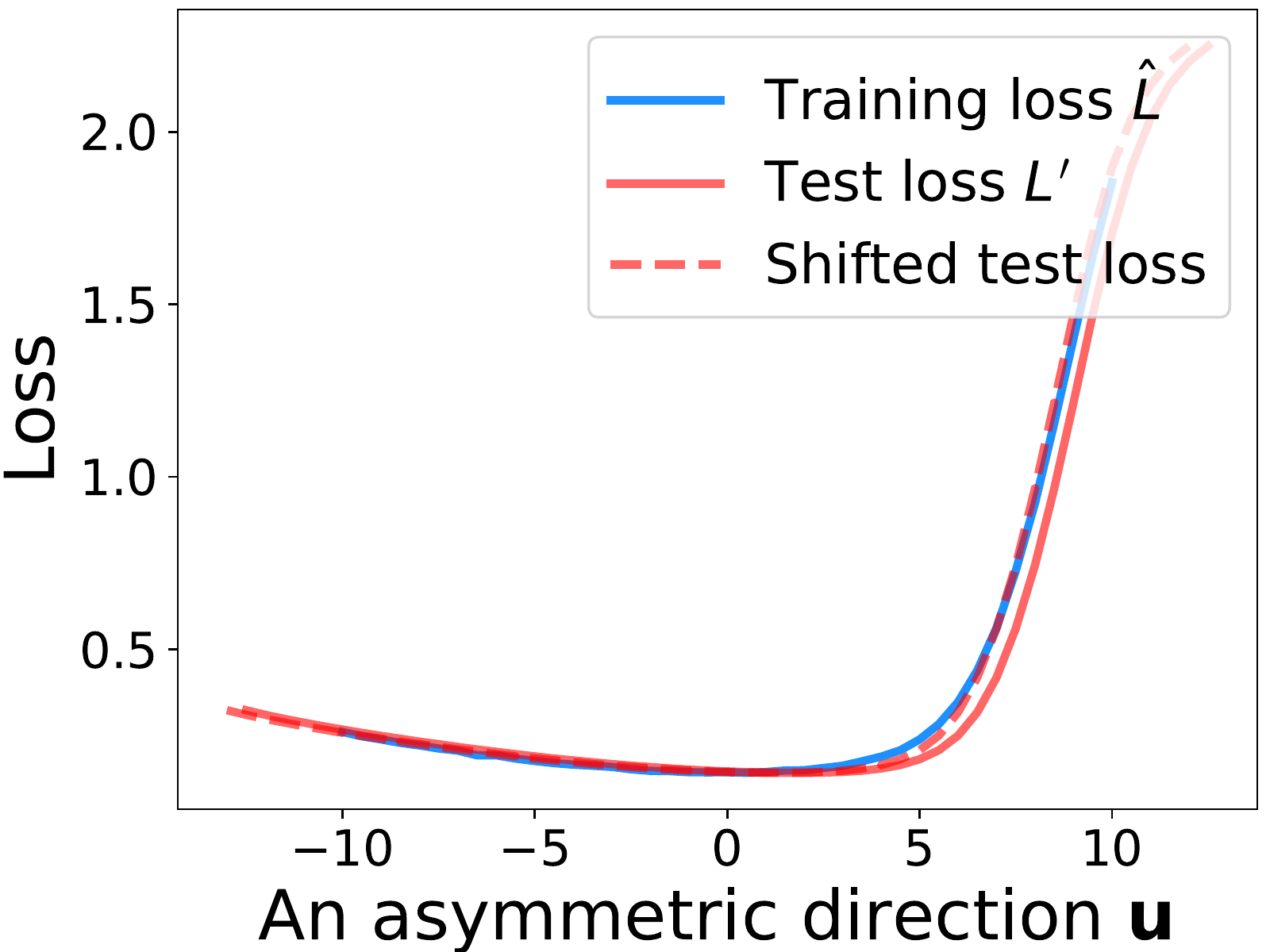}
		\caption{Shift on asymmetric direction (ResNet-164 on CIFAR-10), $\xi_{\boldsymbol{\delta}=0.5}\!=\!0.0699$, $\xi_{\boldsymbol{\delta}=0}\!=\!0.189$}
		\label{missing:shift_on_asym_res164c10} 
	\end{figure}

	\begin{figure}[htbp]
		\centering
		\includegraphics[width=.4\textwidth]{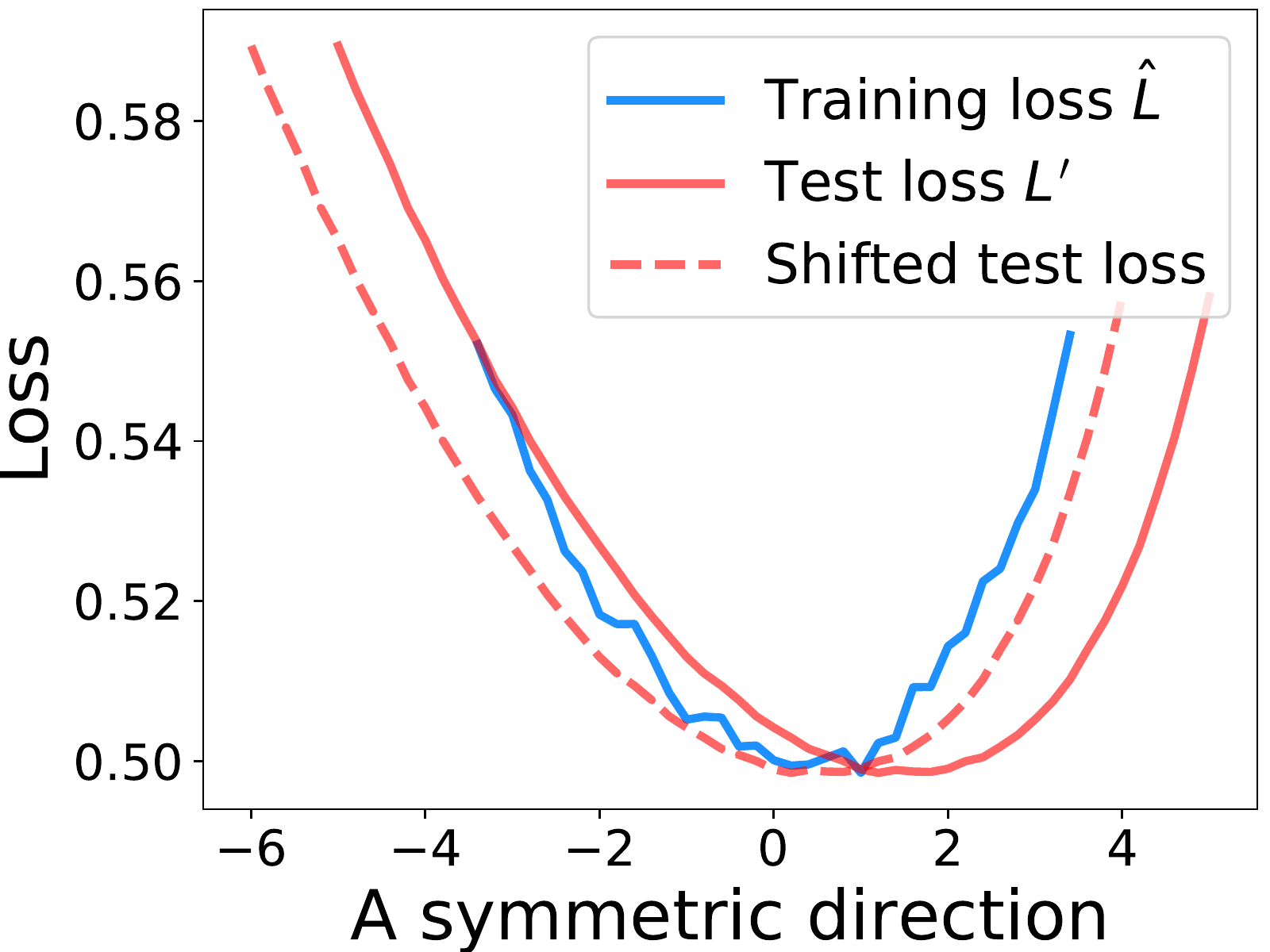}
		\caption{Shift on symmetric direction (ResNet-110 on CIFAR-100), $\xi_{\boldsymbol{\delta}=1}\!=\!0.0197$, $\xi_{\boldsymbol{\delta}=0}\!=\!0.0431$}
		\label{missing:shift_on_sym_res110c100} 
	\end{figure}

	\section{Additional Figures in Section \ref{sec:avg_is_good}: Averaging Works For Symmetric Case}
	\label{appendix:other_sgd_pattern}
	If the function is symmetric, there are two possible cases, as we show in Figure \ref{missing:sym_soft} and Figure \ref{missing:Symmetric perturbation}. 
	On one hand, if the function is flat, SGD is likely to stay on one side of the function along the trajectory, and the average will have bias on that side. 
	On the other hand, if the function is sharp, SGD is likely to oscillate between the two sides, and therefore the average of the iterates will concentrate around the center.  In both cases, SGD averaging could help to create bias on flat sides or to denoise.

	\begin{figure}[htbp]
		\centering
		\includegraphics[width=.40\textwidth]{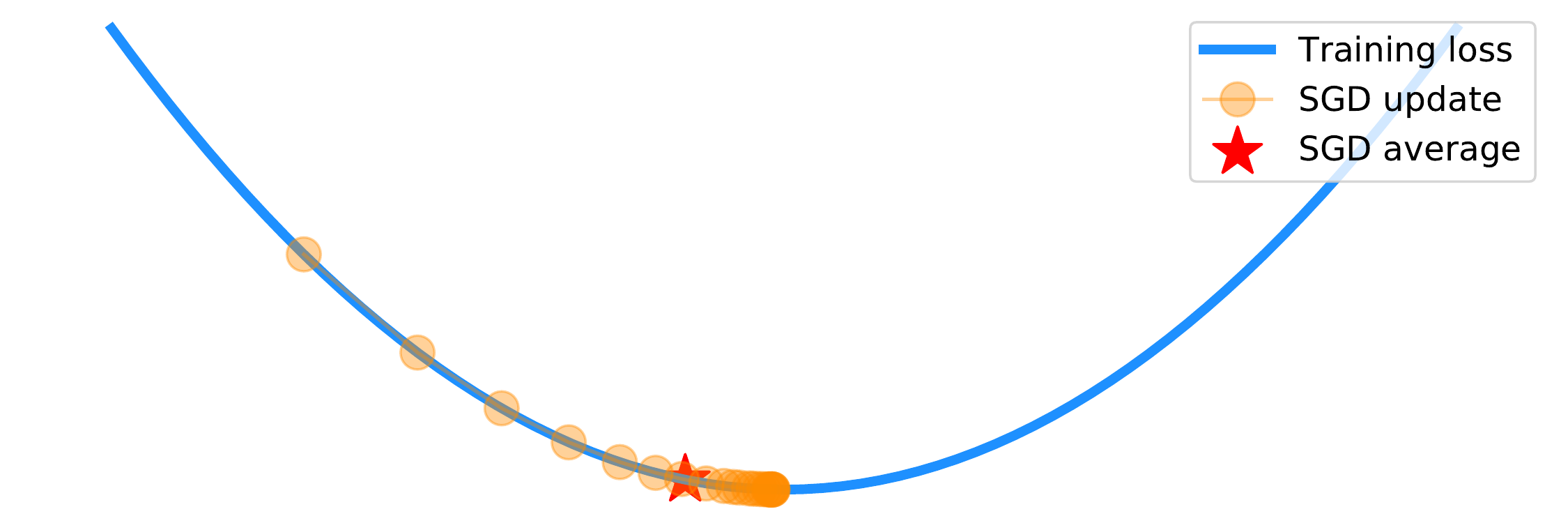}
		\caption{Symmetric function with flat sides} 
		\label{missing:sym_soft} 
	\end{figure}
	
	\begin{figure}[htbp]
		\centering
		\includegraphics[width=.24\textwidth]{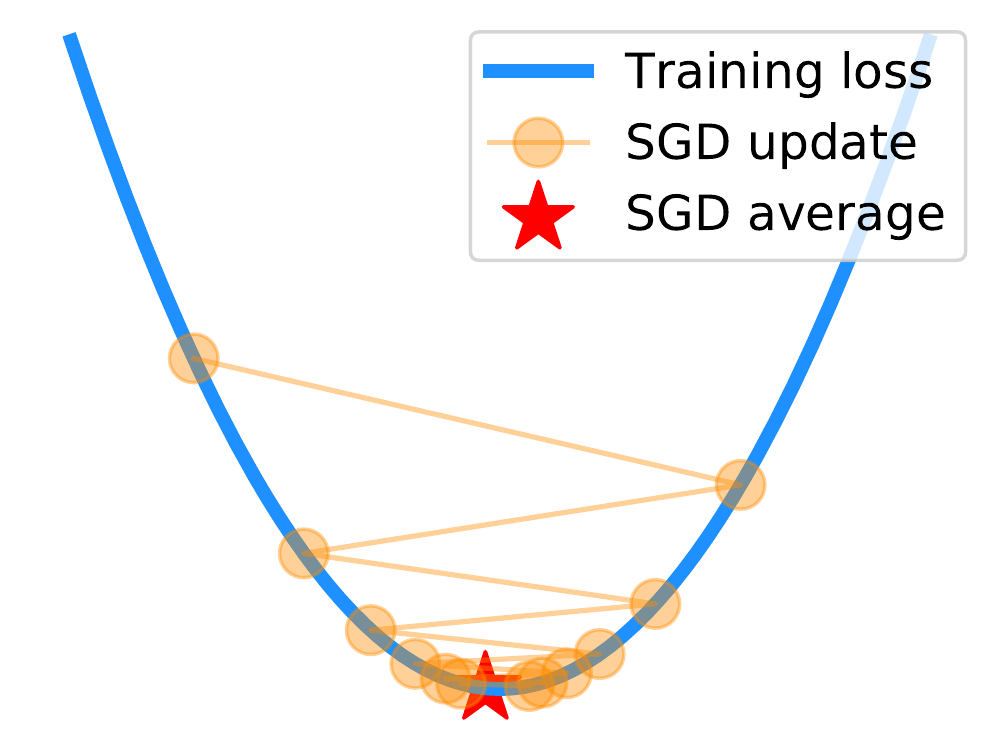}
		\caption{Symmetric function with sharp sides}
		\label{missing:Symmetric perturbation} 
	\end{figure}

	\section{Missing Proof for Theorem \ref{thm:asym_avg}}
	\label{appendix_main_theorem}
	
	To prove Theorem \ref{thm:asym_avg}, we will need the following concentration bound.
	\begin{lem}[Azuma's inequality]
		Let $X_1, X_2, X_3,...X_n$ be independent random variables satisfying $|X_i - \E[X_i]| \leq c_i$, for $1 \leq i \leq n$. We have the following bound for $X = \sum_{i=1}^{n}X_i$:
		$$\Pr(|X-\E(X)| \geq \lambda) \leq 2 e^{-\frac{\lambda^2}{2\sum_{i=1}^{n}c_i^2}}$$
	\end{lem}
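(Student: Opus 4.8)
The plan is to prove this bound---which, as stated for independent variables with symmetric bounds $|X_i-\E[X_i]|\leq c_i$, is the independent-summand form of the classical inequality---via the Chernoff exponential-moment method. First I would center the variables, writing $Y_i \triangleq X_i - \E[X_i]$, so that each $Y_i$ satisfies $\E[Y_i]=0$ and $|Y_i|\leq c_i$, and $X-\E[X]=\sum_{i=1}^n Y_i$. For any $s>0$, applying Markov's inequality to the nonnegative random variable $e^{s\sum_i Y_i}$ gives
\[
\Pr\left(\sum_{i=1}^n Y_i \geq \lambda\right) \leq e^{-s\lambda}\, \E\left[e^{s\sum_{i=1}^n Y_i}\right].
\]
The next step exploits independence: since the $Y_i$ are independent, the exponential factorizes, so $\E[e^{s\sum_i Y_i}] = \prod_{i=1}^n \E[e^{sY_i}]$.

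The key step, and the main obstacle, is to bound each factor $\E[e^{sY_i}]$; this is Hoeffding's lemma, which asserts that for a mean-zero random variable $Y$ taking values in $[-c,c]$ one has $\E[e^{sY}]\leq e^{s^2c^2/2}$. I would establish it by a convexity argument: because $y\mapsto e^{sy}$ is convex, on the interval $[-c,c]$ it lies below the chord joining the endpoints, giving the pointwise bound $e^{sy}\leq \frac{c-y}{2c}e^{-sc}+\frac{c+y}{2c}e^{sc}$. Taking expectations and using $\E[Y]=0$ collapses the linear term, leaving $\E[e^{sY}]\leq\cosh(sc)$, and a term-by-term comparison of Taylor series shows $\cosh(sc)\leq e^{s^2c^2/2}$. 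Applying this to each $Y_i$ with its own bound $c_i$ yields $\E[e^{s\sum_i Y_i}]\leq \prod_{i=1}^n e^{s^2c_i^2/2}=e^{s^2\sum_i c_i^2/2}$, and hence
\[
\Pr\left(\sum_{i=1}^n Y_i\geq \lambda\right)\leq e^{-s\lambda+s^2\sum_{i=1}^n c_i^2/2}.
\]

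Finally I would optimize the free parameter $s$. The exponent $-s\lambda+\tfrac{s^2}{2}\sum_i c_i^2$ is a convex quadratic in $s$, minimized at $s=\lambda/\sum_i c_i^2$, which produces the one-sided tail bound $\Pr(\sum_i Y_i\geq\lambda)\leq e^{-\lambda^2/(2\sum_i c_i^2)}$. Since the variables $-Y_i$ are again mean-zero and bounded by the same $c_i$, the identical argument controls the lower tail $\Pr(\sum_i Y_i\leq-\lambda)$. A union bound over these two symmetric events supplies the factor of $2$, delivering the claimed estimate $\Pr(|X-\E[X]|\geq\lambda)\leq 2e^{-\lambda^2/(2\sum_i c_i^2)}$. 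The only genuinely nontrivial ingredient is Hoeffding's lemma; the remaining pieces---the Chernoff bound, factorization, and optimization over $s$---are routine once that MGF estimate is in hand.
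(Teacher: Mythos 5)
Your proof is correct and complete: centering, the Chernoff--Markov step, factorization via independence, Hoeffding's lemma by the chord/convexity argument with $\cosh(sc)\leq e^{s^2c^2/2}$, optimization at $s=\lambda/\sum_i c_i^2$, and the union bound for the two tails are all sound, and the constants match the statement exactly. There is, however, nothing in the paper to compare against: the paper states this lemma as a known concentration bound and gives no proof of it, so you have supplied the standard derivation of a fact the authors simply cite.

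One observation worth recording. What you have proved is the lemma precisely as stated, i.e., the independent-summand (Hoeffding) form; the name ``Azuma's inequality'' properly refers to the martingale-difference version, and that stronger version is what the paper actually needs downstream --- in the proofs of its Theorems 4 and 5 the bound is applied to the supermartingales $G_t\one_{\EE_{t-1}}$ and $G'_t\one_{\EE'_{t-1}}$, whose increments are bounded but not independent. Your argument extends to that setting with a single local change: replace the independence factorization $\E\bigl[e^{s\sum_i Y_i}\bigr]=\prod_i\E\bigl[e^{sY_i}\bigr]$ by iterated conditioning, using the conditional form of Hoeffding's lemma,
\begin{equation*}
\E\bigl[e^{sY_i}\mid \F_{i-1}\bigr]\leq e^{s^2c_i^2/2},
\end{equation*}
which holds by the same chord argument since $\E[Y_i\mid\F_{i-1}]=0$ and $|Y_i|\leq c_i$, and then peel off one factor at a time via the tower property. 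Everything after that point --- the quadratic optimization in $s$ and the two-sided union bound --- goes through verbatim. So your proof establishes the lemma as written, and a one-step modification yields the version the paper implicitly relies on.
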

	
	Let $\Pm\triangleq -\eta (\UL +\UR +2\nu )$, 
	$\Pa \triangleq -\eta (\Ll -\nu )$.
	Since $-\UL=c \UR$, we know 
	$\Pm> (c-1)\eta \UR - 2\eta \nu$. 
	First, we have the following bounds on the first step $w_0$. 
	
	\begin{lem}
		\label{lem:w0bounds}
		For every $i\in [h]$, $w_0\in [\Pm, \Pa]$. 
	\end{lem}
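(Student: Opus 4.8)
The plan is to analyze the single SGD step that produces $w_0$. By the definition of a round, $w_0$ is the first iterate landing on the flat side ($w_0 \geq 0$), while its predecessor --- the last iterate of the previous round, which I denote $w_{-1}$ --- lies strictly on the sharp side ($w_{-1} < 0$). Writing the update as $w_0 = w_{-1} - \eta(\nabla \LL(w_{-1}) + \omega_{-1})$ and invoking the sharp-side gradient bounds $\Ll \leq \nabla \LL(w_{-1}) \leq \UL$ with $|\omega_{-1}| < \nu$, both desired inequalities reduce to controlling the size of this crossing step together with the location of $w_{-1}$.

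For the upper bound I would use only that $w_{-1} < 0$: since $\nabla \LL(w_{-1}) + \omega_{-1} > \Ll - \nu$, the step satisfies $-\eta(\nabla \LL(w_{-1}) + \omega_{-1}) < -\eta(\Ll - \nu) = \Pa$, and therefore $w_0 = w_{-1} + (\text{step}) < 0 + \Pa = \Pa$.

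The lower bound is the crux, because a priori $w_{-1}$ could sit arbitrarily deep in the sharp region. The key is to show that every step taken on the sharp side is strictly positive: since $-\UL = c\UR$ with $c>1$ and $\nu \leq \UR$, we have $\nu < -\UL$, hence $\nabla \LL + \omega_t < \UL + \nu < 0$ throughout $w_t < 0$, so the iterate increases monotonically while it remains negative. Thus the sharp-side segment of the previous round was entered at some iterate $w_s$ crossing from the flat side, where the downward step is bounded below by $-\eta(\UR + \nu)$ (using $\nabla \LL \leq \UR$ and $\omega < \nu$ on the flat side), giving $w_s > -\eta(\UR + \nu)$; by monotonicity $w_{-1} \geq w_s > -\eta(\UR + \nu)$. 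Combining this with the minimal crossing step out of $w_{-1}$, namely $-\eta(\nabla \LL(w_{-1}) + \omega_{-1}) > -\eta(\UL + \nu)$, yields $w_0 > -\eta(\UR + \nu) - \eta(\UL + \nu) = -\eta(\UL + \UR + 2\nu) = \Pm$, exactly the claimed bound.

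The one subtlety I would treat separately is the boundary case in which a round's sharp-side segment coincides with the very start of the trajectory, so that no flat-to-sharp entry point $w_s$ exists; under the running dynamics every sharp segment is entered from the flat side, so this only concerns an initial transient and can be dispatched using the assumed initialization. Modulo that edge case, the two bounds combine to give $w_0 \in (\Pm, \Pa) \subseteq [\Pm, \Pa]$, proving the lemma.
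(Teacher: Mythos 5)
Your proof is correct and follows essentially the same route as the paper's: for the lower bound you control the flat-to-sharp landing point by $-\eta(\UR+\nu)$ using the flat-side gradient bound and then add the crossing step bounded below by $-\eta(\UL+\nu)$, and for the upper bound you use $w_{-1}<0$ together with $\nabla\LL\geq \Ll$. The only cosmetic difference is that you invoke monotonicity of the iterates while they remain on the sharp side to reduce $w_{-1}$ to the entry point $w_s$, whereas the paper argues directly that the sharp-side segment lasts exactly one iteration (so $w_{-1}$ \emph{is} the entry point); both reductions rest on the same fact that $\Pm>0$.
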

	\begin{proof}
		Since $w_0$ is the first step that SGD jumps from the flat side to the sharp side,
		denote the previous location as $w_{-1}<0$.
		Since $w_{-1}$ is at the sharp side, we know that the gradient is 
		$\nabla \LL(w_{-1})\leq \UL$. 
		Therefore, we have
		\[
		w_0 = w_{-1}-\eta (\nabla \LL(w_{-1})+\omega_{-1})
		\]
		Where $\omega_{-1}$ is the noise bounded by $\nu $. 
		
		At the time when SGD jump from the flat side to sharp side, denote the target position as $w'_{-1}$. We know that $w'_{-1}\in [-\eta(\UR+\nu ), 0]$. Since the gradient on the sharp side is at most $\UL$, we know the next step is lower bounded by 
		$-\eta(\UR+2\nu +\UL)=\Pm>0 $. In other words, SGD stays at the sharp side for only $1$ iterations (this matches with our empirical observation, see e.g. Figure \ref{fig:Asy_onedim_occillate}).

		That means, the bound on $w'_{-1}$ can be applied to $w_{-1}$ as well, because they are the same iterate. By applying the upper and lower bound on $\nabla \LL(w_{-1})$, we get:
		
		\[
		w_0 \geq -\eta(\UR+\nu )- \eta (\UL+\nu )=\Pm
		\]
		and also 
		\[
		w_0 \leq 0- \eta (\Ll-\nu )=\Pa
		\]
	\end{proof}

	Below we first define 
	$\Tmp \triangleq 
	\left (\frac{-\sqrt{2}\nu \log^{1/2}(2\tau)
		+\sqrt{2\nu ^2\log(2\tau)-4\UR (\UL +\UR +2\nu )}
	}{2\UR}\right )^2
	$, where $\tau$ is a constant with value to be set later. 
	$\Tmp$ satisfies the following inequality.

	\begin{lem}
		\label{lem:tmp}
		$	\forall t \leq \Tmp,  \Pm  - t \eta \UR
		- \sqrt{2t} \eta \nu \log^{1/2}(2\tau)\geq 0$.
	\end{lem}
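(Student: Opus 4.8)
The plan is to read the left-hand side as a function of $t$, call it $g(t)\triangleq\Pm - t\eta\UR - \sqrt{2t}\,\eta\nu\log^{1/2}(2\tau)$ for $t\in[0,\Tmp]$, and show it is nonnegative up to exactly the root $\Tmp$. The cleanest route is the substitution $s=\sqrt t$ (so $t=s^2$ and $\sqrt{2t}=\sqrt2\,s$), which turns the expression into a quadratic in $s$:
\[
h(s)\triangleq \Pm - \eta\UR\,s^2 - \sqrt2\,\eta\nu\log^{1/2}(2\tau)\,s .
\]
First I would record the sign conventions: by the hypotheses of Theorem~\ref{thm:asym_avg} we have $\eta>0$ and $\UR>0$ (since $0<\LR\le\nabla\LL(w)\le\UR$ for $w\ge0$), so the leading coefficient $-\eta\UR$ is negative and $h$ is a concave (downward) parabola in $s$.

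Next I would establish $h(0)=\Pm>0$. This uses the earlier bound $\Pm>(c-1)\eta\UR-2\eta\nu$ together with $\nu\le\UR$ and the assumption that $c$ is a large constant: for $c>3$ one gets $(c-1)\UR>2\UR\ge2\nu$, hence $\Pm>0$. Because $h$ is concave with a strictly positive value at $s=0$, it remains nonnegative precisely on the interval $[0,s^*]$, where $s^*$ is its unique positive root. I would then solve $h(s)=0$ explicitly: writing it as $\eta\UR s^2+\sqrt2\,\eta\nu\log^{1/2}(2\tau)\,s-\Pm=0$ and applying the quadratic formula, the positive root is
\[
s^*=\frac{-\sqrt2\,\nu\log^{1/2}(2\tau)+\sqrt{2\nu^2\log(2\tau)-4\UR(\UL+\UR+2\nu)}}{2\UR},
\]
after substituting $\Pm=-\eta(\UL+\UR+2\nu)$ and cancelling the common factor $\eta$ that appears both under and outside the radical. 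By the very definition of $\Tmp$ we have $\Tmp=(s^*)^2$, i.e. $\sqrt{\Tmp}=s^*$, so that $g(\Tmp)=h(s^*)=0$.

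Finally I would combine these facts: for any $t\le\Tmp$ we have $s=\sqrt t\le s^*$, and concavity of $h$ with $h(0)>0=h(s^*)$ forces $h(s)\ge0$, which is exactly the claimed inequality. (Equivalently, one can verify directly that $g'(t)=-\eta\UR-\eta\nu\log^{1/2}(2\tau)/\sqrt{2t}<0$ for $t>0$, so $g$ is strictly decreasing with $g(0)=\Pm>0$ and $g(\Tmp)=0$, hence $g\ge0$ on $[0,\Tmp]$.) The one place needing care — and the main, if minor, obstacle — is well-definedness of $\Tmp$: I must confirm the inner discriminant $2\nu^2\log(2\tau)-4\UR(\UL+\UR+2\nu)$ is nonnegative so that $s^*$ and $\Tmp$ are real and the quadratic genuinely has a positive root. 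This follows because $-4\UR(\UL+\UR+2\nu)=4\UR\Pm/\eta>0$ once $\Pm>0$, while $\log(2\tau)\ge0$ for the relevant $\tau$ (any $\tau\ge1/2$, and here $\tau$ is taken large), so the entire radicand is positive. Thus the whole argument again hinges only on $c$ being large enough to guarantee $\Pm>0$, which is part of the standing assumptions.
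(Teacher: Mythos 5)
Your proof is correct and follows essentially the same route as the paper's: both substitute $\Delta=\sqrt{t}$ to turn the claim into a downward-opening quadratic inequality whose positive root is, by construction, $\sqrt{\Tmp}$. The only difference is that you additionally verify that the radicand in the definition of $\Tmp$ is positive (via $\Pm>0$, which needs $c$ large and $\nu\le\UR$), a well-definedness point the paper leaves implicit.
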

	\begin{proof}
		By the definition of $\Pm$, we have
		\begin{align*}
		& -\eta (\UL +\UR +2\nu) - t \eta \UR
		- \sqrt{2t} \eta \nu \log^{1/2}(2\tau)\geq 0\\
		\Leftarrow&
		(\UL +\UR +2\nu) + t \UR
		+ \sqrt{2t} \nu\log^{1/2}(2\tau)\leq 0\\
		\Leftarrow&
		(\UL +\UR +2\nu) + \Delta^2 \UR
		+ \sqrt{2}\Delta r\log^{1/2}(2\tau)\leq 0 ~~~~~(\mathrm{\Delta\triangleq \sqrt{t}})\\
		\Leftarrow&
		\Delta \in 
		\left [
		0,
		\frac{-\sqrt{2} \nu\log^{1/2}(2\tau)
			+\sqrt{2\nu^2\log(2\tau)-4\UR (\UL +\UR +2\nu)}
		}{2\UR}
		\right ]\\
		\Leftarrow&
		t \leq 
		\left (\frac{-\sqrt{2} \nu\log^{1/2}(2\tau)
			+\sqrt{2\nu^2\log(2\tau)-4\UR (\UL +\UR +2\nu)}
		}{2\UR}\right )^2 \qedhere
		\end{align*}
	\end{proof}
	
	Now, we have the following theorem that says  with decent probability, the minimum number of iterates on the flat side in $i$-th round is at least $\Tmp$.

	\begin{thm} 
		\label{thm:tmp}
		If we start at $w_0\geq \Pm $, for every fixed $\tau>\Tmp $, with probability at least $1-\frac{\Tmp}{\tau}$, we have $\forall t\leq \Tmp, w_t>  w_0 - t \eta \UR
		- \sqrt{2t} \eta \nu\log^{1/2}(2\tau)\geq 0$.
	\end{thm}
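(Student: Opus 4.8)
The plan is to unroll the SGD recursion on the flat side, control the accumulated noise with the concentration inequality, and use an induction to guarantee the iterate never leaves the flat side. First I would observe that as long as $w_s\geq 0$ the iterate sits on the flat side, where the gradient satisfies $\nabla\LL(w_s)\leq\UR$. Substituting this into the update rule $w_{s+1}=w_s-\eta(\nabla\LL(w_s)+\omega_s)$ and telescoping yields the deterministic lower bound
\[
w_t\;\geq\; w_0 - t\eta\UR - \eta\sum_{s=0}^{t-1}\omega_s,
\]
valid whenever $w_0,\dots,w_{t-1}\geq 0$.

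Next I would bound the noise term. Treating $\{\omega_s\}$ as independent, mean-zero, and bounded by $\nu$, Azuma's inequality with $c_s=\nu$ gives, for any fixed $t$,
\[
\Pr\!\left(\eta\sum_{s=0}^{t-1}\omega_s \geq \sqrt{2t}\,\eta\nu\log^{1/2}(2\tau)\right)
\leq 2\exp\!\left(-\frac{2t\nu^2\log(2\tau)}{2t\nu^2}\right)=2e^{-\log(2\tau)}=\frac{1}{\tau}.
\]
A union bound over $t=1,\dots,\Tmp$ then shows that, with probability at least $1-\Tmp/\tau$, the event $E=\{\forall\,t\leq\Tmp:\ \eta\sum_{s=0}^{t-1}\omega_s<\sqrt{2t}\,\eta\nu\log^{1/2}(2\tau)\}$ holds for all such $t$ simultaneously; this is exactly where the stated failure probability $\Tmp/\tau$ originates.

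Finally I would close the argument by induction on $t$, conducted on the event $E$. The base case is $w_0\geq\Pm>0$. For the inductive step, assuming $w_0,\dots,w_t\geq 0$, the telescoped bound together with $E$ yields
\[
w_{t+1} \;>\; w_0 - (t+1)\eta\UR - \sqrt{2(t+1)}\,\eta\nu\log^{1/2}(2\tau),
\]
and since $w_0\geq\Pm$, Lemma~\ref{lem:tmp} guarantees the right-hand side is $\geq 0$ for $t+1\leq\Tmp$. Hence $w_{t+1}\geq 0$, the iterate stays on the flat side, and both the claimed strict inequality and the nonnegativity propagate forward.

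The main obstacle is the circular dependence between ``the gradient bound $\nabla\LL\leq\UR$ is available'' (which requires $w_s\geq 0$) and ``the iterate stays nonnegative'' (which is what we want to prove). Lemma~\ref{lem:tmp} is precisely the device that breaks this loop: the definition of $\Tmp$ is engineered so that the high-probability lower envelope never dips below zero on $[0,\Tmp]$, letting the induction carry nonnegativity forward for free. A secondary point to handle with care is that the concentration bound must hold for every prefix length simultaneously rather than at a single time $t$, which is exactly what the union bound over $t\leq\Tmp$ supplies and what licenses the uniform-in-$t$ conclusion.
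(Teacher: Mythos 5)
Your proof is correct and follows essentially the same route as the paper's: both control the accumulated noise by $\sqrt{2t}\,\eta\nu\log^{1/2}(2\tau)$ via Azuma--Hoeffding and invoke Lemma~\ref{lem:tmp} to keep the iterate on the flat side so that the gradient bound $\nabla\LL\leq\UR$ remains available, breaking exactly the circularity you identify. The only difference is packaging: the paper wraps the telescoped recursion into an indicator-weighted supermartingale $G_t\one_{\EE_{t-1}}$ and chains the conditional $1-1/\tau$ bounds step by step, whereas you telescope directly, union-bound over prefix lengths, and run the induction on the resulting good event --- the same estimate, obtained slightly more transparently.
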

	
	\begin{proof}
		Define filtration $\Ft = \sigma\{\omega_0,\cdots, \omega_{t-1}\}$, where $\sigma\{\cdot\}$ denotes the sigma field.
		Define the event $\EE_T=\{
		\forall t\leq T, w_t>w_0 - t\eta \UR
		- \sqrt{2t} \eta \nu\log^{1/2}(2\tau) \}$ and define $G_t= w_0 - w_t - t \eta \UR + M $, where 
		$M\triangleq (\Tmp+1)(w_0+\nu + 2\eta \UR )$.
		Since we only consider the case $t \leq \Tmp$, we have
		\begin{equation*}
		G_t= w_0 - w_t - t \eta \UR + (\Tmp+1)(w_0+\nu + 2\eta \UR )> 
		w_0 - w_t - t \eta \UR + 
		w_t + t\eta \UR >0
		\end{equation*} 
		Therefore, $G_t$ is always positive. By SGD updating rule, we have
		\begin{align}
		& 
		\E[G_{t+1}\one_{\EE_t} | \mathcal{F}_{t} ] 
		=  \E[(w_0 - w_{t+1}-(t+1) \eta \UR + M)\one_{\EE_t}|\mathcal{F}_{t}] \nonumber \\
		\leq &  \E[(w_0-w_t +\eta \omega_t-t \eta \UR + M)\one_{\EE_t}|\Ft ]
		=   w_0-w_t -t \eta \UR + M 
		=   G_t\one_{\EE_t} 
		\label{eqn1}
		\end{align}
		
		Since $\one_{\EE_t}\leq \one_{\EE_{t-1}}$, and $G_t$ is always positive, we have
		\begin{equation}G_t\one_{\EE_t}
		\leq 
		G_t\one_{\EE_{t-1}}
		\label{eqn:Gt:2}
		\end{equation}
		Combining (\ref{eqn1}) and (\ref{eqn:Gt:2}) together, we know $G_t\one_{\EE_{t-1}}$ is a supermartingale.
		
		We can also bound the absolute value of the difference in every iteration: 
		\begin{align*}
		&|G_{t+1}\one_{\EE_{t}}- \E[G_{t+1}\one_{\EE_{t}}|\F_t]|\\
		= & |(w_0 - w_{t+1}-(t+1) \eta \UR +M)
		-(w_0-w_{t}-
		\nabla \LL(w_t)-
		(t+1) \eta \UR +M)
		|\F_t]\\
		\leq & \eta \nu
		\end{align*}
		
		By Azuma's inequality, we get: 
		\begin{align*}
		\Pr\left (G_{t}\one_{\EE_{t-1}}- G_0 \geq \lambda 
		\right )
		\leq 2e ^{-\frac{\lambda^2}{2 t \eta^2 \nu^2 }}
		\end{align*}
		
		That gives, 
		\begin{align*}
		\Pr\left (G_{t}\one_{\EE_{t-1}}- G_0 \geq \sqrt{2t} \eta \nu\log^{1/2}(2\tau) \right )\leq  1 / \tau
		\end{align*}

		That means, if $\one_{\EE_t-1}$ holds, with probability at least $1-1/ \tau$, 
		
		\[
		w_0-w_t - t\eta \UR+M< \sqrt{2t} \eta \nu\log^{1/2}(2\tau) +G_0
		=
		\sqrt{2t} \eta \nu\log^{1/2}(2\tau)  +M
		\]
		
		Which gives
		\[
		w_t> w_0-  t\eta \UR-\sqrt{2t} \eta \nu\log^{1/2}(2\tau) 
		\]
		
		In other words, that means if $\one_{\EE_{t-1}}$ holds, then $\one_{\EE_{t}}$ also holds with probability at least $1-1/ \tau$.

		Therefore, if we are running $\Tmp$ steps, we know that with probability at least $1-\frac{\Tmp}{\tau}$, $\one_{\EE_{\Tmp}}$ holds.
		Therefore, by Lemma \ref{lem:tmp},
		\begin{equation*}
		\forall t\leq \Tmp,  w_t> w_0 - t \eta \UR
		- \sqrt{2t} \eta \nu\log^{1/2}(2\tau)\geq 
		\Pm - t \eta \UR
		- \sqrt{2t} \eta \nu\log^{1/2}(2\tau)\geq 
		0
		\qedhere
		\end{equation*}
	\end{proof}

	Similarly, we define
	$
	\Tap \triangleq \left ( \frac{
		-\sqrt{2} \nu\log^{1/2}(2\tau)
		+\sqrt{
			2  \nu^2\log (2\tau)
			-4(\Ll-\nu)\LR 
		}
	}{2\LR }\right )^2
	$, which satisfies the following inequality.
	
	\begin{lem}
		\label{lem:tap}
		$\Pa - \Tap \eta \LR
		- \sqrt{2\Tap } \eta \nu\log^{1/2}(2\tau)<0$.
	\end{lem}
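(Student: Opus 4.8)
The plan is to prove Lemma~\ref{lem:tap} in direct analogy with Lemma~\ref{lem:tmp}, collapsing the claim to a single quadratic inequality in the variable $\Delta \triangleq \sqrt{\Tap}$ and then recognizing $\sqrt{\Tap}$ as the positive root of that quadratic. The structural differences from Lemma~\ref{lem:tmp} are that the inequality now points the other way and that the governing drift constant is the flat-side \emph{lower} bound $\LR$ instead of the flat-side upper bound $\UR$.

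First I would substitute the definition $\Pa = -\eta(\Ll - \nu)$ and divide the target inequality by $\eta > 0$. Writing $\Delta = \sqrt{\Tap}$ so that $\Tap = \Delta^2$, a short rearrangement shows the claim is equivalent to
\[
q(\Delta) \;\triangleq\; \LR\,\Delta^2 + \sqrt{2}\,\nu\,\log^{1/2}(2\tau)\,\Delta - (\nu - \Ll) \;>\; 0 .
\]
The key bookkeeping point is the sign of $\Ll - \nu$: since $\Ll < 0 \le \nu$ we have $\Ll - \nu < 0$, hence $\Pa > 0$ and the constant term $-(\nu - \Ll)$ of $q$ is strictly negative. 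Because $\LR > 0$, the parabola $q$ opens upward and, having the negative value $-(\nu-\Ll)$ at $\Delta = 0$, possesses exactly one nonnegative root.

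Next I would evaluate that root with the quadratic formula and match it term by term against the defining expression for $\sqrt{\Tap}$. In particular the discriminant is $2\nu^2\log(2\tau) + 4\LR(\nu - \Ll)$, which is exactly the quantity $2\nu^2\log(2\tau) - 4(\Ll - \nu)\LR$ appearing under the square root in the definition; note this is a sum of positive terms, so it is automatically positive and $\Tap$ is always real and well defined --- in contrast to Lemma~\ref{lem:tmp}, whose discriminant is positive only because $\UL + \UR + 2\nu < 0$. Since the positive root of $q$ equals $\sqrt{\Tap}$ by construction and $q$ is increasing beyond it, we obtain $q(\Delta) \ge 0$, which is the stated bound.

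The hard part --- indeed the only delicate point --- will be \emph{strictness}. Substituting exactly $\Delta = \sqrt{\Tap}$, the positive root, makes $q$ vanish, so the reduction gives equality $q(\sqrt{\Tap}) = 0$ rather than a strict inequality: the quantity in the lemma equals $0$ at $t = \Tap$ and becomes negative only for $t$ marginally larger. I would resolve this by recalling that $\Tap$ bounds a count of SGD iterations and is used in the companion of Theorem~\ref{thm:tmp} through an integer step count that strictly exceeds the root; on the strictly increasing branch of $q$ this promotes the equality to $q > 0$, yielding the strict bound. Apart from this boundary subtlety, the argument is a routine sign-tracking computation mirroring Lemma~\ref{lem:tmp}, and signs are precisely where care is needed, since a single misplaced sign would produce the wrong root or a spuriously imaginary discriminant.
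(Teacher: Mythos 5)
Your proposal matches the paper's proof, which likewise divides by $\eta$ and observes that the resulting quadratic inequality in $\Delta=\sqrt{\Tap}$ holds because $\sqrt{\Tap}$ is by definition the nonnegative root of that quadratic. You are also right about the boundary subtlety: at $\Delta=\sqrt{\Tap}$ the quadratic vanishes exactly, so the paper's one-line argument really only establishes the non-strict inequality $\Pa - \Tap\,\eta\LR - \sqrt{2\Tap}\,\eta\nu\log^{1/2}(2\tau)\le 0$, which is nevertheless all that is needed downstream in Theorem~\ref{thm:tap} because the bound on $w_{\Tap}$ there is already strict.
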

	\begin{proof}
		By the definition of $\Pa$, we want to show that
		\begin{align*}
		&  (\Ll-\nu) + \Tap   \LR
		+ \sqrt{2\Tap }  r\log^{1/2}(2\tau)\geq 0
		\end{align*}
		Which holds by the definition of $\Tap$.
	\end{proof}

	The Theorem below shows with decent probability, $\Tap-1$ is an upper bound on the total number of iterates on the flat side in the $i$-th round.
	
	\begin{thm}
		If $w_0\leq \Pa$, 
		with probability at least $1-\frac{\Tap}{\tau}$,
		$w_{\Tap}<0$. 
		\label{thm:tap}
	\end{thm}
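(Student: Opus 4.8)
The plan is to mirror, in the downward direction, the martingale argument already used for Theorem \ref{thm:tmp}: instead of bounding $w_t$ from below to show the iterate stays on the flat side, I track an upper bound on $w_t$ while SGD remains on the flat side, and then invoke Lemma \ref{lem:tap} to show this upper bound falls below zero exactly at step $\Tap$. The starting hypothesis $w_0\leq \Pa$ is exactly the one supplied by Lemma \ref{lem:w0bounds}, so I can use it directly.

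First I would note that as long as the iterate is on the flat side ($w_s\geq 0$) the gradient obeys $\nabla \LL(w_s)\geq \LR$, so each SGD step drives $w$ down by at least $\eta\LR$ up to the mean-zero noise $\omega_s$. Writing $w_t = w_0 - \eta\sum_{s<t}\nabla \LL(w_s) - \eta\sum_{s<t}\omega_s$, the deterministic drift is at most $-t\eta\LR$, and the only stochastic term is the noise sum $\sum_{s<t}\omega_s$. To make this rigorous in the presence of the stopping at zero, I would define the ``still on the flat side'' event $\EE'_t=\{w_s\geq 0 \text{ for all } s\leq t\}$ and set $G_t = w_t - w_0 + t\eta\LR + M'$ for a constant $M'$ chosen large enough that $G_t>0$ over the relevant range, exactly as the constant $M$ is chosen in Theorem \ref{thm:tmp}. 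Using $\nabla \LL(w_s)\geq \LR$ on $\EE'_t$ together with $\E[\omega_t]=0$, the product $G_t\one_{\EE'_{t-1}}$ is a supermartingale whose per-step increment is bounded in absolute value by $\eta\nu$.

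I would then apply Azuma's inequality with $\lambda=\sqrt{2t}\,\eta\nu\log^{1/2}(2\tau)$, which (as in Theorem \ref{thm:tmp}) gives a per-step failure probability of $1/\tau$ and hence, after a union bound over the $\Tap$ steps, the overall success probability $1-\Tap/\tau$. On the success event this yields $w_t < w_0 - t\eta\LR + \sqrt{2t}\,\eta\nu\log^{1/2}(2\tau)$ for all $t\leq \Tap$. Finally I would specialize to $t=\Tap$ and use $w_0\leq \Pa$, so that the bound on $w_{\Tap}$ is governed by $\Pa - \Tap\eta\LR - \sqrt{2\Tap}\,\eta\nu\log^{1/2}(2\tau)$; Lemma \ref{lem:tap} states precisely that this quantity is negative, giving $w_{\Tap}<0$.

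I expect the main obstacle to be the bookkeeping around the indicator $\one_{\EE'_{t-1}}$: the gradient lower bound $\LR$ is only valid while the iterate is genuinely on the flat side, and a crossing to the sharp side (where $\nabla\LL$ flips sign and would push $w$ back up) must not corrupt the supermartingale property — this is exactly what the stopping factor is designed to prevent, just as in Theorem \ref{thm:tmp}. The second delicate point is directional: I must verify that the Azuma tail is taken for the upward fluctuation of $w_t$ in the direction that matches the sign convention built into the definition of $\Tap$, so that the final substitution lines up with the bound proved in Lemma \ref{lem:tap}.
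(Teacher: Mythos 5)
Your proposal follows essentially the same route as the paper's proof: a supermartingale built from $w_t + t\eta\LR$ (the paper's $G'_t$, yours shifted by a constant $M'$), Azuma's inequality with $\lambda=\sqrt{2t}\,\eta\nu\log^{1/2}(2\tau)$, a union bound over the $\Tap$ steps, and a final appeal to Lemma~\ref{lem:tap}. Your choice of stopping event $\{w_s\geq 0,\ \forall s\leq t\}$ is if anything the more careful one, since it is exactly what licenses the gradient lower bound $\nabla\LL(w_s)\geq \LR$, and the sign-matching issue you flag at the end is a real one that the paper's own write-up glosses over.
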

	
	\begin{proof}
		Define event $\EE'_T=\{\forall t\leq T, w_t < w_0 - t \eta \LR + \sqrt{2t} \eta \nu \log^{1/2}(2\tau) \}$, and $G_t' = w_t+t \eta \LR>0$.
		
		We have
		\begin{align*}
		&
		\E[G'_{t+1}\one_{\EE'_t} | \mathcal{F}_{t} ] \\
		= & \E[( w_{t+1}+(t+1) \eta \LR)\one_{\EE'_t}|\mathcal{F}_{t}] \\
		\leq & \E[(w_t -\eta \omega_t+t \eta \LR )\one_{\EE'_t}|\mathcal{F}_{t}]\\
		= & ( w_t +t \eta \LR) \one_{\EE'_t}\\ 
		= & G'_t\one_{\EE'_t}
		\end{align*}
		
		Moreover, we know $\one_{\EE'_t}\leq \one_{\EE'_{t-1}}$, which means $G'_t\one_{\EE'_t} \leq  G'_t\one_{\EE'_{t-1}}$. So $G'_t\one_{\EE'_{t-1}}$ is a supermartingale. 
		
		We can also bound the absolute value of the difference in every iteration: 
		\begin{align*}
		&|G'_{t+1}\one_{\EE'_{t}}- \E[G'_{t+1}\one_{\EE'_{t}}|\F_t]|\\
		= & |(w_{t+1}+(t+1) \eta \LR)
		-(w_t-\eta \nabla \LL(w_t)+(t+1) \eta \LR)
		|\F_t]|\\
		\leq & \eta \nu
		\end{align*}
		
		Using Azuma inequality, we get 
		\begin{align*}
		\Pr\left (
		G'_{t}\one_{\EE'_{t-1}}- G'_0 \geq \sqrt{2t} \eta \nu\log^{1/2}(2\tau)
		\right ) 
		\leq 2e ^{-\frac{t \eta^2 \nu^2\log(2\tau)}{ t \eta^2 \nu^2 }}=\frac1 \tau 
		\end{align*}
		
		That means, if $\one_{\EE'_{t-1}}$ holds, with probability at least $1-1/ \tau$, 
		
		\[w_t< w_0-t \eta \LR +  \sqrt{2t} \eta \nu\log^{1/2}(2\tau)\]
		
		In other words, $\one_{\EE'_t}$  also holds. 
		Therefore, if we are running $\Tap$ steps, we know that with probability at least $1-\frac{\Tap}{\tau}$, $\one_{\EE'_{\Tap}}$ holds.
		Therefore, by Lemma \ref{lem:tap}, we know 
		\begin{equation*}
		w_{\Tap}< w_0 - \Tap \eta \LR
		- \sqrt{2\Tap } \eta \nu\log^{1/2}(2\tau)<0 \qedhere
		\end{equation*}
	\end{proof}
	
	\textbf{Remark.} To make sure Theorem \ref{thm:tmp} is not vacuous, we need to make sure that $\Tmp\geq 1$. If we want to make $\Tmp$, say, at least $2$, by Lemma \ref{lem:tmp}, we have:
	\[\Pm  - 2 \eta \UR
	- 2 \eta \nu \log^{1/2}(2\tau)\geq 0\]
	
	Notice that 
	$\Pm> (c-1)\eta \UR - 2\eta \nu$, so we could solve the above inequality and get
	\begin{align*}
	&(c-1)\eta \UR - 2\eta \nu  - 2 \eta \UR
	- 2 \eta \nu \log^{1/2}(2\tau)\geq 0\\
	\Rightarrow&
	\frac{(c-3) \UR - 2\nu }{2  \nu}
	\geq   \log^{1/2}(2\tau)\\
	\Rightarrow&
	\tau \leq \frac{e^{
			\left (\frac{(c-3) \UR }{2  \nu}-1\right )^2}}{2}\\
	\end{align*}
	Since we assume that $c$ is a large constant and $\UR\geq \nu$, so $\tau$ can be fairly large in order to make sure $\Tmp\geq 2$. 
	We also know that $\Tmp\leq \frac{-(\UL+\UR+2\nu)}{\UR}< c $.
	
	On the other hand, by simple calculation, we know $\Tap\leq \frac{-(\Ll-\nu)}{\LR}< c'< \frac{e^{c/3}}{6}$. 
	Therefore, we can always pick a $\tau$ such that  $\frac{\Tmp+\Tap }{\tau }\leq \frac12$. So finally, we are ready to prove Theorem \ref{thm:asym_avg}.

	\begin{proof}[Proof of Theorem \ref{thm:asym_avg}]
		By Lemma \ref{lem:w0bounds} and Theorem \ref{thm:tap}, 
		$\Tap$ is an upper bound on the length of the $i$-th round.
		By Theorem \ref{thm:tmp}, we know that SGD will stay at flat side for at least $\Tmp$ steps, and each step is lower bounded by 
		$ w_t>  w_0 - t \eta \UR
		- \sqrt{2t} \eta \nu\log^{1/2}(2\tau)$, therefore we know that with probability 
		$1-\frac{\Tmp+\Tap}{\tau}$:
		
		\begin{align*}
		\frac1{T_i} \sum_{j=0}^{T_i} w^i_j
		&\geq \frac{1}{\Tap}\left (\sum_{t=0}^{\Tmp}[w_0 - t \eta \UR
		- \sqrt{2t} \eta \nu\log^{1/2}(2\tau)] -\eta (\UR+\nu)\right )\\
		&\geq \frac{1}{\Tap}\left (
		\eta \UR \frac{(\Tmp+1)\Tmp}{2}
		+\sqrt{2\Tmp } \eta \nu\log^{1/2}(2\tau)
		-\eta (\UR+\nu)\right )\\
		&\geq \frac{\Tmp^2}{\Tap}
		\eta \UR
		\end{align*}
		
		The above inequality discussed the scenario when Theorem \ref{thm:tmp} and Theorem \ref{thm:tap} hold. If they do not hold, which happens with probability at most 
		$\frac{\Tmp+\Tap}{\tau}$, we need to get lower bound for $\frac1{T_i} \sum_{j=0}^{T_i} w^i_j$. Notice that by Lemma \ref{lem:w0bounds}, we know that SGD stays at the sharp side for at most $1$ iterate in each round, and also the iterates on the flat sides are always positive with $w_0\geq \Pm>\eta(\UR+\nu)$. Therefore, we have the following trivial bound:
		\[\frac1{T_i} \sum_{j=0}^{T_i} w^i_j\geq \frac{-\eta(\UR+\nu)+w_0}{2}>0
		\]
		
		Combining two cases together we get
		\begin{align*}
		\E\left [\frac1{T_i} \sum_{j=0}^{T_i} w^i_j\right ]&\geq 
		\left (1-\frac{\Tmp+\Tap}{\tau}\right )
		\frac{\Tmp^2}{\Tap}
		\eta \UR
		+0
		\end{align*}
		Since we can pick $\tau$ s.t. $\frac{\Tmp+\Tap }{\tau }\leq \frac12$, we have 
		\[
		\E\left [\frac1{T_i} \sum_{j=0}^{T_i} w^i_j\right ]\geq
		\frac{\Tmp^2}{2\Tap}
		\eta \UR\triangleq c_0>0 \qedhere 
		\]
	\end{proof}

	\section{Additional Figures in Section \ref{subsec:illusion_swa}: No Bumps Between SGD and SWA Solutions}
	\label{appendix:swa_asym_figures}
	
	Asymmetric valley of ResNet-110 on CIFAR-10, 
	$(r,p,c, \zeta)=(5,0.005,25,3)$. See Figure \ref{missing:inter_res110_C10}.
	\begin{figure}[ht]
		\centering
		\includegraphics[width=.45\textwidth]{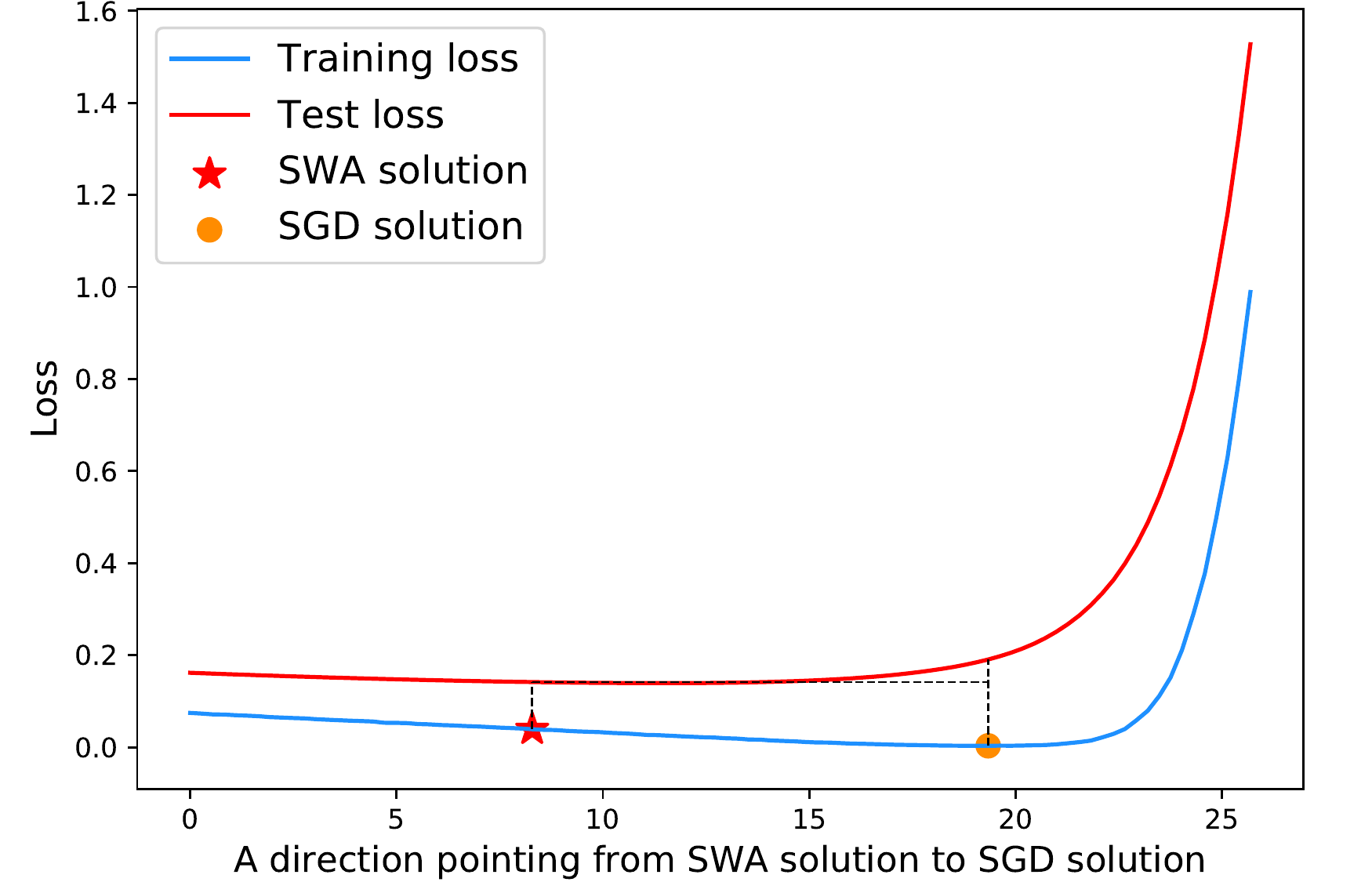}
		\caption{SWA and SGD interpolation (ResNet-110 on CIFAR-10)}
		\label{missing:inter_res110_C10} 
	\end{figure}
	
	Asymmetric valley of ResNet-164 on CIFAR-10, 
	$(r,p,c, \zeta)=(2,0.015,6,1)$. See Figure \ref{missing:inter_res164_C10}.
	\begin{figure}[ht]
		\centering
		\includegraphics[width=.45\textwidth]{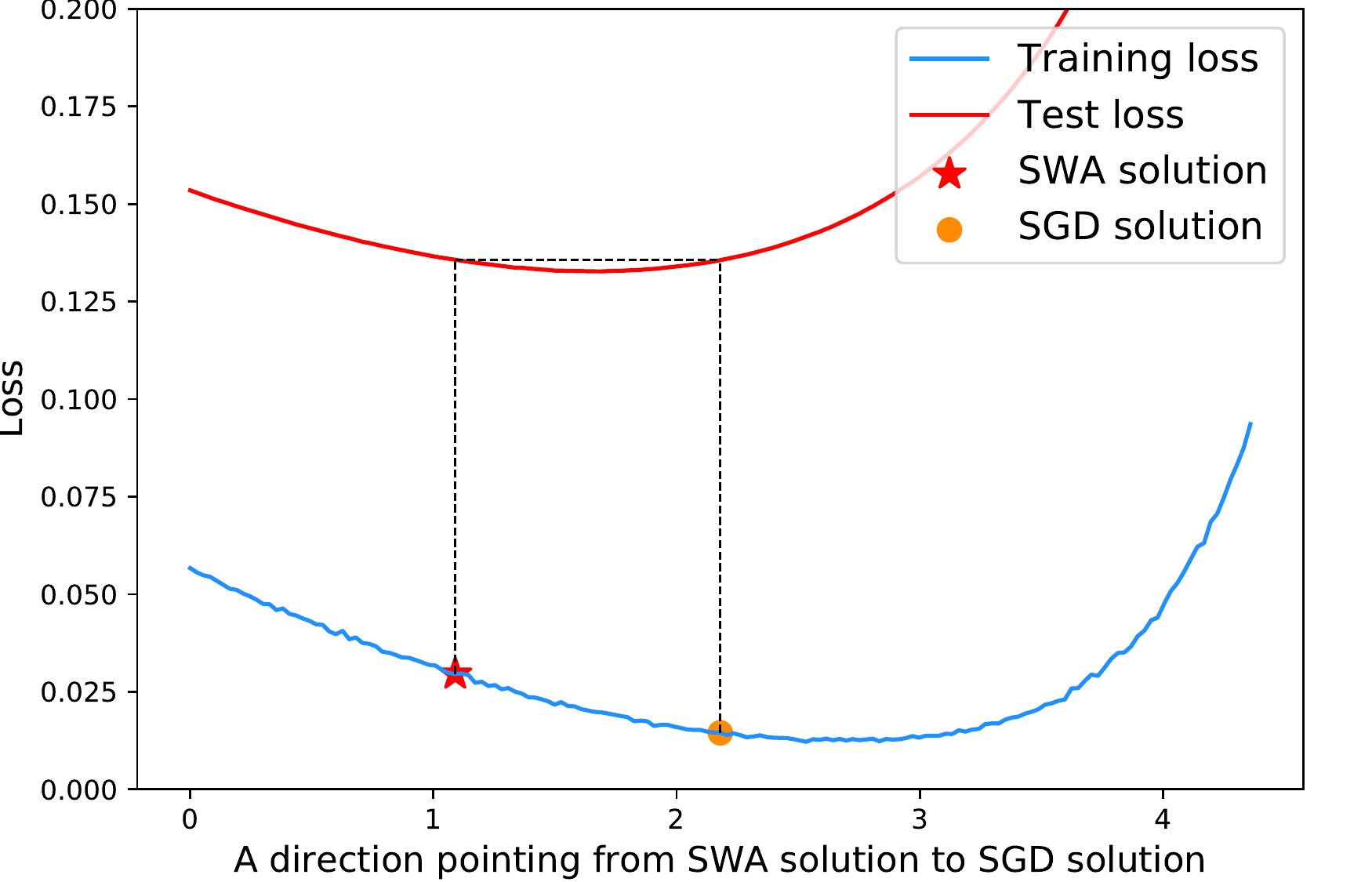}
		\caption{SWA and SGD interpolation (ResNet-164 on CIFAR-10)}
		\label{missing:inter_res164_C10} 
	\end{figure}
	
	Asymmetric valley of DenseNet-100 on CIFAR-10, 
	$(r,p,c, \zeta)=(6,7.35e-05,699,3)$. See Figure \ref{missing:inter_dense_C10}
	\begin{figure}[ht]
		\centering
		\includegraphics[width=.45\textwidth]{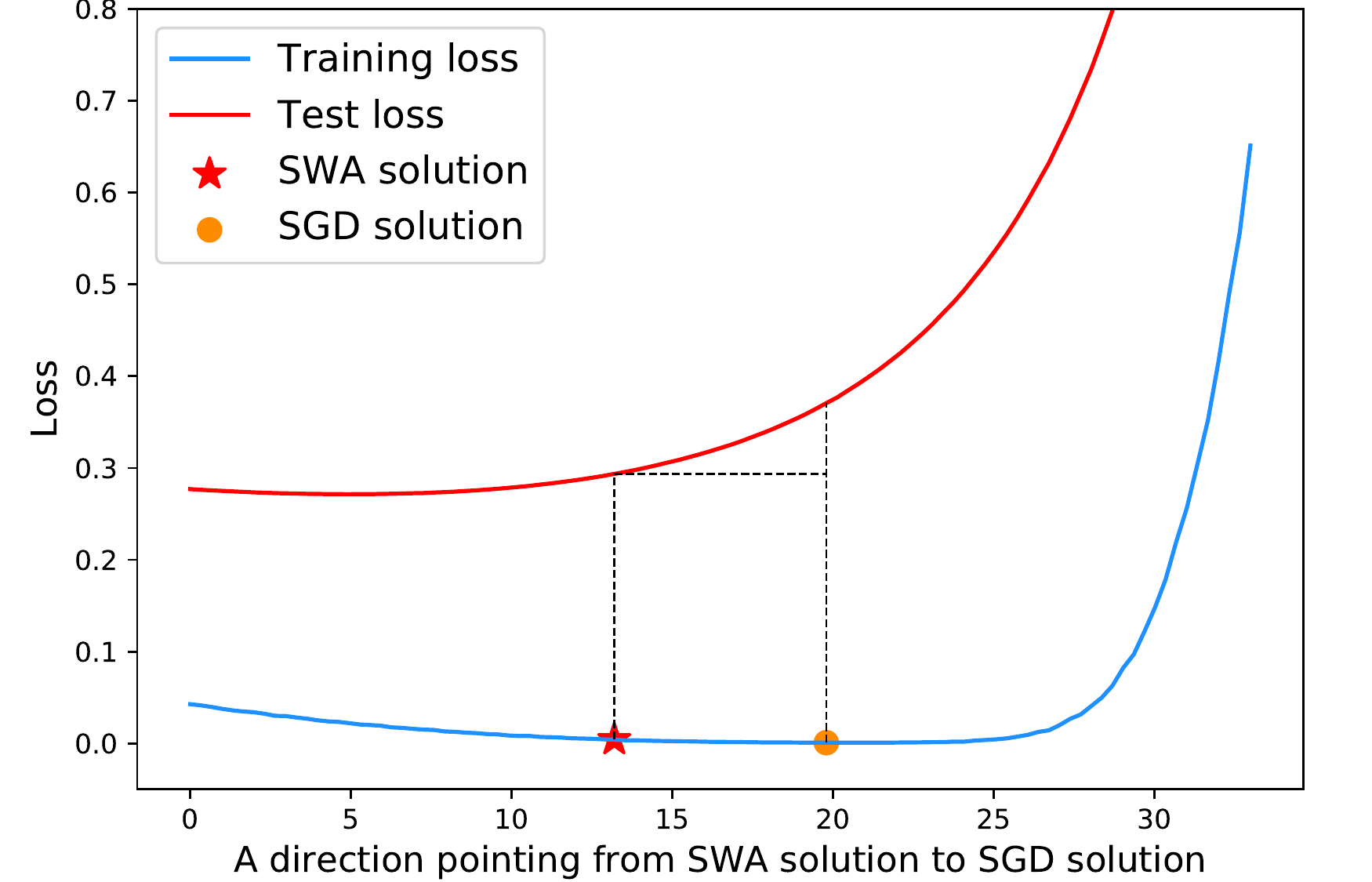}
		\caption{SWA and SGD interpolation (DenseNet-100 on CIFAR-10)}
		\label{missing:inter_dense_C10} 
	\end{figure}
	
	\section{Additional Figures in Section \ref{subsec:illusion_swa}: SGD Averaging Generates Good Bias}
	\label{appendix:verify_high_dimension}
	Examples for asymmetric directions of ResNet-110 on CIFAR-100 in Figure \ref{fig:high_dimension_find_110_1_C100_appendix}.
	\begin{figure}[htbp]
		\centering
		\begin{minipage}[t]{0.3\textwidth}
			\centering
			\includegraphics[width=5cm]{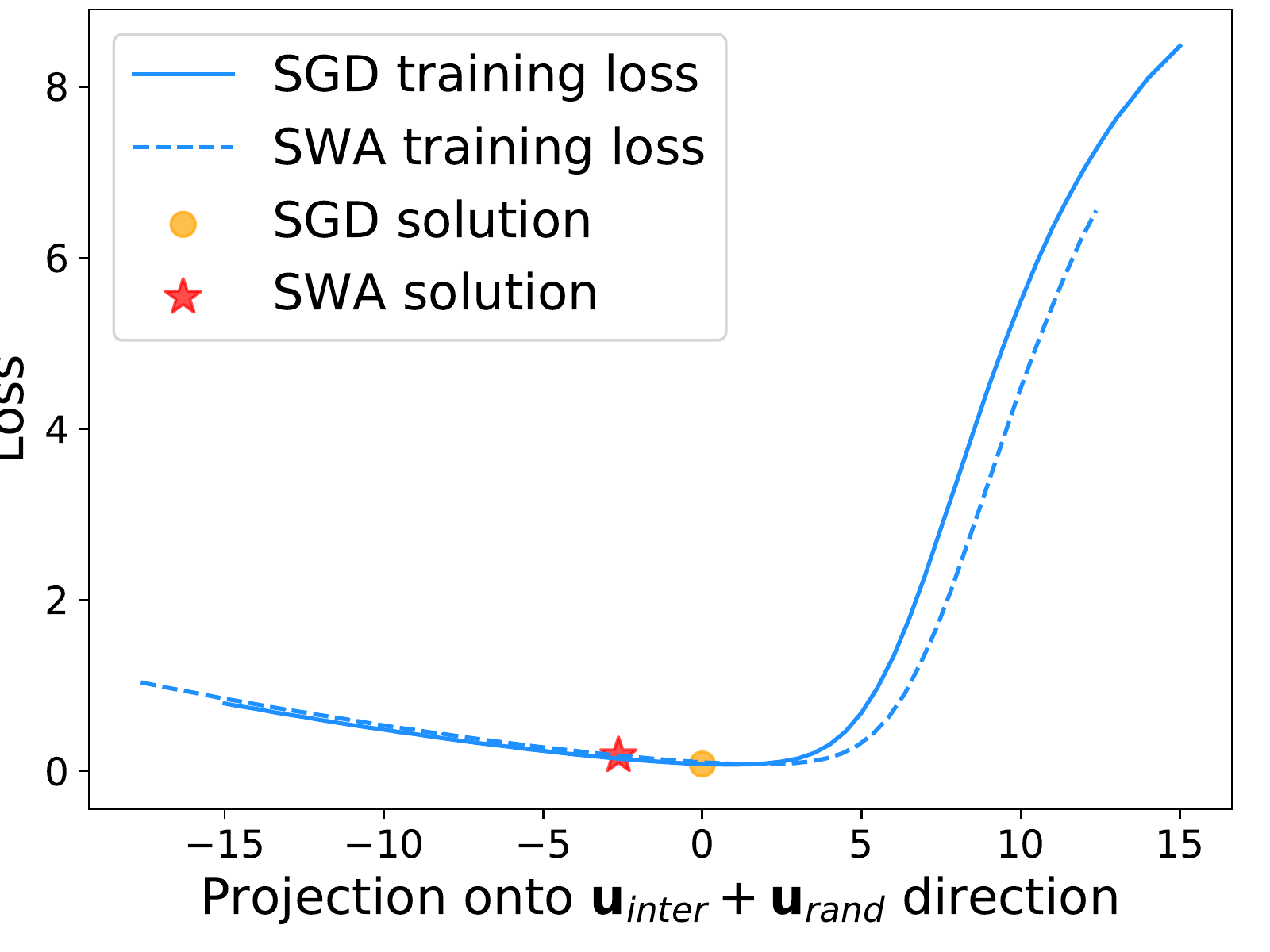}
			
		\end{minipage}\hspace{0.1in}
		\begin{minipage}[t]{0.3\textwidth}
			\centering
			\includegraphics[width=5cm]{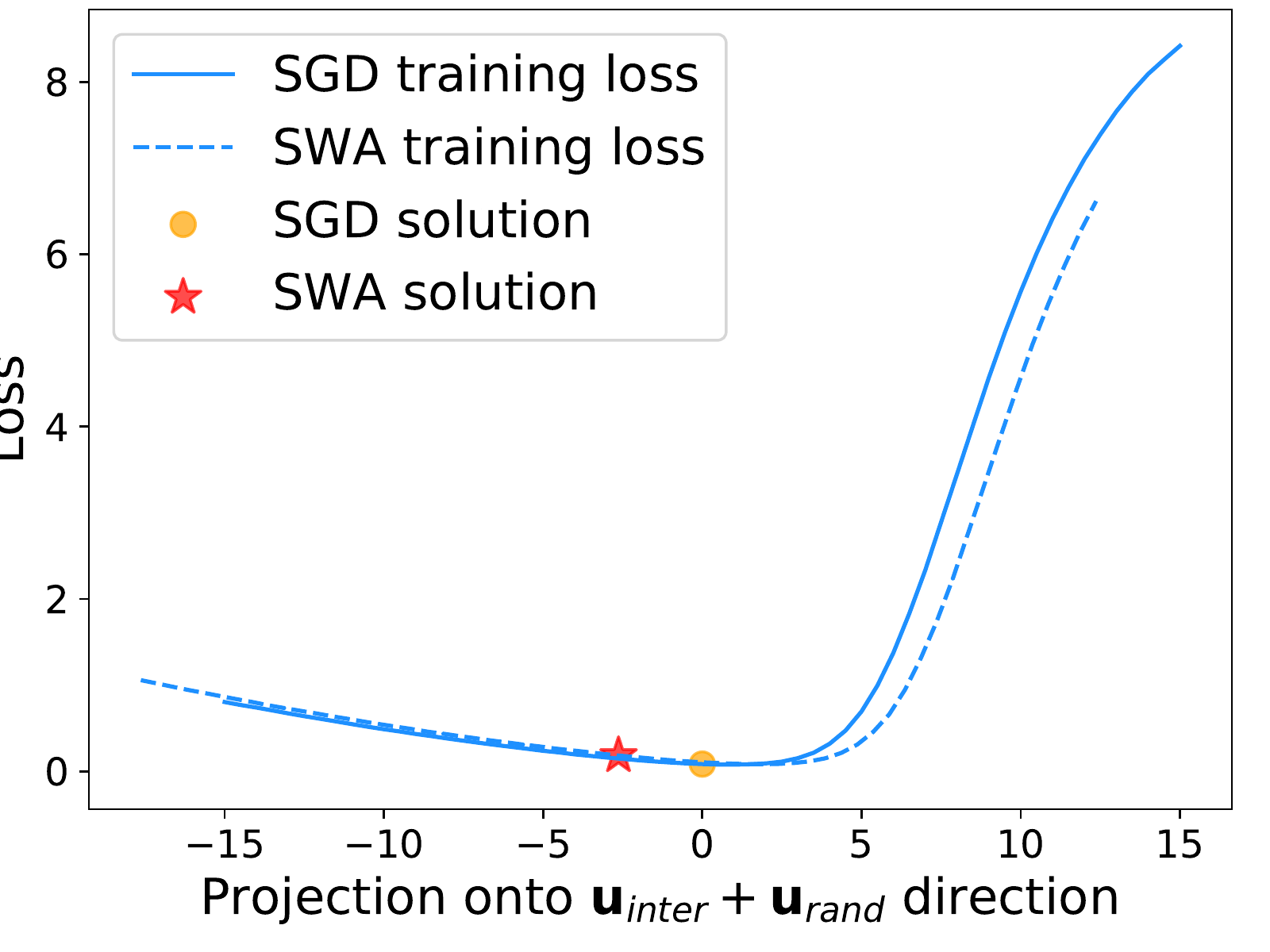}
			
		\end{minipage}\hspace{0.1in}
		\begin{minipage}[t]{0.3\textwidth}
			\centering
			\includegraphics[width=5cm]{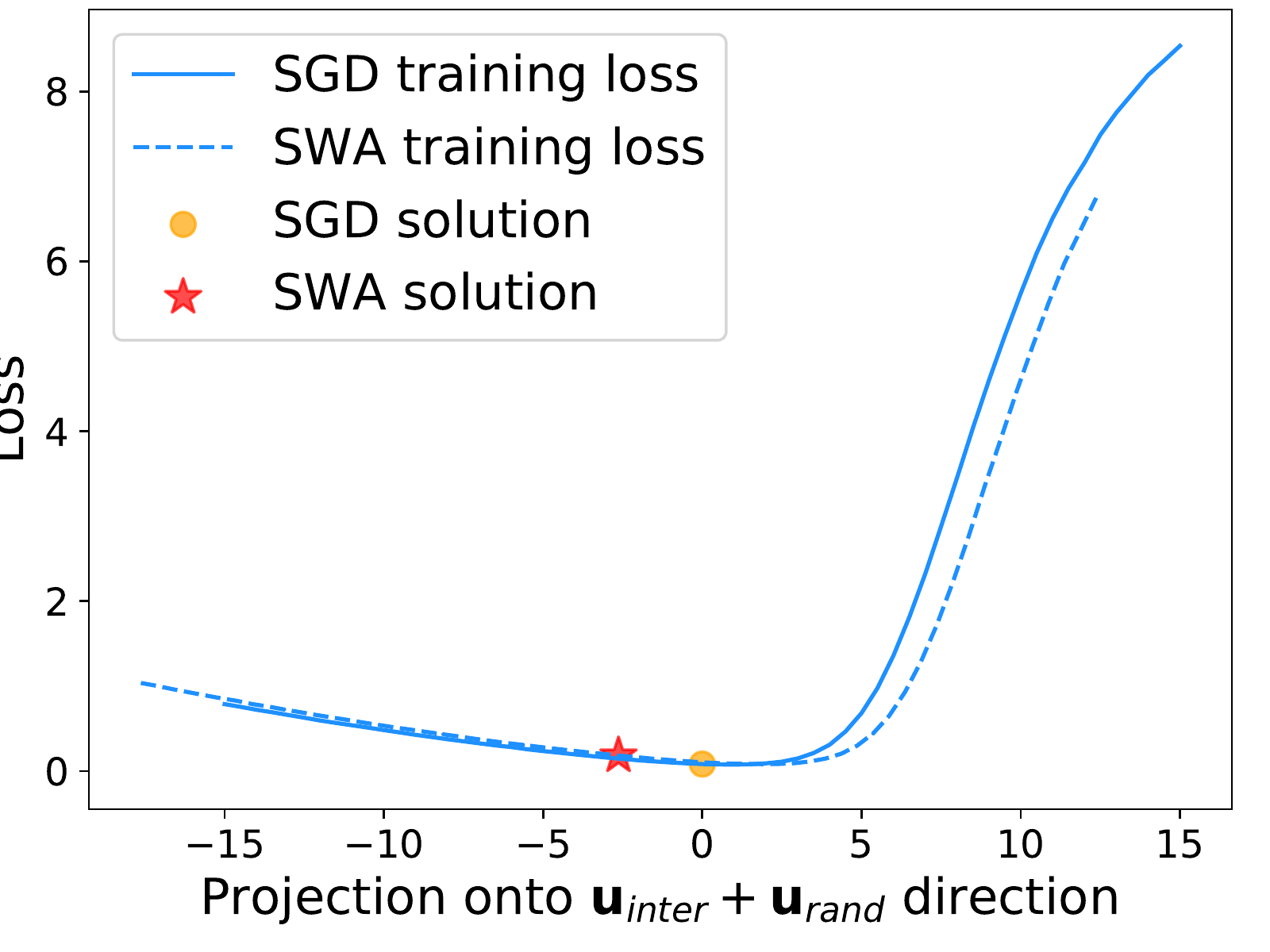}
			
		\end{minipage}
		\caption{The average of SGD has a bias on flat side (ResNet-110 on CIFAR-100).}
		\label{fig:high_dimension_find_110_1_C100_appendix}
		
	\end{figure}
	
	Examples for asymmetric directions of ResNet-164 on CIFAR-100 in Figure \ref{fig:high_dimension_find_164_1_C100},
	
	\begin{figure}[htbp]
		\centering
		\begin{minipage}[t]{0.3\textwidth}
			\centering
			\includegraphics[width=5cm]{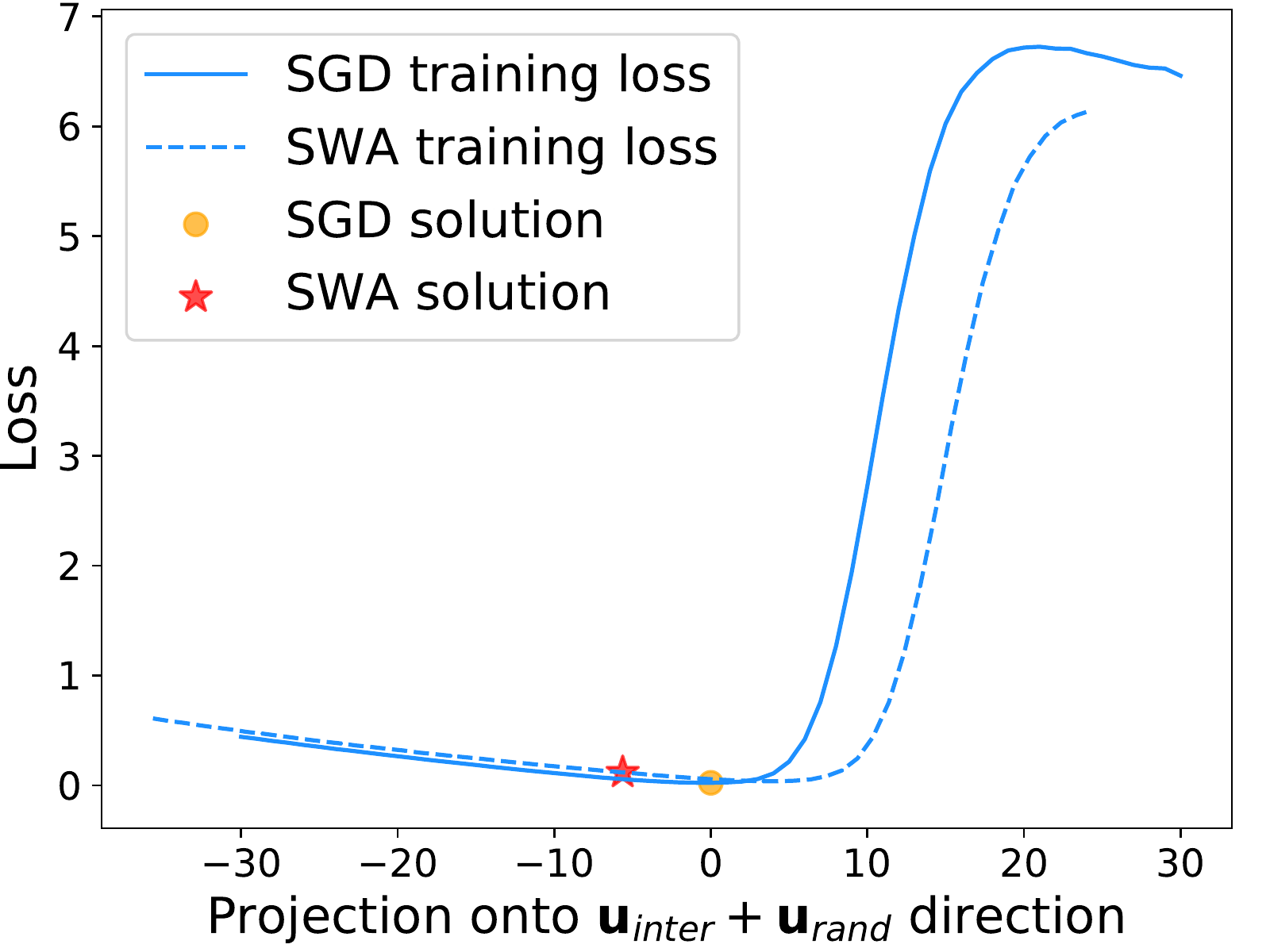}
		\end{minipage}\hspace{0.1in}
		\begin{minipage}[t]{0.3\textwidth}
			\centering
			\includegraphics[width=5cm]{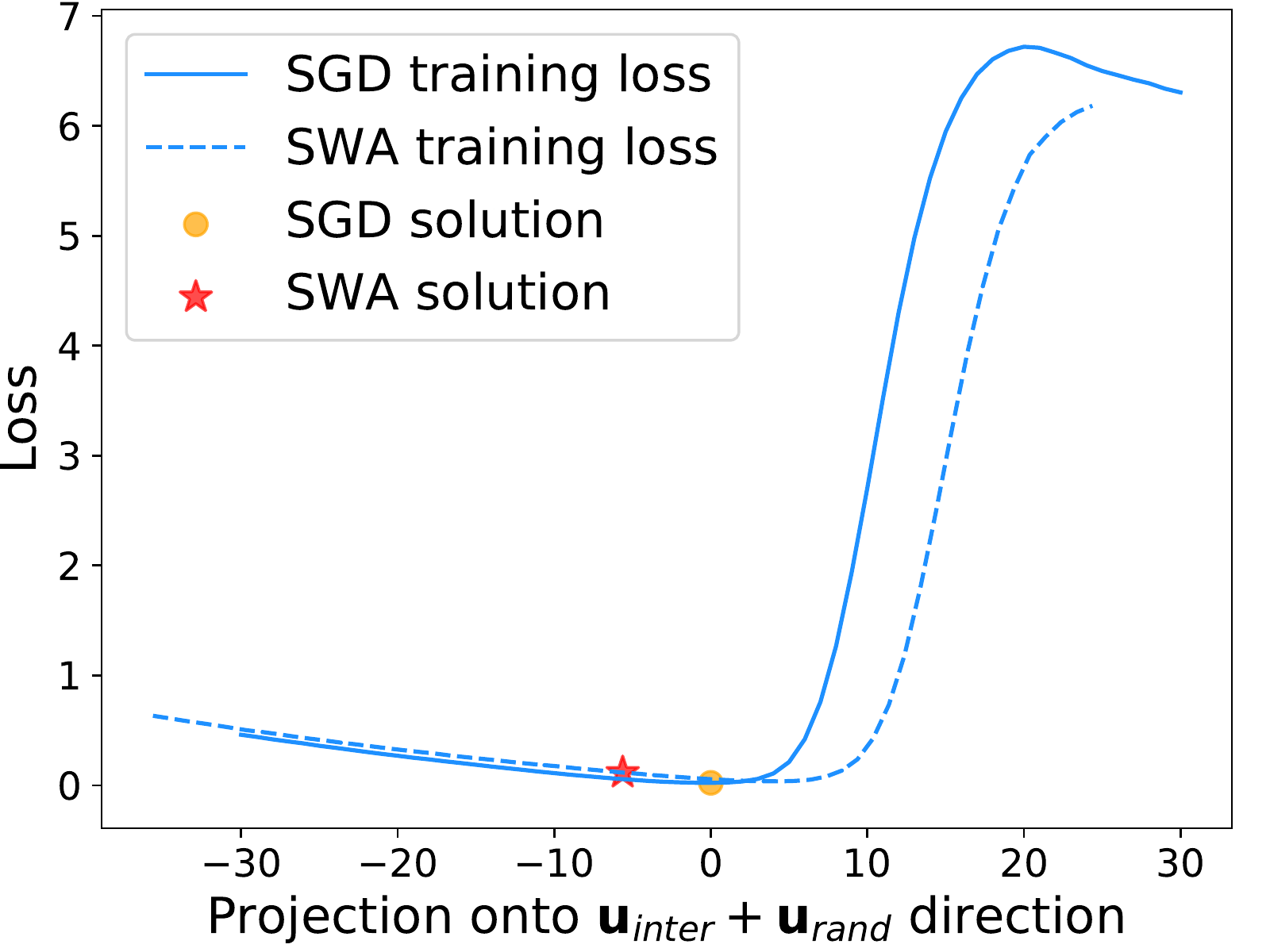}
		\end{minipage}\hspace{0.1in}
		\begin{minipage}[t]{0.3\textwidth}
			\centering
			\includegraphics[width=5cm]{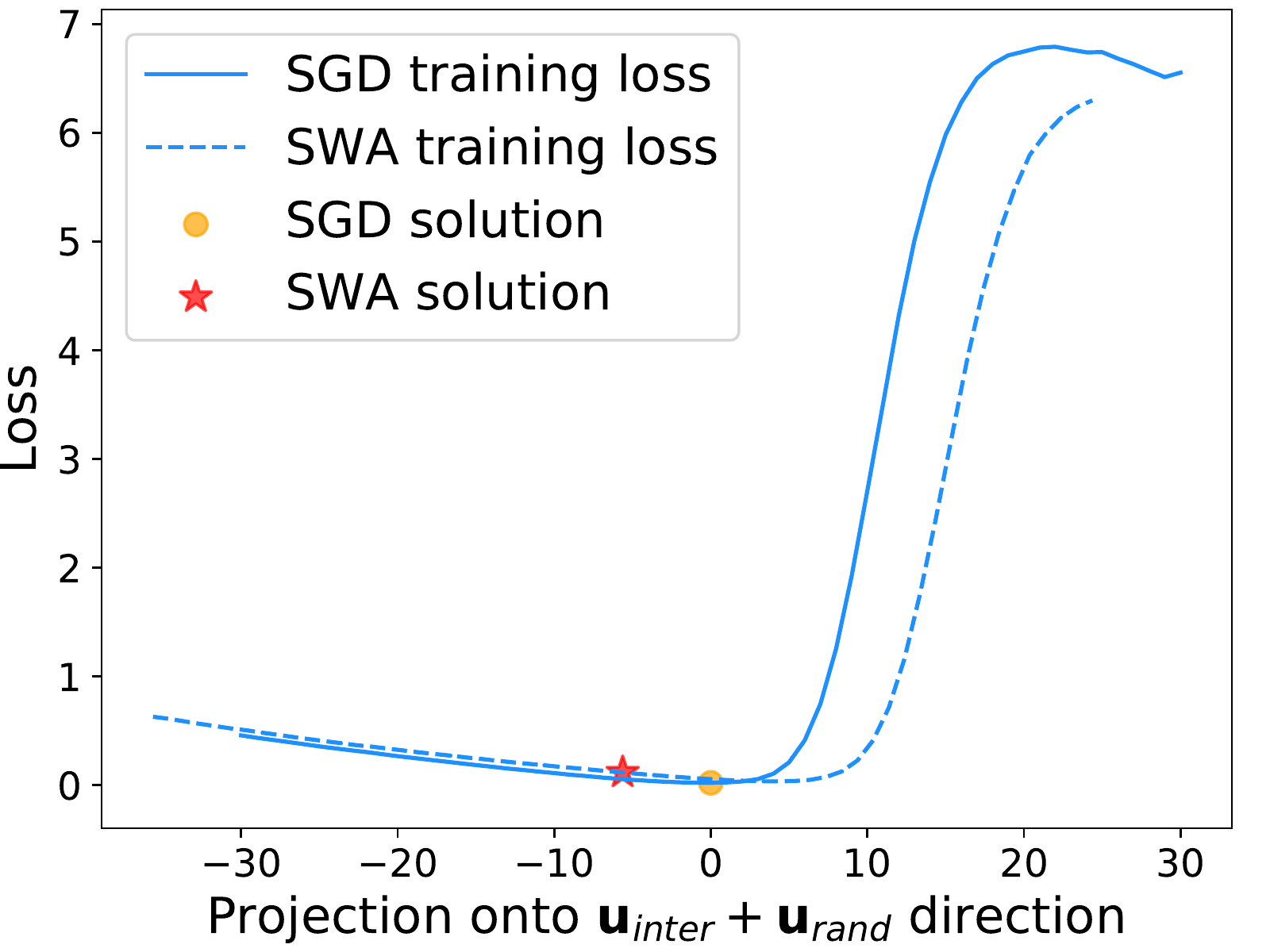}
		\end{minipage}
		\caption{The average of SGD has a bias on flat side (ResNet-164 on CIFAR-100).}
		\label{fig:high_dimension_find_164_1_C100}
	\end{figure}

	Examples for asymmetric directions of ResNet-110 on CIFAR-10 in Figure \ref{fig:high_dimension_find_110_1_C10}.
	
	\begin{figure}[htbp]
		\centering
		\begin{minipage}[t]{0.3\textwidth}
			\centering
			\includegraphics[width=5cm]{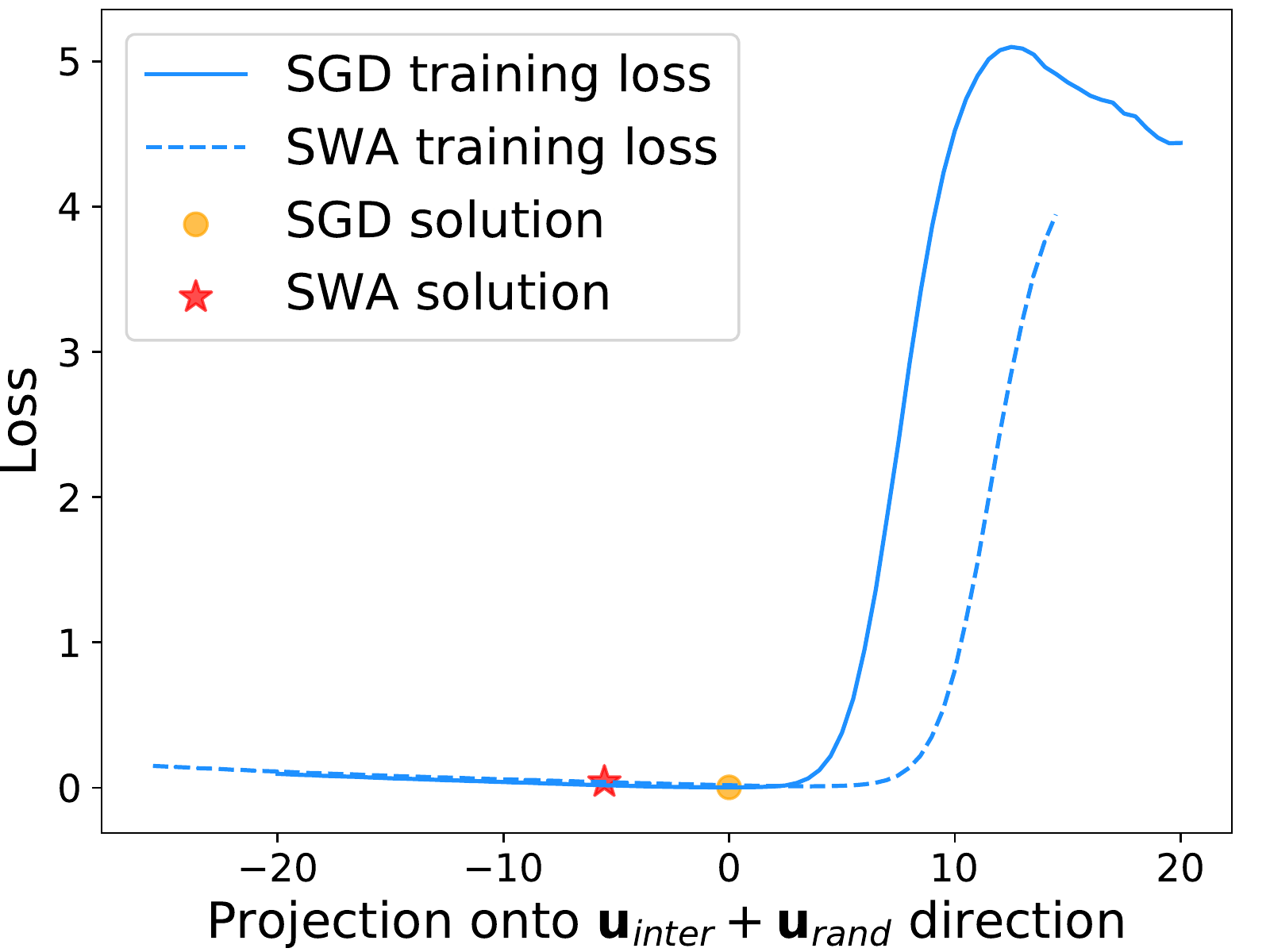}
		\end{minipage}\hspace{0.1in}
		\begin{minipage}[t]{0.3\textwidth}
			\centering
			\includegraphics[width=5cm]{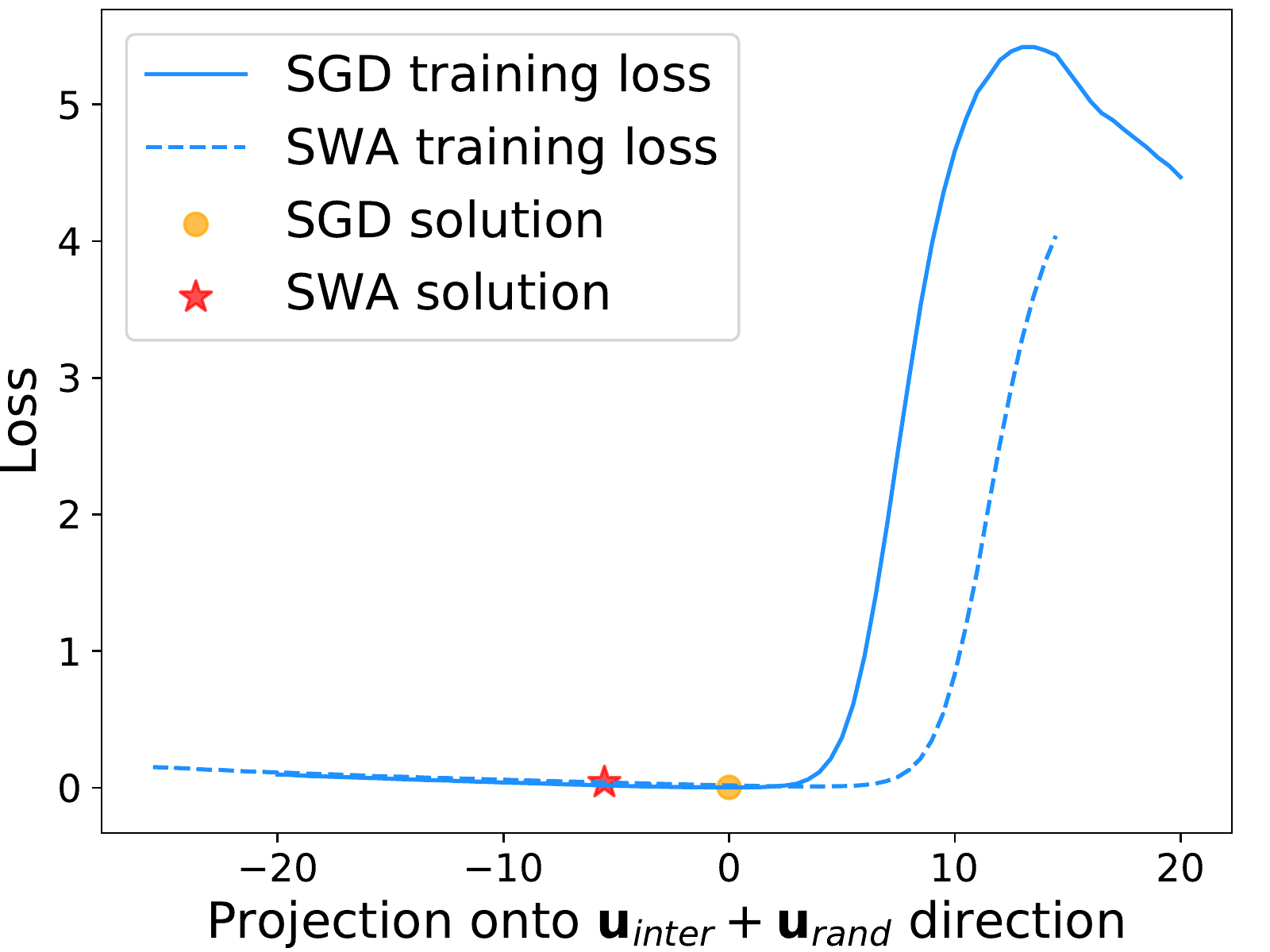}
		\end{minipage}\hspace{0.1in}
		\begin{minipage}[t]{0.3\textwidth}
			\centering
			\includegraphics[width=5cm]{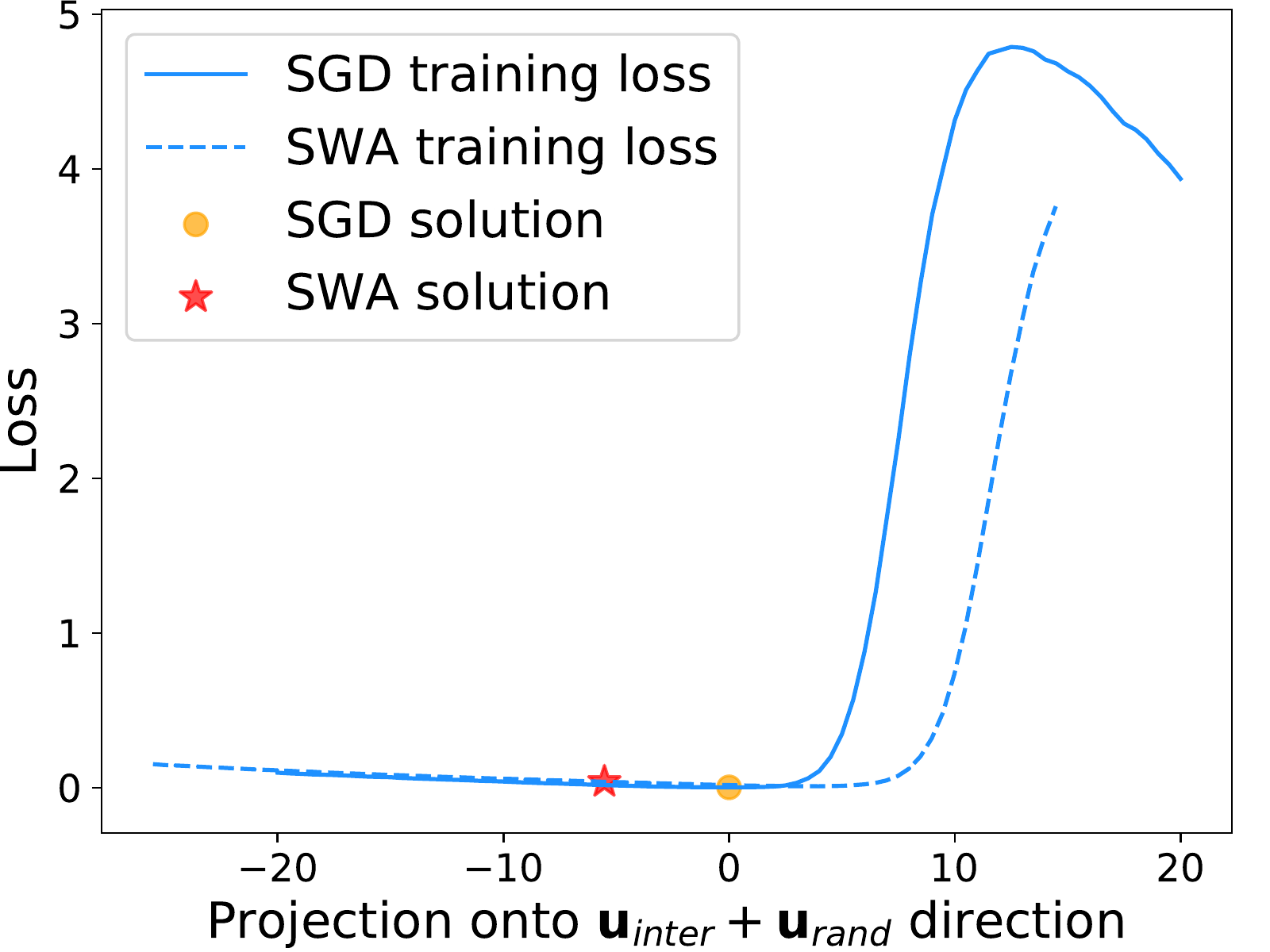}
		\end{minipage}
		\caption{The average of SGD has a bias on flat side (ResNet-110 on CIFAR-10).}
		\label{fig:high_dimension_find_110_1_C10}
	\end{figure}
	
	\section{Additional Figure for Section \ref{subsec:random_ray}: Width of Minima}
	\label{sec:random_ray}
	See Figure \ref{fig:random_ray_SGDandSWA}, similar results  were observed by \cite{swa} as well.
	\begin{figure}[htbp]
		\centering
		\includegraphics[width=.4\textwidth]{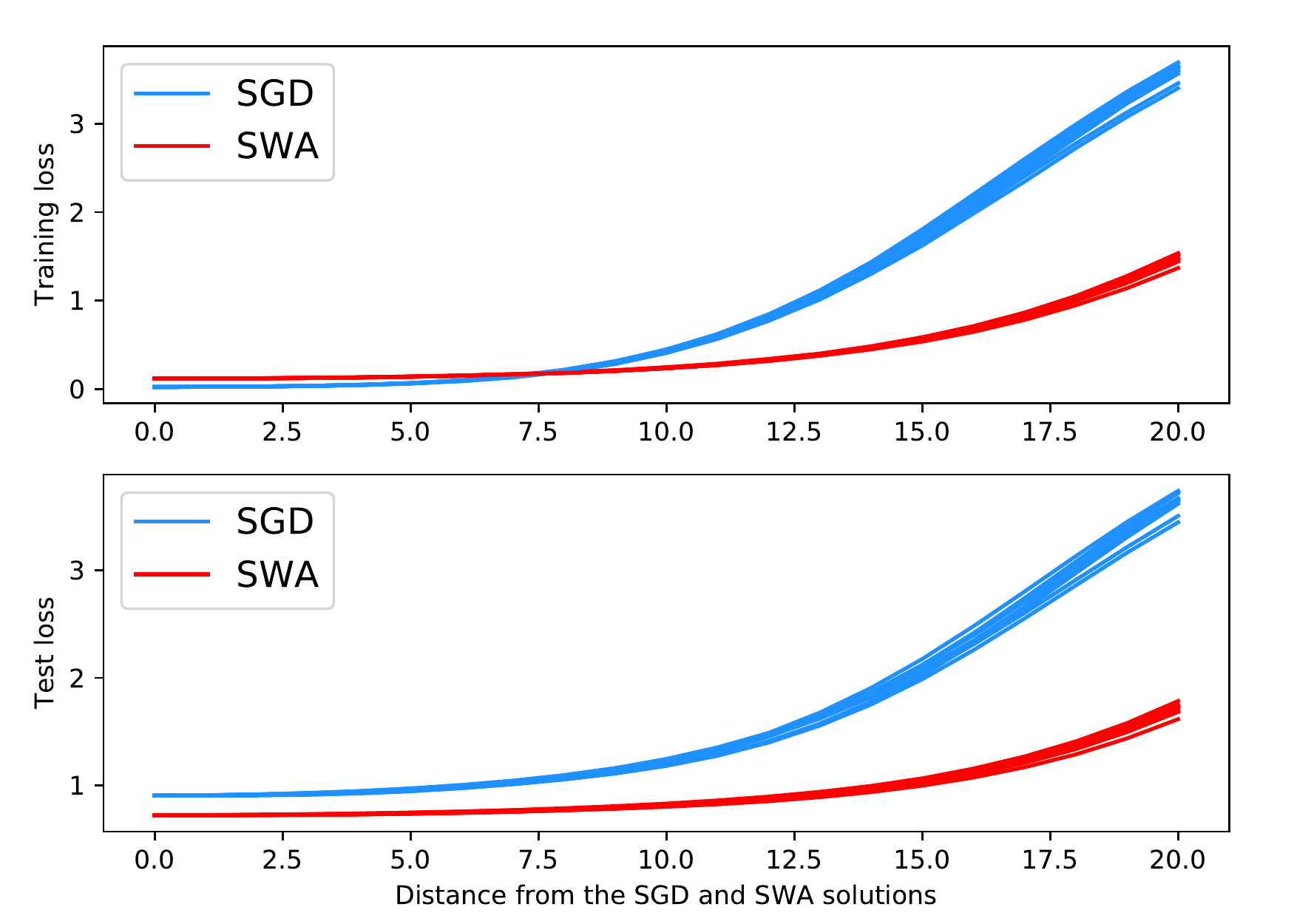}
		\caption{Random ray of SGD and SWA solution} 
		\label{fig:random_ray_SGDandSWA} 
	\end{figure}
	
	\section{Additional Figures for Section \ref{sec:bn}: BN Parameters Are More Asymmetric}
	\label{sec:exploration_on_bn_drection}
	
	See Figure \ref{fig:conv_bn_comp_res164_c10}, Figure \ref{fig:conv_bn_comp_res110_c100} and Figure \ref{fig:conv_bn_comp_dense_c100}.
	\begin{figure}[htbp]
		\centering
		\includegraphics[width=.4\textwidth]{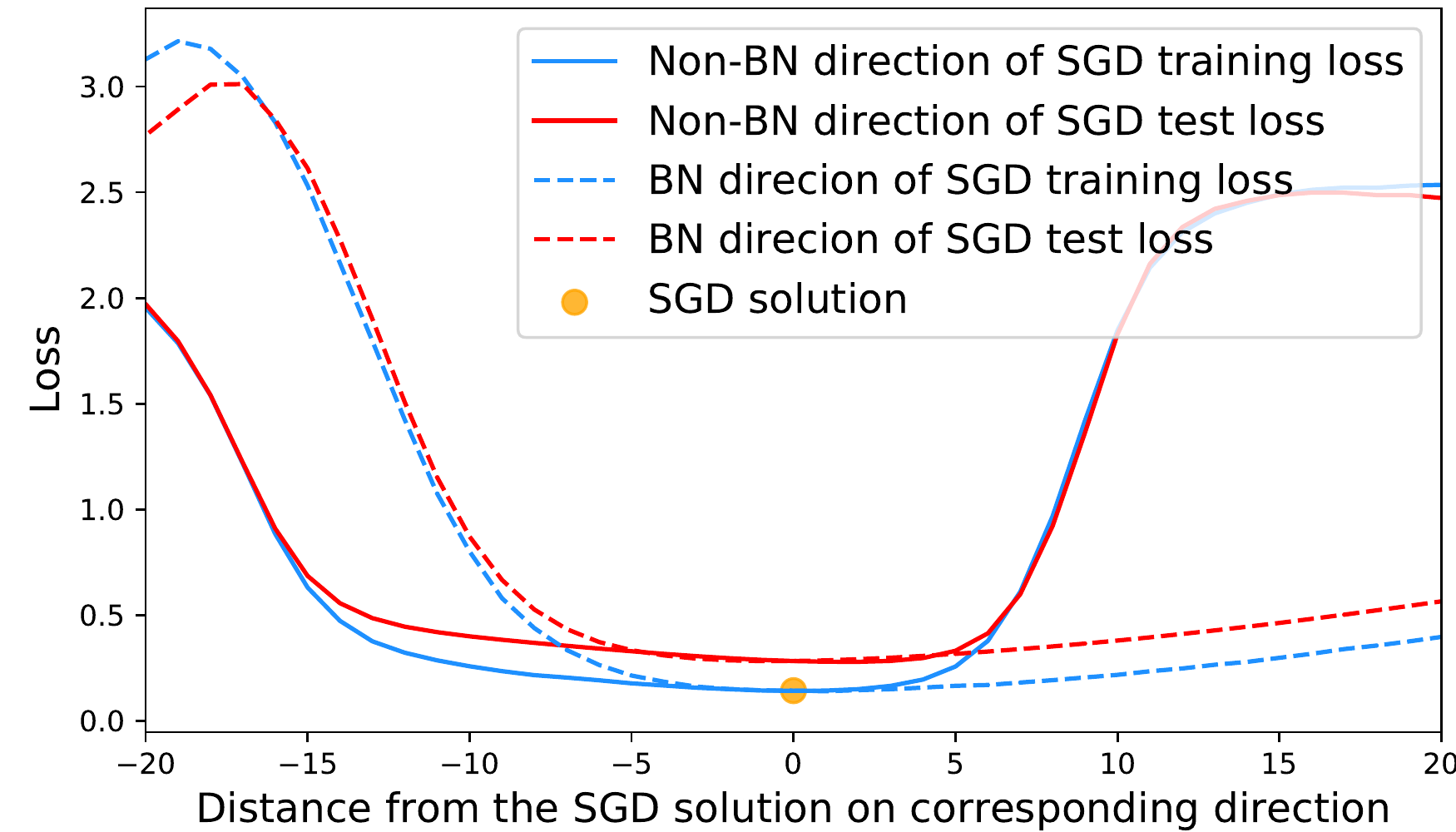}
		\caption{BN and Non-BN directions comparison of ResNet-164 on CIFAR-10}
		\label{fig:conv_bn_comp_res164_c10}
	\end{figure}
	
	\begin{figure}[htbp]
		\centering
		\includegraphics[width=.4\textwidth]{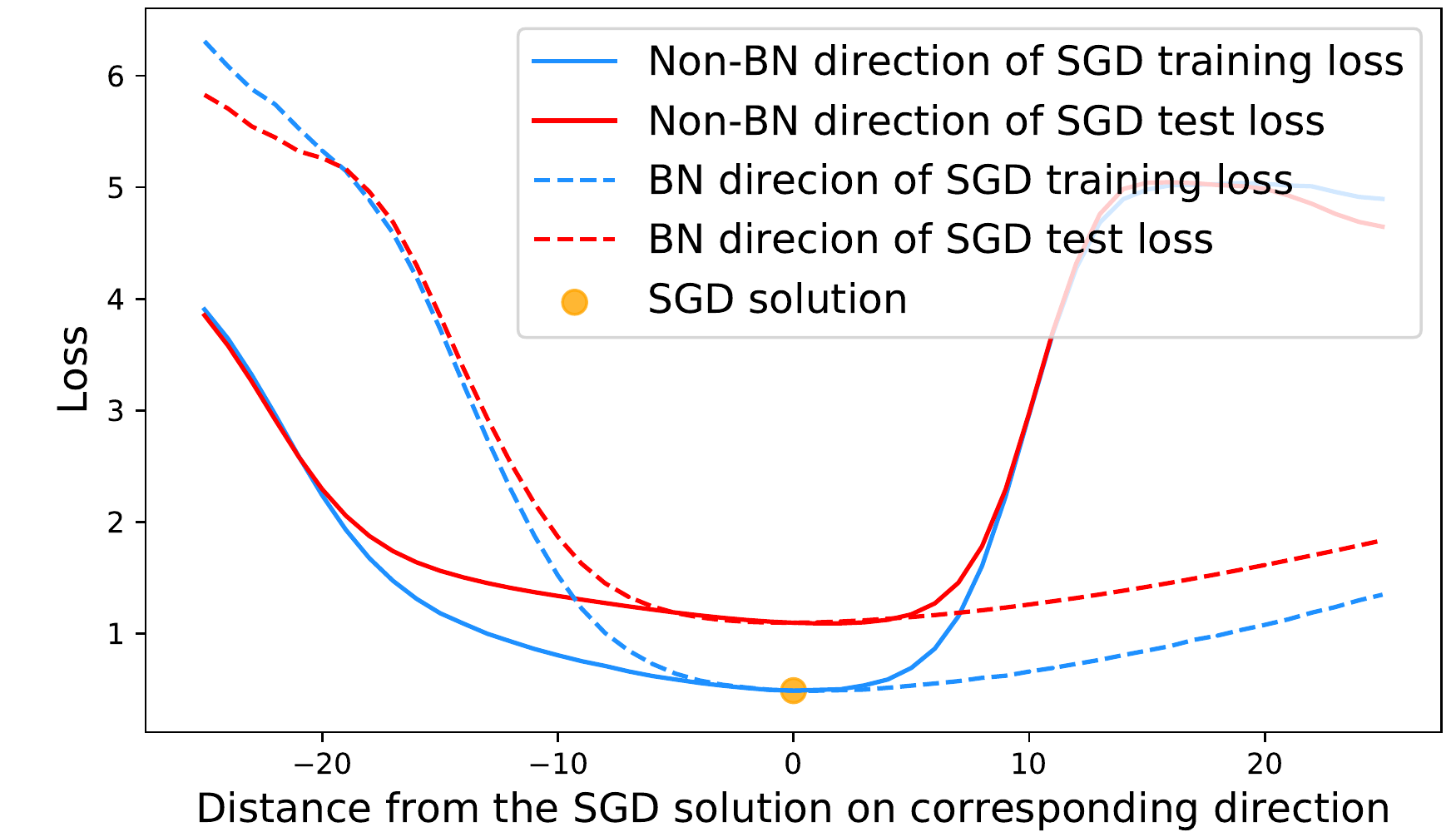}
		\caption{BN and Non-BN directions comparison of ResNet-110 on CIFAR-100}
		\label{fig:conv_bn_comp_res110_c100}
	\end{figure}
	
	\begin{figure}[H]
		\centering
		\includegraphics[width=.4\textwidth]{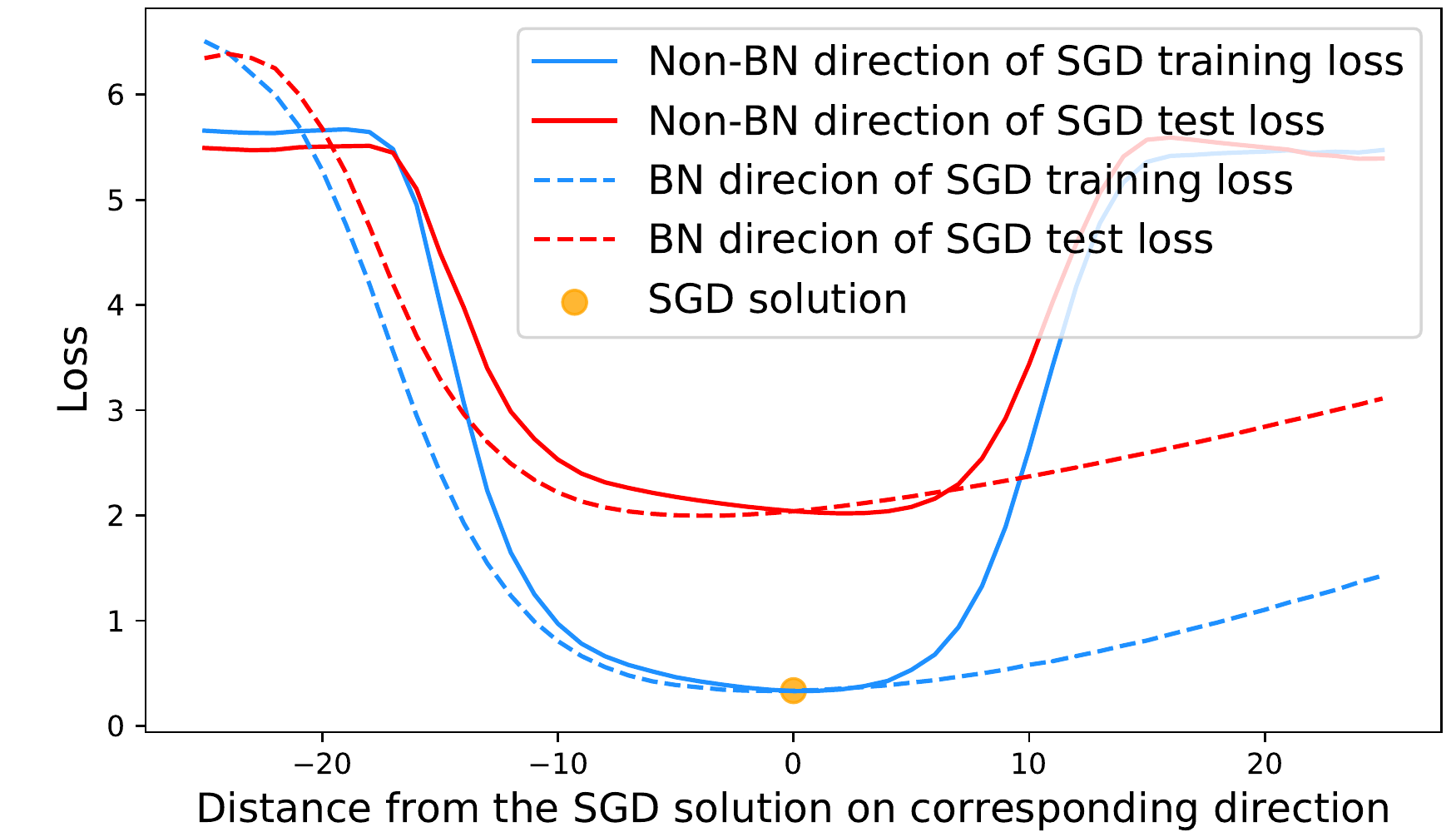}
		\caption{BN and Non-BN directions comparison of DenseNet-100 on CIFAR-100}
		\label{fig:conv_bn_comp_dense_c100}
	\end{figure}

\end{document}